\renewcommand{\cal}{\mathcal}
\newcommand{\cC}{{\cal C}}
\newcommand{\cF}{{\cal F}}
\newcommand{\cY}{{\cal Y}}
\newcommand{\cD}{{\cal D}}
\newcommand{\cM}{{\cal M}}
\newcommand{\cN}{{\cal N}}
\newcommand{\cP}{{\cal P}}
\newcommand\cW{{\mathcal W}}
\newcommand{\cX}{{\mathcal X}}
\newcommand{\R}{\mathbb{R}}	%
\newcommand{\bE}{\mathbb{E}}
\newcommand{\bP}{\mathbb{P}}
\newcommand{\Prob}{\mathbb{P}}
\newcommand{\bR}{{\mathbb R}}
\newtheorem{theorem}{Theorem}[section]
\newtheorem{lemma}{Lemma}[section]
\newtheorem{assumption}{Assumption}[section]
\newtheorem{definition}{Definition}[section]
\DeclareMathOperator*{\E}{\mathbb{E}}
\DeclareMathOperator{\sgn}{sgn}
\DeclareMathOperator{\argmax}{argmax}
\DeclareMathOperator{\argmin}{argmin}
\def\tx{\tilde{x}}
\def\ty{\tilde{y}}
\def\tz{\tilde{z}}
\def\Lnmix{L^{\text{mix}}_n}
\def\Ln{L_n^{std}}
\def\Lnmixapp{\tilde{L}^{\text{mix}}_n}
\def\tD{\tilde{\cD}}
\def \ECE{\text{ECE}}
\title{When and How Mixup Improves Calibration }
\begin{document}










\twocolumn[
\icmltitle{When and How Mixup Improves Calibration}



\icmlsetsymbol{equal}{*}

\begin{icmlauthorlist}
\icmlauthor{Linjun Zhang}{equal,a}
\icmlauthor{Zhun Deng}{equal,b}
\icmlauthor{Kenji Kawaguchi}{c}
\icmlauthor{James Zou}{d}

\end{icmlauthorlist}

\icmlaffiliation{a}{Rutgers University}
\icmlaffiliation{b}{Harvard University}
\icmlaffiliation{c}{National University of Singapore}
\icmlaffiliation{d}{Stanford University}
\icmlcorrespondingauthor{Linjun Zhang}{linjun.zhang@rutgers.edu}
\icmlcorrespondingauthor{Zhun Deng}{zhundeng@g.harvard.edu}

\icmlkeywords{Machine Learning, ICML}

\vskip 0.3in
]



\printAffiliationsAndNotice{\icmlEqualContribution} 

\begin{abstract}
In many machine learning applications, it is important for the model to provide confidence scores that accurately capture its prediction uncertainty. Although modern learning methods have achieved great success in predictive accuracy, generating calibrated confidence scores remains a major challenge. Mixup, a popular yet simple data augmentation technique based on taking convex combinations of pairs of training examples, has been empirically found to significantly improve confidence calibration across diverse applications. However, when and how Mixup helps calibration is still a mystery. In this paper, we theoretically prove that Mixup improves calibration in  \textit{high-dimensional} settings by investigating natural statistical models. Interestingly, the calibration benefit of Mixup increases as the model capacity increases.  We support our theories with experiments on common architectures and datasets. In addition, we study how Mixup improves calibration in semi-supervised learning. While incorporating unlabeled data can sometimes make the model less calibrated, adding Mixup training mitigates this issue and provably improves calibration. Our analysis provides new insights and a framework to understand Mixup and calibration.
\end{abstract}

\section{Introduction}
Modern machine learning methods have dramatically improved the predictive accuracy in many learning tasks \citep{simonyan2014very,srivastava2015highway,he2016deep}. The deployment of AI-based systems in high risk fields such as medical diagnosis \citep{jiang2012calibrating} requires a predictive model to be trustworthy, which makes the topic of accurately quantifying the predictive uncertainty an increasingly important problem \citep{thulasidasan2019mixup}. However, as pointed out by \citet{guo2017calibration}, many popular modern architectures such as neural networks are very poorly calibrated. 
 A variety of methods have been proposed for quantifying predictive uncertainty including training multiple probabilistic models with ensembling or bootstrap \citep{osband2016deep} and re-calibration of probabilities on a validation set through temperature scaling \citep{platt1999probabilistic}, which usually involves much more complicated procedures and extra computation. Meanwhile, recent work \citep{thulasidasan2019mixup} has shown that  models trained with Mixup \citep{zhang2017mixup}, a simple data augmentation technique based on taking convex combinations of pairs of examples and their labels, are significantly better calibrated. However, when and how Mixup helps calibration is still not well-understood, especially from a theoretical perspective. 

As our \textbf{first contribution}, we demonstrate that the calibration improvement brought by Mixup is more significant in the \textit{high-dimensional} settings, i.e. the number of parameters is comparable to the training sample size. Figure~\ref{fig:1} shows a motivating experiment on CIFAR-10.
The Expected Calibration Error (ECE), 
which is a standard measure of how un-calibrated a model is, is smaller with Mixup augmentation compared to those without Mixup augmentation, especially when the model is wider or deeper. 
We provide a theoretical explanation for this phenomenon under several natural statistical models. In particular, our theory holds when the data distribution can be described by a Gaussian generative model, which is very flexible and includes many generative adversarial networks (GANs). In a Gaussian generative model, a function is used to map a Gaussian random variable to an input vector of some models   such as neural networks. Because the function used to map a Gaussian random variable is arbitrary and can be nonlinear, our theory is applicable to a very broad class of data distributions.

\begin{figure}[!t]
	\centering
	\begin{subfigure}[b]{0.45\columnwidth}
		\includegraphics[width=\textwidth, height=0.7\textwidth]{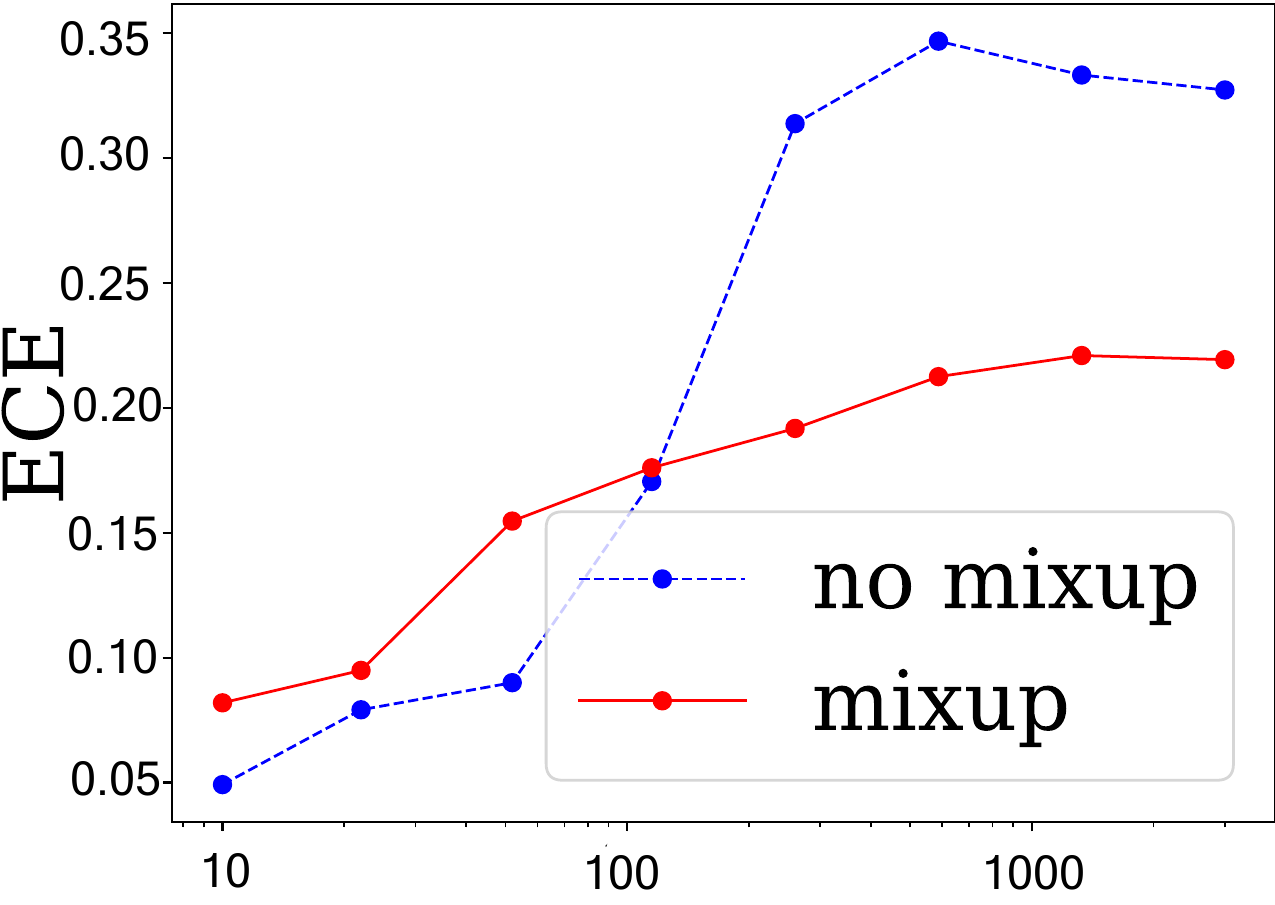}
		\caption{Network width}
	\end{subfigure}
\hskip 5mm
	\begin{subfigure}[b]{0.45\columnwidth}
		\includegraphics[width=\textwidth, height=0.7\textwidth]{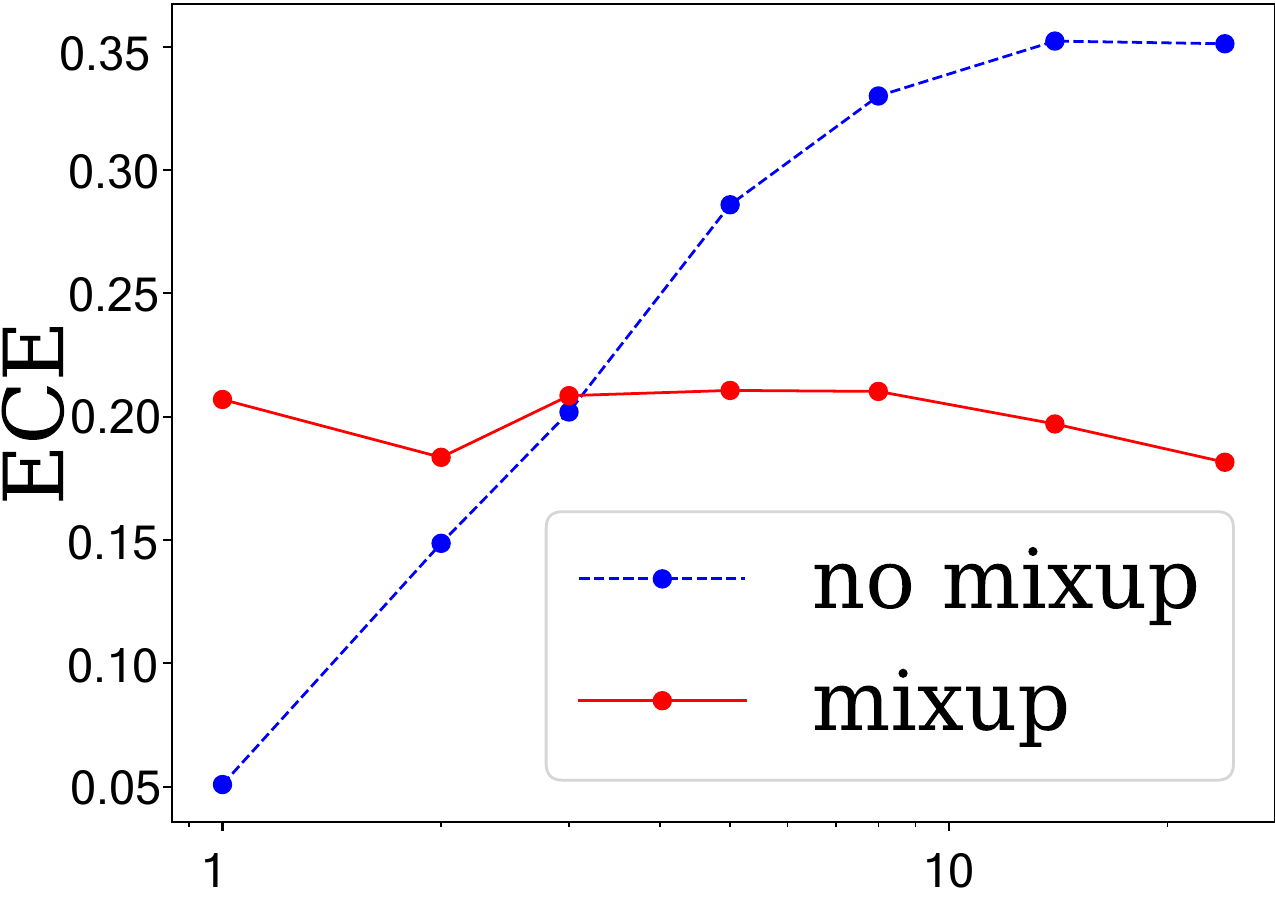}
		\caption{Network depth}
	\end{subfigure}
	\caption{Expected calibration error (ECE) calculated for a fully-connected neural network on CIFAR-10. In (a), we fix the depth and increase the width of the neural network; while in (b), we fix the width and increase the depth of the neural network. Mixup augmentation can reduce ECE especially for larger capacity models.} 
	\label{fig:1} 
\end{figure}

As our \textbf{second contribution}, we investigate how Mixup helps calibration in semi-supervised learning, which is relatively under-explored. Labeled data are usually expensive to obtain, and training models by combining a small amount of labeled data with abundant unlabeled data plays an important role in AI \citep{chapelle2009semi}. In light of this, we investigate the effect of Mixup in semi-supervised learning, where we focus on the commonly used pseudo-labeling algorithm \citep{chapelle2009semi,carmon2019unlabeled}. We observe experimentally that the pseudo-labeling by itself can sometimes  hurt calibration. However, combining Mixup with pseudo-labeling consistently improves calibration. We provide theories to explain these findings. 

As our \textbf{third contribution}, we further extend our results to Maximum Calibration Error (MCE), which also demonstrates similar phenomena as those for ECE.

\textbf{Outline of the paper.}  Section \ref{sec:1} discusses related works and introduces the notations. In Section \ref{sec:supervised learning}, we present our main theoretical results for ECE by showing that Mixup improves calibration for classification problems in the high-dimensional regime. Section \ref{sec:2} investigates the semi-supervised learning setting and demonstrates the benefit of further applying Mixup to the pseudo-labeling algorithm. In Section \ref{sec:MCE}, we extend our studies of calibration to MCE. Section \ref{sec:3} concludes with a discussion of future work. Proofs are deferred to the Appendix.




\subsection{Related Work} \label{sec:1.1}

Mixup is a popular data augmentation scheme that has been shown to improve a model's prediction accuracy \citep{zhang2017mixup,thulasidasan2019mixup,guo2019mixup}. Recent theoretical analysis shows that Mixup has an implicit regularization effect that enables models to better generalize  \citep{zhang2020does}. The focus of our work is not on accuracy, but on calibration. 

Modern learning models such as neural networks have achieved remarkable performance nowadays in optimization \citep{deng2020representation,ji2021unconstrained,deng2021adversarial,ji2021power,kawaguchi2022understanding}. Even though the generalization and prediction \citep{deng2020towards,zhang2020does,deng2021shrinking} of neural networks are quite amazing, it has shown that neural networks tend to be over-confident. A well-calibrated predictive model is needed in many applications of machine learning, ranging from economics \citep{foster1997calibrated}, personalized medicine \citep{jiang2012calibrating}, to weather forecasting \citep{gneiting2005weather}, to fraud detection \citep{bahnsen2014improving}. The problem on producing a well-calibrated model has received increasing attention in recent years \citep{naeini2015obtaining,lakshminarayanan2016simple,guo2017calibration,zhao2020individual,foster2004variable,kuleshov2018accurate,wen2020combining,huang2020tutorial}. In real-world settings, the input distributions are sometimes shifted from the training distribution due to non-stationarity. The predictive uncertainty under such out-of-distribution condition was studied by \citet{ovadia2019can} and  \citet{chan2020unlabelled}.
Mixup has been empirically shown to improve the calibration for deep neural networks in both the same and out-of-distribution domains \citep{thulasidasan2019mixup, tomani2020towards}. Ours is the first work to provide theoretical explanation for this phenomenon. 

Semi-supervised learning is a broad field in machine learning concerned with learning from both labeled and unlabeled datasets \citep{chapelle2009semi}. Prior work mostly focuses on improving the prediction accuracy and robustness with unlabeled data \citep{zhu2003semi,zhu2009introduction,berthelot2019mixmatch,deng2020interpreting,deng2021improving,deng2020towards} and adversarial robustness \citep{carmon2019unlabeled,deng2020improving}. Recently, \citet{chan2020unlabelled} found that unlabeled data improves Bayesian uncertainty calibration in some experiments, but the relationship between using unlabeled data and calibration, especially from the theoretical perspective, is still largely unknown. All of the facts above motivate our theoretical exploration in this paper.

\section{Preliminaries} \label{sec:1}
In this section, We introduce the notations and briefly recap the mathematical formulation of Mixup and calibration measures considered in this paper.
\subsection{Notations} \label{sec:1.2}
We denote the training data set by  $S=\{(x_1,y_1),\cdots,(x_n,y_n)\},$ where $x_i\in\cX\subseteq\bR^d$ and $y_i\in\cY\subseteq\bR^m$ are drawn i.i.d. from a joint distribution $\cP_{x,y}$.  The general parameterized loss is denoted by $l(\theta,z)$, where $\theta\in\Theta\subseteq \bR^p$ and $z_i=(x_i,y_i)$ denotes the input and output pair. Let $L(\theta)=\bE_{z\sim\cP_{x,y}}l(\theta,z)$ denote the standard population loss and $\Ln(\theta,S)=\sum_{i=1}^nl(\theta,z_i)/n$ denote the standard empirical loss. In addition, we define $\tx_{i,j}(\lambda)=\lambda x_i+(1-\lambda)x_j$, $\ty_{i,j}(\lambda)=\lambda y_i+(1-\lambda)y_j$, and $\tz_{i,j}(\lambda)=(\tx_{i,j}(\lambda),\ty_{i,j}(\lambda))$ for $\lambda\in[0,1]$. We  use $t \cD_1+(1-t)\cD_2$ for $t\in(0,1)$ to denote the mixture distribution  such that a sample coming from that distribution is drawn with probabilities $t$ and $(1-t)$ from $\cD_1$ and $\cD_2$ respectively. In classification, the output $y_i$ is the embedding of the class of $x_i$; i.e., $y_i \in\{0, 1\}^m$ is the one-hot encoding of the class (with all entries equal to zero except for the one corresponding to the class of $x_i$), where $m$ is the total number of classes.

\subsection{Mixup}
Mixup is a data augmentation technique, which linearly interpolates the training sample pairs within the training data set to create a new data set $S^{mix}(\lambda)=\{(\tilde{z}_{i,j}(\lambda))\}_{i,j=1}^n$, with $\lambda$ following a distribution $\cD_\lambda$ supported on $[0,1]$. Throughout the paper, we consider the most commonly used $\cD_\lambda$ --- the Beta distribution $Beta(\alpha,\beta)$ for $\alpha,\beta>0$.

Typically, in a machine learning task, one wants to learn a function $f:\mathcal X\to\mathcal Y$ from a function class $\cF$ using the training data set $S\in(\cX\times\cY)^n$. Such a function is usually parametrized as $f_\theta$ with some parameter $\theta$. Let us denote the learned parameter by $\hat\theta=\mathcal M(S)$. In this paper, we consider learning the parameter by the Mixup training  $\mathcal M(S^{mix}(\lambda))$. Due to the randomness in $\lambda$, we consider taking the expectation over $\lambda$. 
For example, a mapping could either be an estimator, such as the empirical mean of input: $\cM(S)=\sum_{i=1}^nx_i/n$,  or be the minimizer of a loss function: $\cM(S,\theta)=\argmin_{\theta}\sum_{i=1}^nl(\theta,z_i)/n$. The corresponding transformed mappings obtained via Mixup are then $\bE_{\lambda\sim\cD_\lambda}\cM(S^{mix}(\lambda))=\sum_{i,j=1}^n\bE_{\lambda\sim\cD_\lambda}x_{i,j}(\lambda)/n^2$ and $\bE_{\lambda\sim\cD_\lambda}\cM(S^{mix}(\lambda),\theta)=\sum_{i,j=1}^n \bE_{\lambda\sim \cD_\lambda}l(\theta,\tz_{i,j}(\lambda))/n^2$ respectively. 


\subsection{Calibration for classification}
For a classification problem, if there are $K$ classes, typically, for an input $x$, a probability vector $\hat h(x)=(p_1(x),\cdots,p_K(x))^\top\in \bR^K$ is obtained from the trained model, where $p_i$ is the corresponding probability (or so-called confidence score) that $x$ belongs to the class $i$, and $\sum_{i=1}^K p_i=1$. Then, the output is $\hat{y}=\argmax_i p_i(x)$. The hope is that, for instance, given $1000$ samples, each with confidence $0.7$, around $700$ examples should be classified correctly. In other words, we expect for all $v\in[0,1]$, $\bP(\hat{y}=y|\hat{p}=v)\approx v,$
where $\hat{p}$ is the largest entry in $\hat h(x)$ and $y$ is the true class $x$ belongs to, which is termed as prediction confidence. 

\paragraph{Expected Calibration Error (ECE).} The most prevalent calibration metric is the Expected Calibration Error \citep{naeini2015obtaining}, which is defined as,
\begin{equation}\label{def:ECE}
ECE = \bE_{v\sim\cD_{\hat{p}}} \left[|\bP(\hat{y}=y|\hat{p}=v)- v|\right],   
\end{equation}
where 
$\cD_{\hat{p}}$ is the distribution of $\hat{p}$.
While ECE is widely used, we note  that recents works \citep{nixon2019measuring,kumar2019verified} found that some methods of estimating ECE in practice (such as the binning method) is sometimes undesirable and can produce biased estimator under some specially constructed data distributions. Throughout this paper, in our theories, \textit{we mainly focus on the population version of calibration error as defined in \eqref{def:ECE}, which does not suffer from any such bias.}

\paragraph{Maximum Calibration Error (MCE).} Another widely used calibration metric is the Maximum Calibration Error \citep{naeini2015obtaining}, which is defined as 
$$MCE =\max_{v\in[0,1]} |\bP(\hat{y}=y|\hat{p}=v)- v|.$$

{Again, in our theory, we will only consider this population version of MCE. A predictor $\hat p$ with ECE/MCE equal to $0$ is said to be perfectly calibrated.}

\section{Calibration in Supervised Learning}\label{sec:supervised learning}
Although Mixup has been shown to improve the test accuracy \citep{zhang2017mixup,guo2019mixup,zhang2020does}, there has been much less understanding of how it affects model calibration \footnote{Models with better test accuracy are not necessarily better calibrated.}. In this section, we focus on investigating when and how Mixup improves calibration. 
\subsection{Problem set-up}
As a confirmation of the phenomenon suggested in Figure \ref{fig:1} in the introduction, our theoretical results demonstrate that Mixup indeed improves calibration, and the improvement is especially significant in the \textbf{\textit{high-dimensional regime}}. Here, by high-dimensional regime, we mean when the number of parameters in the model, $p$, is comparable to the sample size $n$, 
i.e. $p/n>c$ for some constant $c>0$. In other words, the improvement in calibration by using Mixup is more significant in the over-parameterized case or when  the ratio between  $p$ and  $n$ is a constant asymptotically larger than $0$. Moreover, we also prove that Mixup helps calibration on out-of-domain data, which is critical for machine learning applications. 

In order to derive tractable analysis, we first study the concrete and natural Gaussian model. The Gaussian model is a popular setting for understanding phenomena happening in more complex models due to its tractability in theory and its ability to partially capture some essence of the phenomena. Indeed, the Gaussian model has been widely used in theoretical investigations of more complex machine learning models such as neural networks in adversarial learning \citep{schmidt2018adversarially,carmon2019unlabeled,dan2020sharp, deng2020improving}. 
\textbf{\textit{We further extend our analysis to the very flexible Gaussian generative models}} in Section \ref{subsec:ggm}. 


\paragraph{The Gaussian model.}  We consider a common model used for theoretical machine learning analysis: a mixture of two spherical Gaussians with one component per class \citep{carmon2019unlabeled}: 

\begin{definition}[Gaussian model] \label{model:class}
For $\theta^*\in\bR^p$ and $\sigma>0$, the $(\theta^*,\sigma)$-Gaussian model is defined as the following distribution over $(x,y)\in \bR^p\times\{1,-1\}$: 
$$ x\mid y\sim \cN(y\cdot\theta^*,\sigma^2 I), \text{ for } i=1,2,...,n, 
$$
and $y$ follows the Bernoulli distribution $\bP(y=1)=\bP(y=-1)=1/2$.
\end{definition}

For simplicity, we first consider the case where $\sigma$ is known, and the only unknown parameter is $\mu$. The case where $\sigma$ is unknown is a special example of the general Gaussian generative model that we will consider in Section~\ref{subsec:ggm}.

\paragraph{Algorithms.} In this section, we focus on studying the following linear classifier for the Gaussian classification. Specifically, the classifier follows the celebrated Fisher's rule  \citep{johnson2002applied}, or so-called linear discriminant analysis, which is also considered by \citet{carmon2019unlabeled} to study the adversarial robustness.  The classifier is constructed as
\begin{equation}\label{eq:C}
\hat{\mathcal{C}}(x)=\sgn(\hat\theta^\top x), 
\end{equation} 
 where $\hat\theta=\sum_{i=1}^n x_i y_i/n$. Given $\hat\theta$ and $x$, the output $y$ obtained via classifier $\hat{C}$ can be equivalently defined by the following process: we first obtain the confidence vector $h(x)=(p_{1}(x),p_{-1}(x))^\top,$ and then output
 $y=\hat{\mathcal{C}}(x)=\argmax_{k\in \{-1,1\}} p_k(x).$
Here, for $k\in \{-1,1\}$, the confidence score $p_k(x)$ represents an estimator of $\bP(y=k|x)$ and therefore takes the following form:
\begin{equation}\label{eq:score}
p_k(x)=\frac{1}{e^{-2k\cdot\hat{\theta}^\top x_i/\sigma^2}+1}.
\end{equation}

In comparison, by applying Mixup to the above algorithm, we first obtain $\{\tilde x_{i,j}(\lambda),\tilde y_{i,j}(\lambda)\}_{i,j=1}^n$, which leads to another classifier
\begin{equation}\label{eq:Cmix}
\hat{\mathcal{C}}^{mix}(x)=\sgn(\hat{\theta}^{mix\top} x),  
\end{equation}
where $\hat{\theta}^{mix}=\bE_{\lambda\sim\cD_\lambda}\sum_{i,j=1}^n \tilde x_{i,j}(\lambda) \tilde{y}_{i,j}(\lambda)/n^2$. Here, given the randomness of $\lambda$, we take expectation with respect to $\lambda$ in the same way as in the previous study \citep{zhang2017mixup}, though this is unnecessary in our theoretical analysis. The confidence score obtained by $\hat{\mathcal{C}}^{mix}$ can be obtained similarly to that in Eq.~\eqref{eq:score}  with $\hat{\theta}$ being replaced by $\hat{\theta}^{mix}$.


\subsection{Mixup helps calibration in classification} \label{subsec:mixuphelps}
We follow the convention in high-dimensional statistics, where the parameter dimension $p$ grows along with the sample size $n$, and state our theorem in the large $n,p$ regime where both $n$ and $p$ goes to infinity.  

Throughout the paper, we use the term ``with high probability" to indicate that the event happens with probability at least $1-o(1)$, where $o(1)\rightarrow 0$ as $n\rightarrow \infty$ and the randomness is taken over the training data set. In the following, we show that the condition $p/n=\Omega(1)$ is necessary and the fact that Mixup improves calibration is a high-dimensional phenomenon.  

Let us denote the ECE calculated with respect to $\hat{\mathcal{C}}$ and $\hat{\mathcal{C}}^{mix}$ by $\ECE(\hat{\mathcal{C}})$ and  $\ECE(\hat{\mathcal{C}}^{mix})$ respectively. Our first theorem states that Mixup indeed improves calibration for the above algorithm under the Gaussian model. 

\begin{theorem}\label{thm:helpcalibration}
Under the settings described above, there exists $c_2>c_1>0$, when $p/n\in(c_1,c_2)$ and $\|\theta\|_2<C$ for some universal constants $ C>0$ (not depending on $n$ and $p$), then for sufficiently large $p$ and $n$, there exist $\alpha,\beta>0$, such that when the distribution $\cD_\lambda$ is chosen as $Beta(\alpha,\beta)$, with high probability,
 $$
ECE(\hat{\mathcal{C}}^{mix})<ECE(\hat{\mathcal{C}}).
$$
\end{theorem}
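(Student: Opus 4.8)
The plan is to reduce the population ECE of a linear score to a one-dimensional integral governed by a single scalar ``miscalibration parameter,'' compute this parameter for the vanilla and the Mixup estimators, and then compare them. Fix the training set, write $F(u)=1/(1+e^{-u})$ for the sigmoid, and let $z=\hat\theta^\top x/\sigma^2$ be the logit of a fresh test point $(x,y)$. Since $x\mid y\sim\cN(y\theta^*,\sigma^2 I)$, conditionally on the data $z\mid y\sim\cN(y\mu,s^2)$ with $\mu=\hat\theta^\top\theta^*/\sigma^2$ and $s^2=\|\hat\theta\|_2^2/\sigma^2$; the confidence is $\hat p=F(|z|)$ and the prediction is correct iff $\sgn(z)=y$. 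A short Gaussian computation (the two signs of $y$ contribute symmetrically, and the ratio of the two Gaussian densities at $|z|=t$ collapses to a sigmoid) gives $\bP(\hat y=y\mid \hat p=F(t))=F(\kappa t)$ with
\[
\kappa \;=\; \frac{2\,\hat\theta^\top\theta^*}{\|\hat\theta\|_2^2}.
\]
Hence $\ECE=\bE_{t}\big|F(\kappa t)-F(t)\big|$, where $t=|z|$, and the score is perfectly calibrated iff $\kappa=1$. This identity is the workhorse: it converts calibration into the distance of $\kappa$ from $1$.

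Next I would compute $\kappa$ for both estimators. Writing $x_iy_i=\theta^*+\sigma y_i\varepsilon_i$ gives $\hat\theta=\tfrac12\theta^*+\tfrac{\sigma}{2n}\sum_i y_i\varepsilon_i$, so by standard high-dimensional concentration $\hat\theta^\top\theta^*\to\tfrac12\|\theta^*\|_2^2$ and $\|\hat\theta\|_2^2\to\tfrac14\|\theta^*\|_2^2+\tfrac{\sigma^2 p}{4n}$ with high probability, whence $\kappa=\kappa_{\mathrm{std}}\to \kappa^*:=4\|\theta^*\|_2^2/(\|\theta^*\|_2^2+\sigma^2 p/n)$. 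The crucial point is that $\kappa^*<1$ precisely when $p/n>3\|\theta^*\|_2^2/\sigma^2$, i.e.\ in the high-dimensional regime; choosing $c_1>3C^2/\sigma^2$ forces overconfidence. For Mixup, expanding the double sum yields the \emph{exact} identity
\[
\hat\theta^{mix}=a\,\hat\theta+\frac{b}{2n^2}\Big(\sum_i x_i\Big)\Big(\sum_i y_i\Big),\qquad a=\bE_\lambda[\lambda^2+(1-\lambda)^2]\in(\tfrac12,1),\ \ b=1-a,
\]
and the rank-one ``cross'' term has norm $O_p(\sqrt p/n)=o(1)$ in the regime $p/n\asymp 1$. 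Thus $\hat\theta^{mix}=a\hat\theta+o_p(1)$, giving $\kappa_{\mathrm{mix}}=\kappa_{\mathrm{std}}/a+o_p(1)$: Mixup multiplies the logits by $a<1$ and therefore \emph{increases} $\kappa$ toward $1$.

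With $\kappa^*<1$ fixed, choose $a:=(\kappa^*+1)/2\in(\kappa^*,1)$; since $\frac{\alpha+1}{2\alpha+1}$ sweeps $(\tfrac12,1)$ for symmetric $Beta(\alpha,\alpha)$, this $a$ is realized by an explicit $\alpha=\beta$. It remains to compare the two ECEs. In the clean limit $\hat\theta^{mix}=a\hat\theta$ one has $z_{mix}=az_{std}$ in law, so after the substitution $t\mapsto at$ both ECEs become expectations of the \emph{same} target $F(\kappa^* t)$ against the \emph{same} law of $t=|z_{std}|$, with confidence argument $t$ for the vanilla score and $at$ for Mixup:
\[
\ECE_{\mathrm{std}}-\ECE_{\mathrm{mix}}=\bE_{t}\big[\,F(t)-F(at)\,\big]>0,
\]
because $\kappa^* t<at<t$ for every $t>0$ and $F$ is increasing, making the Mixup integrand pointwise smaller. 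This gap is a strictly positive constant (the law of $t$ has positive density on $(0,\infty)$), hence $\Omega(1)$.

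The main obstacle is promoting this clean, deterministic inequality to the actual random estimators: I must show that $\kappa_{\mathrm{std}},\kappa_{\mathrm{mix}}$ and the laws of $t$ concentrate well enough that both ECEs lie within $o(1)$ of their clean limits, uniformly enough for the integral to converge. Concretely this requires (i) controlling the rank-one correction in $\hat\theta^{mix}$ and the noise in $\hat\theta$ via sub-exponential and $\chi^2$ concentration in the regime $p/n\in(c_1,c_2)$, and (ii) checking that the $o_p(1)$ perturbations do not destroy the strict ordering $\kappa^*<a<1$ --- which holds because $a$ was chosen with a constant margin while all perturbations vanish. Since the clean gap is $\Omega(1)$ and the perturbations are $o(1)$, the strict inequality $\ECE(\hat{\mathcal C}^{mix})<\ECE(\hat{\mathcal C})$ survives with high probability, completing the argument.
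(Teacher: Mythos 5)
Your proposal is correct and takes essentially the same route as the paper's own proof: both reduce the population ECE to a pointwise comparison of two sigmoids governed by the scalar ratio $\hat\theta^\top\theta^*/\|\hat\theta\|_2^2$, both decompose $\hat\theta^{mix}$ into $a\,\hat\theta$ plus a rank-one cross term of norm $O_p(\sqrt p/n)$, and both pick the Beta parameters so that the effective slope $a$ (the paper's $1-t$) sits strictly between that ratio and $1$, making the Mixup integrand pointwise smaller with an $\Omega(1)$ gap that dominates the $o_p(1)$ perturbations. Your version is in fact slightly more careful than the paper's about the $\tfrac{1}{2n}$ normalization of $\hat\theta$ (hence the explicit condition $p/n>3\|\theta^*\|_2^2/\sigma^2$ and the choice $c_1>3C^2/\sigma^2$), but this does not change the argument's structure.
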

{The above theorem states that when $p$ is comparable to $n$ and $p/n$ is not too small, applying Mixup leads to a better calibration than without applying Mixup. In the very next theorem, we further demonstrate that the condition `` $p$ and $n$ are comparable" is necessary for Mixup to reach a smaller ECE. }


\begin{theorem}\label{thm:lowdim}
There exists a threshold $\tau=o(1)$ such that if $p/n\le\tau$ and $\|\theta\|_2<C$ for some universal constant $C>0$, given any constants $\alpha,\beta>0$ (not depending on $n$ and $p$), when $n$ is sufficiently large, we have, with high probability, $$
ECE(\hat{\mathcal{C}})<ECE(\hat{\mathcal{C}}^{mix}).
$$
\end{theorem}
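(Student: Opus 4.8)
The plan is to show that in the low-dimensional regime both estimators collapse onto the direction $\theta^*$ with deterministic shrinkage factors, that the standard shrinkage is strictly closer to the perfectly-calibrated value than the Mixup one, and that the ordering of the two calibration errors is therefore governed by a clean one-dimensional monotonicity.

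First I would compute $\hat\theta^{mix}$ in closed form. Expanding $\tilde x_{i,j}(\lambda)\tilde y_{i,j}(\lambda)$, summing over $i,j$, and taking $\bE_\lambda$, the diagonal terms give a multiple of $\sum_i x_iy_i$ while the cross terms collapse to $(\sum_i x_i)(\sum_i y_i)$, yielding
$$\hat\theta^{mix}=(1-2c)\,\hat\theta+c\,\bar x\,\bar y,\qquad c=\bE_{\lambda\sim Beta(\alpha,\beta)}[\lambda(1-\lambda)]\in(0,\tfrac14],$$
with $\bar x=\tfrac1n\sum_i x_i$, $\bar y=\tfrac1n\sum_i y_i$. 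Standard Gaussian concentration then gives, in the regime $p/n\le\tau=o(1)$, that $\hat\theta=\tfrac12\theta^*+\xi$ with $\xi^\top\theta^*=O_p(n^{-1/2})$ and $\|\xi\|_2^2=O_p(p/n)$, while $\|\bar x\,\bar y\|_2=O_p(n^{-1}+n^{-1/2}\sqrt{p/n})$. Hence $\hat\theta\to b_{\mathrm{std}}\theta^*$ and $\hat\theta^{mix}\to b_{\mathrm{mix}}\theta^*$ in probability, with $b_{\mathrm{std}}=\tfrac12$ and $b_{\mathrm{mix}}=\tfrac{1-2c}{2}<\tfrac12$ (using $c>0$).

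Next I would obtain an exact expression for the ECE of any classifier of the form $\sgn((b\theta^*)^\top x)$. Writing $t=\theta^{*\top}x$, the reported confidence is $\tfrac{1}{1+e^{-b|t|/\sigma^2}}$, whereas $\bP(\hat y=y\mid\hat p)$ equals the one-dimensional Gaussian posterior $\tfrac{1}{1+e^{-2|t|/\sigma^2}}$; crucially, the law of $t$ is the fixed symmetric mixture $\tfrac12\cN(\gamma,\sigma^2\gamma)+\tfrac12\cN(-\gamma,\sigma^2\gamma)$ with $\gamma=\|\theta^*\|_2^2$, independent of $b$. For every $b<2$ the reported confidence lies strictly below the true accuracy, so
$$\ECE(b)=\bE_t\Big[\tfrac{1}{1+e^{-2|t|/\sigma^2}}\Big]-\bE_t\Big[\tfrac{1}{1+e^{-b|t|/\sigma^2}}\Big],$$
and since $b\mapsto\tfrac{1}{1+e^{-b|t|/\sigma^2}}$ is strictly increasing for $|t|>0$, $\ECE(b)$ is strictly decreasing on $(0,2)$. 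As $b_{\mathrm{mix}}<b_{\mathrm{std}}<2$, the limiting errors satisfy $\ECE(b_{\mathrm{mix}})>\ECE(b_{\mathrm{std}})$ by a fixed gap $\delta_0>0$ depending only on $(\alpha,\beta,\sigma,\|\theta^*\|_2)$; intuitively, the standard rule is already under-confident and Mixup only shrinks the logits further from the calibrated scale.

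Finally I would transfer this strict separation to finite $n$. For a general $\hat\theta$ the ECE depends only on the scalars $\mu_w=\hat\theta^\top\theta^*$ and $\|\hat\theta\|_2^2$, through the effective scale $\kappa=2\mu_w/\|\hat\theta\|_2^2$ and the induced Gaussian mixture of $w=\hat\theta^\top x$, and this dependence is continuous away from $\kappa=1$. Since the limits sit at $\kappa=4$ and $\kappa=4/(1-2c)$, both bounded away from $1$, the concentration bounds above let me conclude that with high probability $|\ECE(\hat{\mathcal{C}})-\ECE(b_{\mathrm{std}})|<\delta_0/3$ and $|\ECE(\hat{\mathcal{C}}^{mix})-\ECE(b_{\mathrm{mix}})|<\delta_0/3$, which forces $\ECE(\hat{\mathcal{C}})<\ECE(\hat{\mathcal{C}}^{mix})$. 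The main obstacle is exactly this last step: I must control not merely the $\theta^*$-component of the estimation noise but its full contribution to $\|\hat\theta\|_2^2$ and to the law of $\hat\theta^\top x$, verify that the Mixup cross-term $c\,\bar x\,\bar y$ is genuinely negligible, and check that the fixed gap $\delta_0$ dominates all the $o(1)$ fluctuations uniformly — which is precisely where the hypothesis $p/n=o(1)$ (choosing $\tau$ small) is used.
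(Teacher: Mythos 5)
Your proposal is correct in structure and reaches the stated inequality by essentially the same pipeline as the paper: a closed form for $\hat\theta^{mix}$ (your $(1-2c)\hat\theta+c\,\bar x\bar y$ is exactly the paper's $\hat\theta(t)=(1-t)\hat\theta(0)+t\epsilon$ with $t=2c$), concentration to dispose of the cross term and the noise, and reduction of the ECE to a one-dimensional comparison of logit scales. Where you genuinely diverge is in the final comparison, and the divergence traces to the paper's inconsistent normalizations. You read the main text literally (estimator $\frac{1}{2n}\sum_i x_iy_i$ and score $1/(1+e^{-\hat\theta^\top x/\sigma^2})$), so perfect calibration sits at logit scale $b=2$, the standard classifier converges to $b_{\mathrm{std}}=1/2$ and remains miscalibrated in the limit, and you order two \emph{strictly positive} limiting ECEs via monotonicity of $b\mapsto \mathrm{ECE}(b)$ on $(0,2)$. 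The paper instead works with the conventions of its own proofs (a consistent $\hat\theta$ with mean $\theta^*$ and the plug-in posterior score $1/(1+e^{-2\hat\theta^\top x/\sigma^2})$), under which calibration holds iff $\rho:=\hat\theta^\top\theta^*/\|\hat\theta\|^2=1$; in the regime $p/n\to 0$ it gets $\rho=1+O_P(p/n)$, hence $\mathrm{ECE}(\hat{\mathcal C})=O_P(p/n)=o_P(1)$, while Mixup's fixed shrinkage $1-t\in(0,1)$ forces $\mathrm{ECE}(\hat{\mathcal C}^{mix})=\Omega(1)$ — a vanishing-versus-nonvanishing separation that needs no monotonicity lemma. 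Both routes yield the theorem, and your monotonicity argument is robust to the convention (under the paper's reading your two limit points become mismatch ratios $1$ and $1/(1-2c)$, and the same pointwise ordering applies), but be aware that your intermediate picture — the standard rule is under-confident and its ECE tends to a positive constant — contradicts the paper's, under whose conventions the standard classifier is asymptotically \emph{perfectly} calibrated; that sharper fact is the actual content of the paper's low-dimensional proof. Two smaller points: your transfer step (continuity of the ECE in $(\hat\theta^\top\theta^*,\|\hat\theta\|^2)$ plus a fixed gap dominating the $o(1)$ fluctuations) is only sketched, though this matches the paper's own level of rigor; and in both your argument and the paper's, the "fixed gap" (resp.\ the $\Omega(1)$ lower bound) is only uniform if $\|\theta^*\|$ is bounded away from zero, a hypothesis neither you nor the paper makes explicit.
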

In Theorem \ref{thm:lowdim}, we can see if $p$ is too small compared with $n$, then applying Mixup cannot have any gain and even hurts the calibration.

Usually, in the implementation of Mixup, we first fix $\alpha$ and $\beta$ before training, and the above theorem reveals the fact that in the low-dimensional regime, where $p/n$ is sufficiently close to $0$, the Mixup could not help calibration with high probability. Moreover, combined with Theorem \ref{thm:capacity} stated below, which characterizes the monotonic relationship between $p/n$ and the improvement brought by Mixup, we can see Mixup helps calibration more when the dimension is higher.

For the ease of presentation, for all $\beta>0$, let us define $Beta(0,\beta)$ as the degenerated distribution which takes the only value at $0$ with probability one. We also define $\hat{\mathcal{C}}^{mix}_{\alpha,\beta}$ as the classifier where we apply Mixup with distribution $\lambda\sim Beta(\alpha,\beta)$.

\begin{theorem}\label{thm:capacity}
For any constant $c_{\max}>0$, $p/n\to c_{ratio}\in(0,c^{\max})$, when  $\theta$ is sufficiently large (still of a constant level), we have for any $\beta>0$, with high probability, the change of ECE by using Mixup, characterized by
$$
\frac{d}{d\alpha}ECE(\hat{\mathcal{C}}^{mix}_{\alpha,\beta})\mid_{\alpha\to0+}
$$ is negative, and monotonically decreasing with respect to $c_{ratio}$. 
\end{theorem}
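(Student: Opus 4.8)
The plan is to first collapse the two Beta parameters into the single scalar $c(\alpha,\beta):=\bE_{\lambda\sim Beta(\alpha,\beta)}[\lambda(1-\lambda)]=\frac{\alpha\beta}{(\alpha+\beta)(\alpha+\beta+1)}$. Expanding $\hat{\theta}^{mix}=\bE_\lambda\sum_{i,j}\tilde x_{i,j}(\lambda)\tilde y_{i,j}(\lambda)/2n^2$ and grouping the $\lambda^2+(1-\lambda)^2$ and $\lambda(1-\lambda)$ terms gives the closed form
\[
\hat{\theta}^{mix}=(1-2c)\,\hat{\theta}+\frac{c}{n^2}\Big(\sum_i x_i\Big)\Big(\sum_i y_i\Big),
\]
so Mixup acts, to leading order, as an isotropic shrinkage $\hat{\theta}\mapsto(1-2c)\hat{\theta}$ plus a lower-order perturbation. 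Since $c(0,\beta)=0$ and a direct calculation gives $\frac{dc}{d\alpha}\big|_{\alpha\to0^+}=\frac{1}{\beta+1}>0$, the chain rule yields $\frac{d}{d\alpha}\ECE(\hat{\mathcal{C}}^{mix}_{\alpha,\beta})\big|_{\alpha\to0^+}=\frac{1}{\beta+1}\frac{d}{dc}\ECE\big|_{c=0^+}$. As the prefactor is positive and free of $c_{ratio}$, both the sign and the $c_{ratio}$-monotonicity are inherited from $\frac{d}{dc}\ECE|_{c=0}$, so it suffices to analyze that one-dimensional derivative.

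\textbf{Scalar form of the calibration error.}
Next I would write $\ECE$ as a function of the shrinkage factor $s:=1-2c$ (to leading order $\hat{\theta}^{mix}=s\hat{\theta}$). Conditioning on a fresh test pair, $\hat{\theta}^\top x/\sigma^2\mid y\sim\cN(y\mu,\tau^2)$ with $\mu=\hat{\theta}^\top\theta^*/\sigma^2$ and $\tau^2=\|\hat{\theta}\|^2/\sigma^2$, and a Bayes computation shows that at the un-scaled margin $r:=|\hat{\theta}^\top x|/\sigma^2$ the classifier $w=s\hat{\theta}$ reports confidence $\ell(sr)$ (with $\ell(t)=1/(1+e^{-t})$) while its true accuracy is $\ell(s^* r)$, where $s^*:=2\mu/\tau^2=2\hat{\theta}^\top\theta^*/\|\hat{\theta}\|^2$ is the unique scale that makes $s\hat{\theta}$ perfectly calibrated. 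Hence
\[
\ECE(s)=\bE_r\big|\ell(s^* r)-\ell(sr)\big|,
\]
with the law of $r$ independent of $s$. The un-mixed classifier is $s=1$ and Mixup is $s=1-2c<1$; on $r>0$ the integrand has constant sign $\sgn(s^*-s)$, so near $s=1$ the absolute value is removable and $\frac{d}{ds}\ECE|_{s=1}=-\sgn(s^*-1)\,\bE_r[r\ell'(r)]$. Therefore $\frac{d}{dc}\ECE|_0=2\,\sgn(s^*-1)\,\bE_r[r\ell'(r)]$, and since $\bE_r[r\ell'(r)]>0$ the target derivative has sign $\sgn(s^*-1)$: it is negative exactly when the base classifier is over-confident, $s^*<1$.

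\textbf{Plugging in concentration and extracting monotonicity.}
It then remains to insert the high-probability values of the summary statistics and to check sign and monotonicity. Writing $x_i=y_i\theta^*+\sigma z_i$ gives $\hat{\theta}=\tfrac12\theta^*+\tfrac{\sigma}{2n}\sum_i y_iz_i$, so with high probability $\hat{\theta}^\top\theta^*\to\tfrac12\|\theta^*\|^2$ and $\|\hat{\theta}\|^2\to\tfrac14(\|\theta^*\|^2+\sigma^2 c_{ratio})$, whence
\[
s^*\longrightarrow\frac{4\|\theta^*\|^2}{\|\theta^*\|^2+\sigma^2 c_{ratio}}.
\]
The high-dimensional variance-inflation term $\sigma^2 c_{ratio}$ in $\|\hat{\theta}\|^2$ is what produces the over-confidence ($s^*<1$) needed for the negative sign, and because $\|\hat{\theta}\|^2$ is strictly increasing in $c_{ratio}$ while $\hat{\theta}^\top\theta^*$ is not, $s^*$ is strictly decreasing in $c_{ratio}$; larger capacity therefore pushes $s^*$ further below $1$ and enlarges $\big|\frac{d}{d\alpha}\ECE\big|$, which is the asserted monotonicity. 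The role of $\theta$ being a sufficiently large constant is to keep the margin law of $r$ concentrated away from $1/2$, so that $\bE_r[r\ell'(r)]$ is bounded away from degeneracy and differentiation under the integral is uniformly controlled.

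\textbf{Main obstacle.}
I expect the hard part to be the monotonicity rather than the sign. Once reduced to $\frac{d}{d\alpha}\ECE|_{\alpha\to0^+}=-\frac{2}{\beta+1}\bE_r[r\ell'(r)]$ in the over-confident regime, both $s^*$ and the law of $r$ (through $\mu,\tau^2$) move with $c_{ratio}$ simultaneously, so one must differentiate the reduced functional in $c_{ratio}$ and sign the result; this needs quantitative monotonicity of the folded-Gaussian integral $\bE_r[r\ell'(r)]$ in $\tau^2$, not merely of $s^*$. The secondary technical points are (i) controlling the perturbation $\frac{c}{n^2}(\sum_i x_i)(\sum_i y_i)$, whose norm is $O(\sqrt{c_{ratio}/n})$, uniformly in the relevant $c$-range so that $\hat{\theta}^{mix}\approx(1-2c)\hat{\theta}$ holds with high probability, and (ii) justifying the exchange of the $n,p\to\infty$ limit with the $\alpha\to0^+$ derivative, for which the uniform concentration of $\hat{\theta}^\top\theta^*$ and $\|\hat{\theta}\|^2$ on the good event is invoked.
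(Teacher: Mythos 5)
Your first two steps retrace the paper's own proof: the paper likewise writes $\hat\theta^{mix}=(1-t)\hat\theta(0)+t\epsilon$ with $t=2\,\E_\lambda[\lambda(1-\lambda)]$ and $\|\epsilon\|=O_P(\sqrt p/n)$, passes the derivative in $\alpha$ through $t$, removes the absolute value on the over-confidence event, and arrives at the same derivative $-\,c_\beta\,\E_r[r\ell'(r)]$ at $t=0$. The first genuine problem is your sign analysis in step three. You keep the normalization $\hat\theta=\frac{1}{2n}\sum_i x_iy_i$ (so $\hat\theta\to\theta^*/2$) while letting the classifier report $\ell(r)$ with $r=\hat\theta^\top x/\sigma^2$, and this mismatch gives $s^*=2\hat\theta^\top\theta^*/\|\hat\theta\|^2\to 4\|\theta^*\|^2/(\|\theta^*\|^2+\sigma^2 c_{ratio})$, exactly as you compute. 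That quantity is below $1$ only when $\sigma^2c_{ratio}>3\|\theta^*\|^2$, whereas the theorem assumes $\|\theta\|$ is a sufficiently large constant and $c_{ratio}\le c_{\max}$ --- precisely the regime where your $s^*>1$. There the base classifier is \emph{under}-confident, the absolute value resolves with the opposite orientation, and your own computation yields $\frac{d}{d\alpha}\ECE|_{0^+}=+\frac{2}{\beta+1}\E_r[r\ell'(r)]>0$, contradicting the statement you are trying to prove. So the assertion that the variance-inflation term "produces the over-confidence $(s^*<1)$" is false under your conventions in the theorem's regime. The paper's proof instead uses the self-consistent plug-in convention (reported confidence $\ell(2v)$, $v=\hat\theta(0)^\top x$, with $\hat\theta(0)=\theta+\epsilon_n$, $\|\epsilon_n\|^2\approx p/n$), for which $s^*=\hat\theta(0)^\top\theta/\|\hat\theta(0)\|^2\approx\|\theta\|^2/(\|\theta\|^2+c_{ratio})<1$ for every $c_{ratio}>0$; you must repair the factor mismatch between estimator and confidence score before the sign claim holds.

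The second, larger gap is the monotonicity in $c_{ratio}$, which is the actual content of the theorem. The mechanism you offer ("$s^*$ is strictly decreasing in $c_{ratio}$, so larger capacity pushes $s^*$ further below $1$ and enlarges the derivative") is a non sequitur: once the absolute value is removed, $s^*$ drops out of $\frac{d}{d\alpha}\ECE|_{0^+}=-\frac{2}{\beta+1}\E_r[r\ell'(r)]$ entirely --- it only fixed the sign. All dependence on $c_{ratio}$ sits in the law of the margin $r$, a folded Gaussian whose mean $\mu\approx\|\theta\|^2$ does not involve $c_{ratio}$ and whose variance $\tau^2\approx\|\theta\|^2+c_{ratio}$ increases with it. You concede this in your "main obstacle" paragraph, but you leave the required lemma --- that $\E_r[r\ell'(r)]$ is increasing in $\tau$ --- unproven, and this is exactly where the hypothesis that $\theta$ is sufficiently large enters. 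The paper proves it: writing the derivative as $\E_{Z\sim N(0,1)}[h(\mu+\tau Z)]$ with $h(v)=v/(e^{2v}+e^{-2v}+2)$, differentiating in $\tau$ under the integral, and showing that for large $\mu$ the derivative is dominated by the term $-2\mu\,\E_Z[Z e^{-2(\mu+\tau Z)}]=4\mu\tau e^{-2\mu+2\tau^2}>0$, i.e.\ the exponential decay of $h$ makes the left Gaussian tail dominant so that spreading the margin distribution increases the integral. Some argument of this kind is indispensable; as written, your proposal (after fixing the normalization) establishes at most that the derivative is negative, not that it decreases monotonically in $c_{ratio}$.
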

In Theorem~\ref{thm:capacity}, the derivative with respect to $\alpha$ is interpreted as follows. Since for any $\beta>0$, $Beta(0,\beta)$ is the degenerated distribution at 0, $\hat\theta^{mix}(0,\beta)$ corresponds to the output without Mixup. Therefore, increasing $\alpha$ from 0 to some positive value implies applying Mixup. Thus, Theorem~\ref{thm:capacity} suggests that in high-dimensions, increasing the interpolation range in Mixup decreases ECE. 

\paragraph{Intuition behind our results.} In the high-dimensional regime, especially in the over-parameterized case ($p>n$), the models have more flexibility to set the confidence vectors. For instance, for trained neural networks, the entries of the confidence vectors for many data points are all close to zero except for one entry, whose value is close to $1$,  because the model is trained to memorize the training labels. Mixup mitigates this problem by using linear interpolation that creates one-hot encoding terms with entry values lying between $(0,1)$, which pushes the value of entries to diverge. This could be partially addressed in our analysis above, as the magnitude of the confidence is closely related to  $\|\hat{\theta}\|$, i.e. when $\|\hat{\theta}\|$ is large, the confidence scores are more likely to be close to $0$ or $1$. Mixup, as a form of regularization \citep{zhang2020does}, could shrink $\|\hat{\theta}\|$ and avoid too extreme confidence scores. 

\paragraph{Additional supporting experiments.}
To complement our theory, we further provide more experimental evidence on popular image classification data sets with neural networks.
We used the fully-connected neural networks with various values of the width (i.e. the number of neurons per hidden layer) and the depth (i.e., the number of hidden layers). For the experiments on the effect of the width, we fixed the depth  to be 8 and varied the width from 10 to 3000. For the experiments on the effect of the depth, the depth was varied from 1 to 24 (i.e., from 3 to 26 layers including input/output layers) by fixing the width to be 400 with data-augmentation and 80 without data-augmentation. We used the following standard data-augmentation operations using \texttt{torchvision.transforms} for both data sets: random crop (via \texttt{RandomCrop(32, padding=4}) and random horizontal flip (via \texttt{RandomHorizontalFlip}) for each image.
In this experiment, we used the standard data sets --- CIFAR-10 and CIFAR-100  \citep{krizhevsky2009learning}. We used stochastic gradient descent (SGD) with mini-batch size of 64.  We set the learning rate to be 0.01 and momentum coefficient to be 0.9.  We used the Beta distribution $Beta(\alpha,\alpha)$ with $\alpha=1.0$ for Mixup.  The results are reported in Figure \ref{fig:1} and \ref{fig:4} with a fully-connected neural network. Consistently across all the experiments, Mixup reduces ECE for larger capacity models and can hurt ECE for small models, which matches our theory. {For reasons of space, since our focus is mainly on the calibration, the empirical results regarding test accuracy for each figure are deferred to the Appendix.}


\begin{figure}[t!]
\centering
\begin{subfigure}{0.46\columnwidth}
  \includegraphics[width=\textwidth, height=0.7\textwidth]{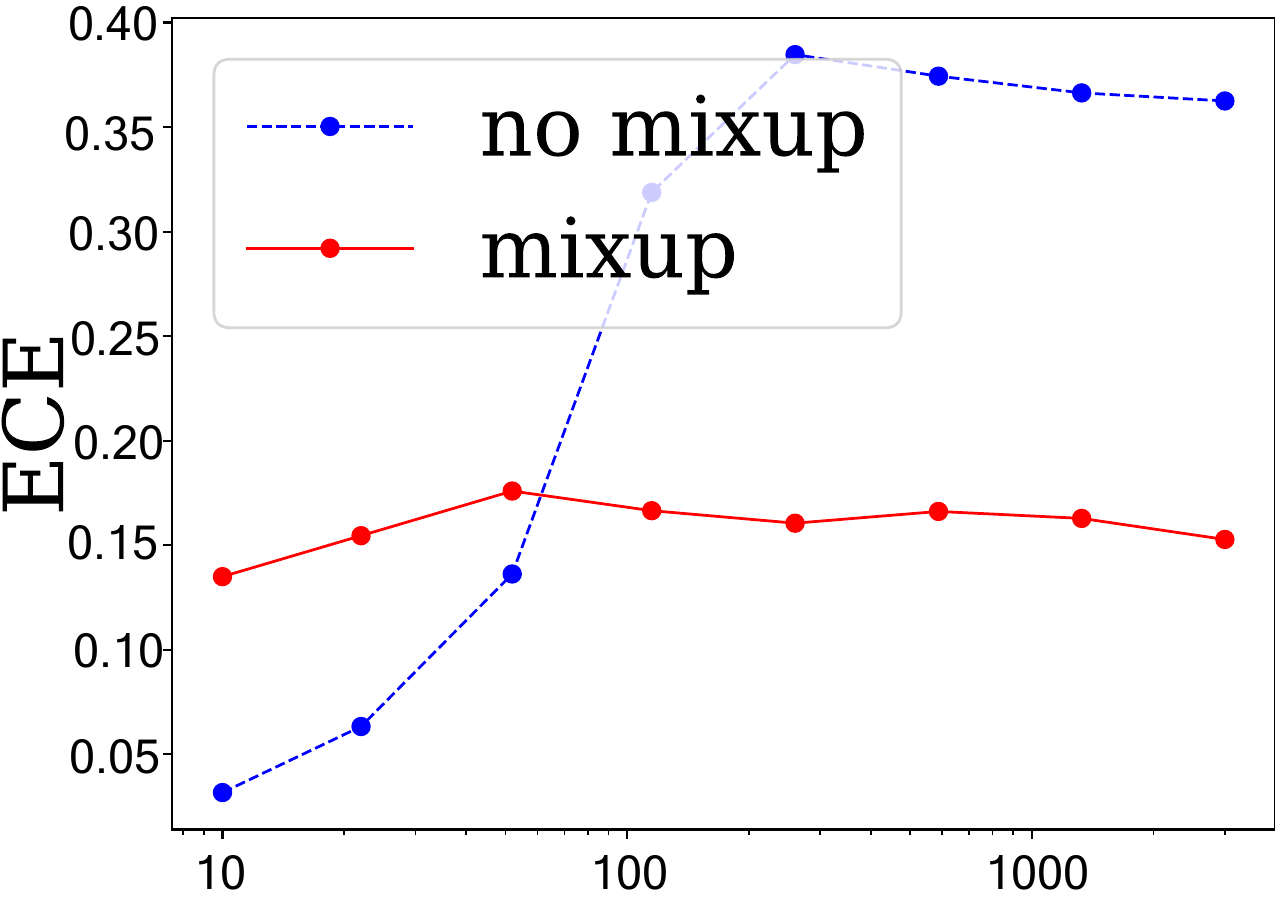}
 \caption{Width}
\end{subfigure}
\hspace{0.1in}
\begin{subfigure}{0.46\columnwidth}
  \includegraphics[width=\textwidth, height=0.7\textwidth]{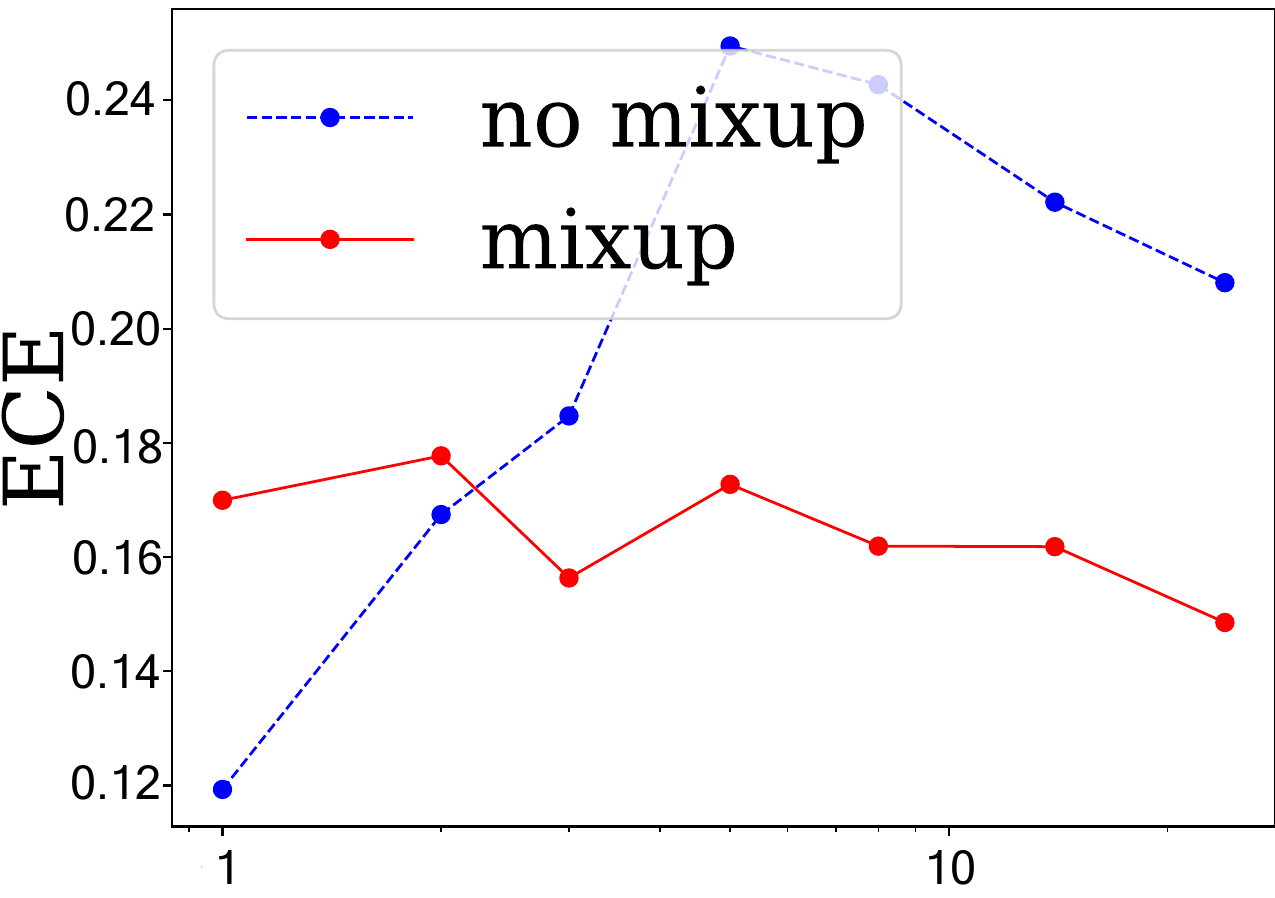}
 \caption{Depth}
\end{subfigure} 
\hspace{0.1in}
\begin{subfigure}{0.46\columnwidth}
  \includegraphics[width=\textwidth, height=0.7\textwidth]{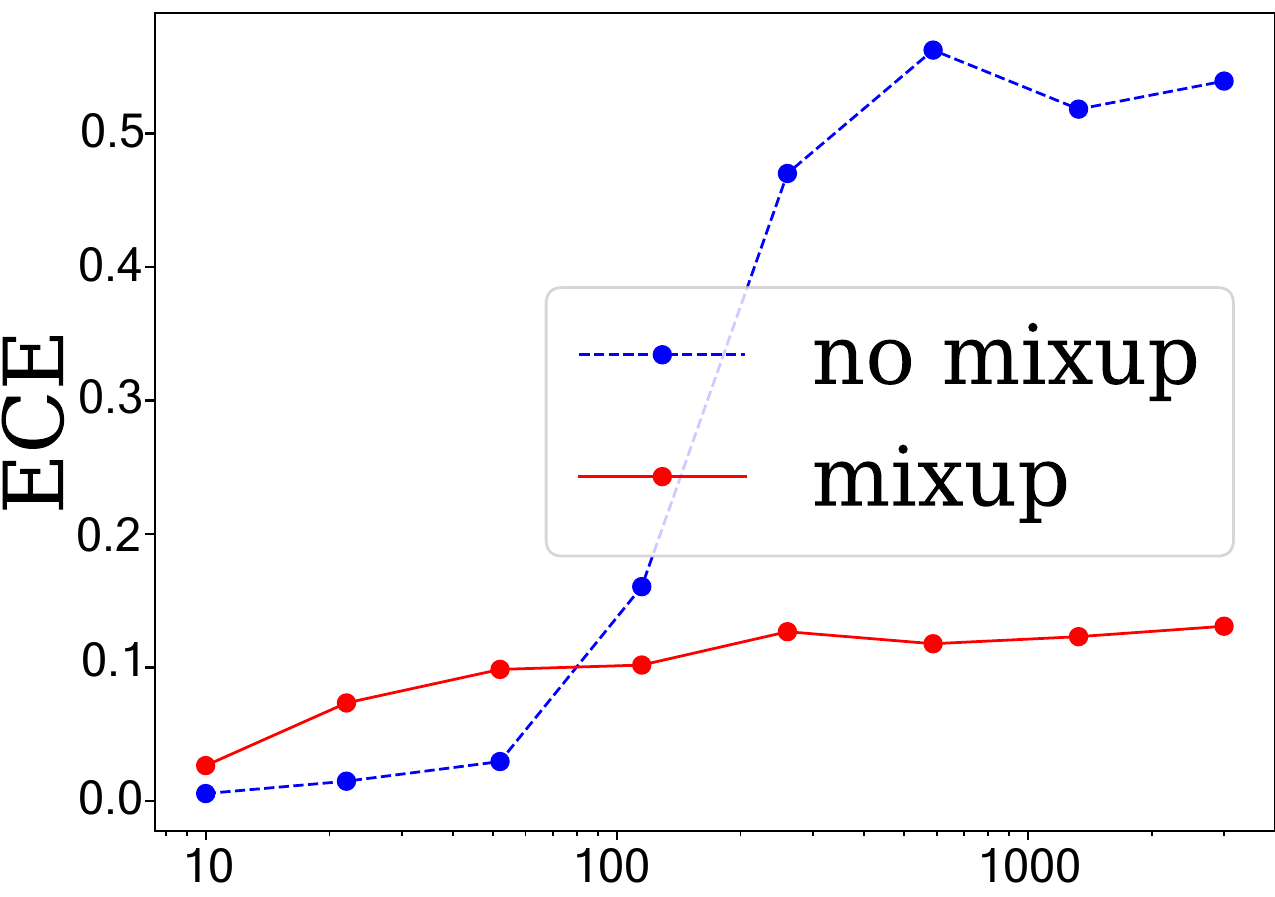}
 \caption{Width}
\end{subfigure}
\hspace{0.1in}
\begin{subfigure}{0.46\columnwidth}
  \includegraphics[width=\textwidth, height=0.7\textwidth]{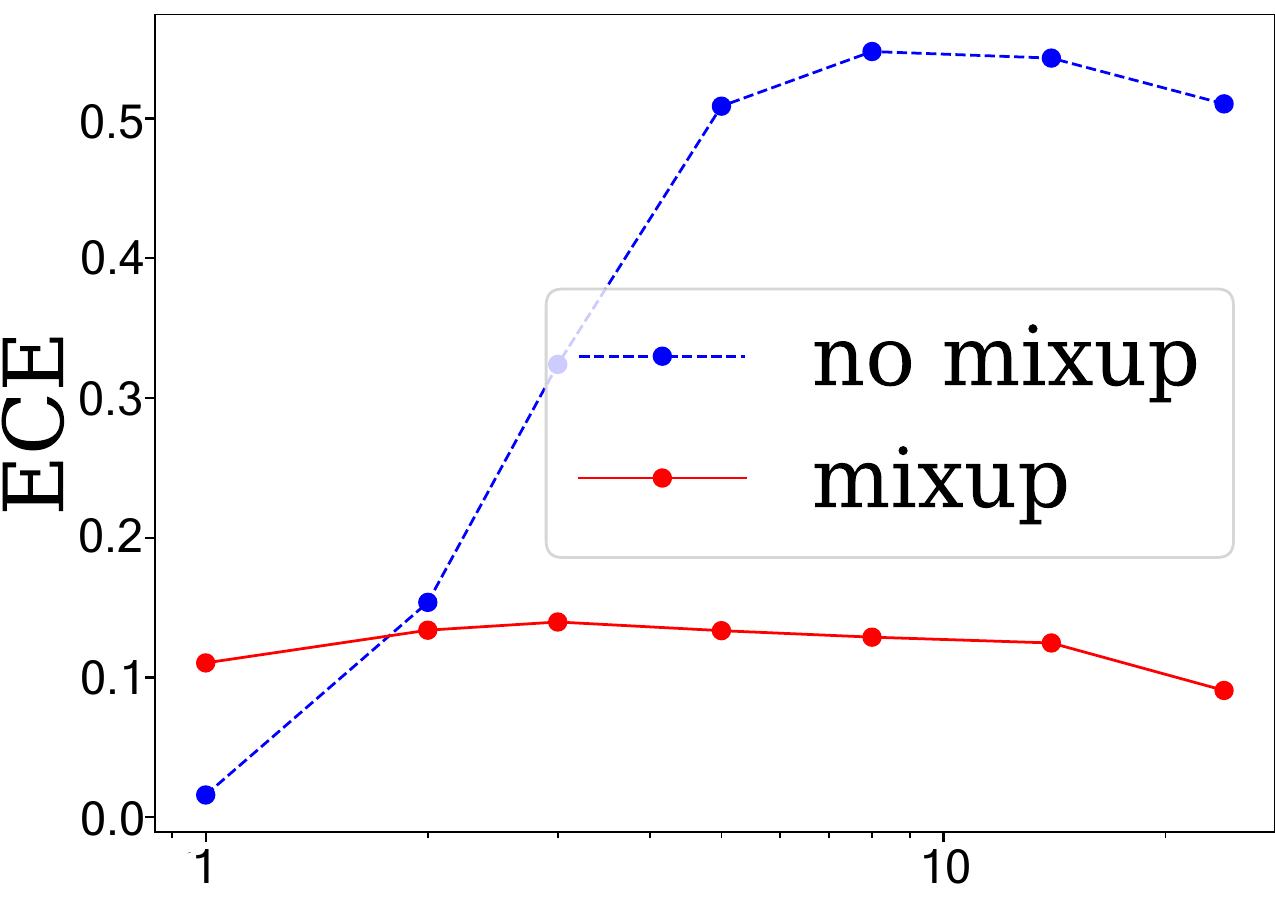}
 \caption{Depth}
\end{subfigure}
\hspace{0.1in}
\begin{subfigure}{0.46\columnwidth}
  \includegraphics[width=\textwidth, height=0.7\textwidth]{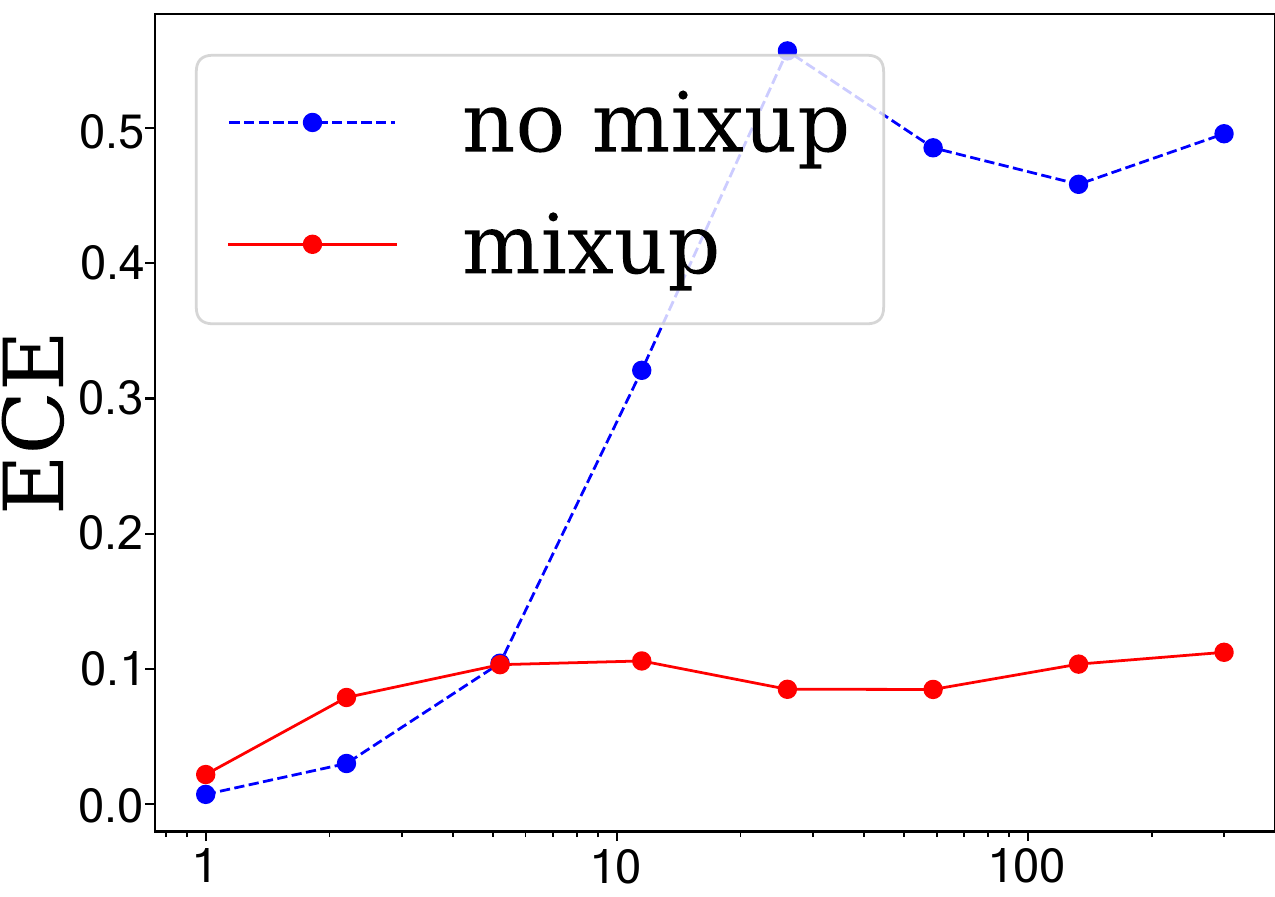}
 \caption{Width}
\end{subfigure}
\hspace{0.1in}
\begin{subfigure}{0.46\columnwidth}
  \includegraphics[width=\textwidth, height=0.7\textwidth]{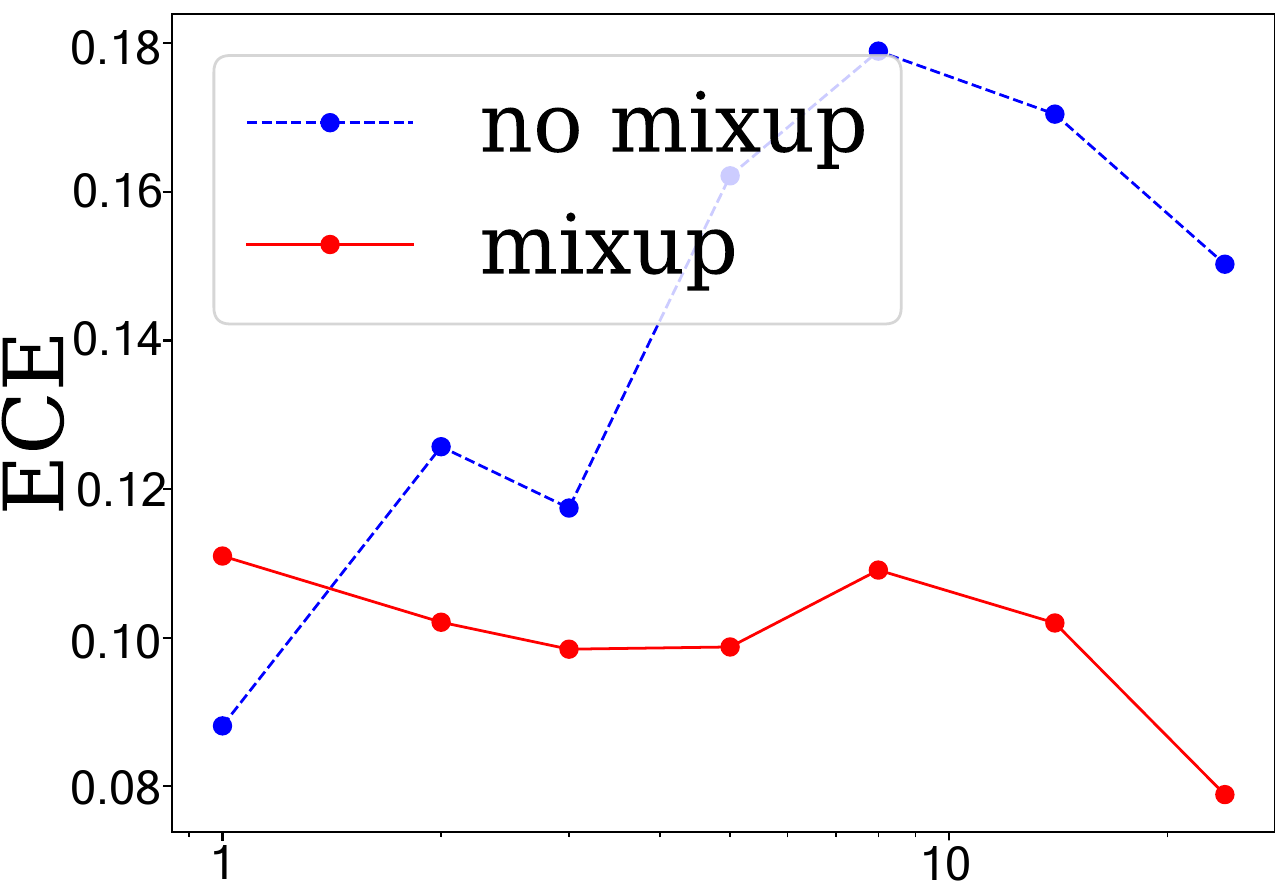}
 \caption{Depth}
\end{subfigure} 
\caption{Expected calibration error (ECE). (a), (b): CIFAR-10 without data augmentation; (c), (d): CIFAR-100 with data augmentation; (e), (f): CIFAR-100 without data augmentation.} 
\label{fig:4} 
\end{figure}

\subsection{Improvement for out-of-domain data }
In this section, we evaluate the quality of predictive uncertainty on out-of-domain inputs. It has been found empirically that in the out-of-domain setting, Mixup can also enhance the reliability of prediction and boost the performance in calibration comparing to the standard training (without Mixup) \citep{thulasidasan2019mixup, tomani2020towards}. To explain the above phenomenon, using the similar analysis as those for Theorem \ref{thm:helpcalibration}, we provide the following theorem.


\begin{theorem}\label{thm:out}
Let us consider the ECE evaluated on the out-of-domain Gaussian model with mean parameter $\theta'$, that is, $\bP(y=1)=\bP(y=-1)=1/2$, and
$ x\mid y\sim \cN(y\cdot\theta',\sigma^2 I), \text{ for } i=1,2,...,n. 
$ If we have $(\theta'-\theta^*)^\top\theta^*\le p/(2n)$, then when $p$ and $n$ are sufficiently large, with high probability, 
$$ECE(\hat{\mathcal{C}}^{mix};\theta',\sigma)<ECE(\hat{\mathcal{C}};\theta',\sigma),$$
where $\ECE(\cdot;\theta',\sigma)$ denotes the expected calibration error calculated with respect to the out-of-domain distribution -- the Gaussian model with parameters $\theta'$ and $\sigma$, while $\hat{\mathcal{C}}^{mix}$ and $\hat{\mathcal{C}}$ are still obtained via (\ref{eq:C}) and (\ref{eq:Cmix}) via the in-domain training data.
\end{theorem}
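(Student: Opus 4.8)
The plan is to reduce the out-of-domain $\ECE$ to a single scalar ``calibration slope'' and then track how Mixup rescales it. Write $\psi(u)=(1+e^{-u})^{-1}$ for the sigmoid, so that by \eqref{eq:score} the confidence is $\hat p=\psi(|\hat\theta^\top x|/\sigma^2)$ and the prediction is $\sgn(\hat\theta^\top x)$. Under the out-of-domain law $x\mid y\sim\mathcal{N}(y\theta',\sigma^2 I)$ we have $\hat\theta^\top x\mid y\sim\mathcal{N}(y\,\hat\theta^\top\theta',\,\sigma^2\|\hat\theta\|^2)$, so a Bayes-rule computation gives $\bP(\hat y=y\mid |\hat\theta^\top x|/\sigma^2=s)=\psi(\kappa s)$ with
$$\kappa:=\frac{2\,\hat\theta^\top\theta'}{\|\hat\theta\|^2}.$$
Thus the predictor is perfectly calibrated iff $\kappa=1$, over-confident iff $\kappa<1$, under-confident iff $\kappa>1$, and
$$\ECE(\hat{\mathcal{C}};\theta',\sigma)=\bE_{s}\big[\,|\psi(\kappa s)-\psi(s)|\,\big],$$
where $s=|\hat\theta^\top x|/\sigma^2$ has a law determined by $(\hat\theta^\top\theta',\|\hat\theta\|)$; the identical formula holds for $\hat{\mathcal{C}}^{mix}$ with $\hat\theta$ replaced by $\hat\theta^{mix}$.

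Next I would compute the two estimators in closed form. Expanding the double sum exactly as in the proof of Theorem~\ref{thm:helpcalibration} gives
$$\hat\theta^{mix}=\rho\,\hat\theta+\frac{1-\rho}{2n^2}\Big(\textstyle\sum_{i}x_i\Big)\Big(\textstyle\sum_{i}y_i\Big),\qquad \rho:=\bE_{\lambda\sim\cD_\lambda}[\lambda^2+(1-\lambda)^2]\in(0,1),$$
so Mixup acts as the shrinkage $\hat\theta\mapsto\rho\,\hat\theta$ up to a rank-one correction. Using $x_i=y_i\theta^*+\sigma g_i$ with $g_i\sim\mathcal{N}(0,I)$ i.i.d., standard Gaussian and $\chi^2$ concentration give, with high probability, $\hat\theta=\tfrac12\theta^*+\tfrac{\sigma}{2\sqrt n}\xi$ with $\xi\sim\mathcal{N}(0,I)$, $\|\hat\theta\|^2=\tfrac14(\|\theta^*\|^2+\sigma^2 p/n)(1+o(1))$, and that the correction term has norm $O(\sqrt p/n)=o(\|\hat\theta\|)$ with negligible inner product against $\theta'$. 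Consequently $\hat\theta^{mix}$ is collinear with $\rho\,\hat\theta$ up to $o(1)$ in all relevant functionals, the score law is rescaled $s\mapsto\rho s$, and the slope transforms as $\kappa\mapsto\kappa/\rho$.

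Substituting and changing variables $s\mapsto\rho s$ reduces everything to a pointwise comparison against the (non-Mixup) score law $F_s$:
$$\ECE(\hat{\mathcal{C}}^{mix};\theta',\sigma)=\bE_{s}\big[|\psi(\kappa s)-\psi(\rho s)|\big]\quad\text{versus}\quad \ECE(\hat{\mathcal{C}};\theta',\sigma)=\bE_{s}\big[|\psi(\kappa s)-\psi(s)|\big].$$
Since $\psi$ is increasing and $\rho<1$, whenever $\kappa\le\rho<1$ the point $\rho s$ lies between $\kappa s$ and $s$ for every $s\ge0$, so the Mixup integrand is uniformly no larger, and strictly smaller on a set of positive $F_s$-mass. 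It therefore suffices to (i) show $\kappa<1$ with a constant margin with high probability and (ii) pick $\alpha,\beta$ with $\rho\in(\kappa,1)$. For (i), inserting the concentration estimates into $\kappa=\frac{4\,\theta^{*\top}\theta'}{\|\theta^*\|^2+\sigma^2 p/n}(1+o(1))$ and using the hypothesis $(\theta'-\theta^*)^\top\theta^*\le p/(2n)$, together with the high-dimensional regime $p/n=\Omega(1)$ and $\|\theta^*\|\le C$, caps the alignment enough to keep $\kappa$ below $1$ by a constant. For (ii), since $\rho=\bE_{\lambda\sim Beta(\alpha,\beta)}[\lambda^2+(1-\lambda)^2]\to1$ as $\alpha,\beta\to0^+$, any target $\rho\in(\kappa,1)$ is attainable, yielding $\ECE(\hat{\mathcal{C}}^{mix};\theta',\sigma)<\ECE(\hat{\mathcal{C}};\theta',\sigma)$.

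The main obstacle I anticipate is controlling the rank-one correction and the fluctuations of $\hat\theta$ precisely enough that the two reductions hold \emph{simultaneously} with high probability: the approximate collinearity $\hat\theta^{mix}\approx\rho\,\hat\theta$ and the clean rescaling $s\mapsto\rho s$ of the margin law. The correction is only a factor $O(1/\sqrt n)$ smaller than $\hat\theta$, and one must verify it perturbs neither the slope $\kappa$ nor the distribution $F_s$ enough to flip the strict pointwise inequality—both near $s=0$, where the gaps vanish, and in the tails. The secondary difficulty is making the margin in $\kappa<1$ quantitative and uniform over the admissible range of $\theta'$, so that it absorbs all the $o(1)$ errors before $\alpha,\beta$ are taken small.
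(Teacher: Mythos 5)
Your overall architecture is the same as the paper's: reduce the out-of-domain ECE to a comparison of two sigmoid slopes (conditional accuracy versus confidence), show that Mixup acts as the shrinkage $\hat\theta\mapsto\rho\,\hat\theta$ up to a negligible rank-one correction, and then sandwich the Mixup confidence slope between the accuracy slope and $1$. The paper's proof does exactly this: it reduces the claim to showing $\hat\theta^\top\theta'/\|\hat\theta\|^2<1$ with a constant margin, and then reuses the mechanism of Theorem~\ref{thm:helpcalibration}.

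The genuine gap is your step (i), and it is fatal as written. You pair the main-text normalization $\hat\theta=\frac{1}{2n}\sum_i x_iy_i$ (so $\hat\theta\approx\theta^*/2$) with the confidence map $\psi(\hat\theta^\top x/\sigma^2)$ (no factor $2$ in the exponent). These two choices are mutually inconsistent for calibration: as you yourself compute, they give $\kappa=\frac{2\hat\theta^\top\theta'}{\|\hat\theta\|^2}=\frac{4\,\theta^{*\top}\theta'}{\|\theta^*\|^2+\sigma^2 p/n}\,(1+o_P(1))$. The hypothesis $(\theta'-\theta^*)^\top\theta^*\le p/(2n)$ cannot then "cap $\kappa$ below $1$": the in-domain choice $\theta'=\theta^*$ trivially satisfies the hypothesis and yields $\kappa\approx\frac{4\|\theta^*\|^2}{\|\theta^*\|^2+\sigma^2 p/n}$, which exceeds $1$ whenever $\|\theta^*\|^2>\sigma^2 p/(3n)$ --- a regime fully allowed by the theorem's conditions ($\|\theta^*\|\le C$, $p/n$ in a constant range). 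Worse, when $\kappa>1$ the classifier is \emph{under}confident, and the only thing the Mixup shrinkage $\rho<1$ can do in your reduction is lower the confidence further, i.e.\ $|\psi(\kappa s)-\psi(\rho s)|>|\psi(\kappa s)-\psi(s)|$ pointwise; the desired inequality reverses and no choice of $(\alpha,\beta)$ rescues it. The fix --- which is what the paper's appendix proof silently adopts, and which is admittedly obscured by the paper's own inconsistent conventions --- is to use the normalization under which the plug-in rule is calibrated at the truth: $\hat\theta=\frac{1}{n}\sum_i x_iy_i$ (so $\E\hat\theta=\theta^*$) together with the confidence $1/(1+e^{-2\hat\theta^\top x/\sigma^2})$, matching the true posterior $1/(1+e^{-2\theta^{*\top}x/\sigma^2})$. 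The relevant slope is then $\frac{\hat\theta^\top\theta'}{\|\hat\theta\|^2}=\frac{\|\theta^*\|^2+\theta^{*\top}(\theta'-\theta^*)}{\|\theta^*\|^2+p/n}+O_P(1/\sqrt{n})\le\frac{\|\theta^*\|^2+p/(2n)}{\|\theta^*\|^2+p/n}+O_P(1/\sqrt{n})$, which sits below $1$ by the margin $\frac{p/(2n)}{\|\theta^*\|^2+p/n}=\Omega(1)$ --- this is exactly the role of the constant $p/(2n)$ in the hypothesis, and the margin absorbs all the $o_P(1)$ errors. With that replacement, the rest of your argument (negligibility of the rank-one term, the change of variables $s\mapsto\rho s$, the pointwise sandwich for $\rho\in(\kappa,1)$, and $\rho\to1$ as $\alpha,\beta\to0^+$) goes through and coincides with the paper's proof.
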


The above theorem states the continuity of the boosting effect of Mixup over the domain shift. As long as the domain shift is not too large, Mixup still helps calibration.

\subsection{Gaussian generative model}\label{subsec:ggm}
Now let us consider a more general class of distributions, the Gaussian generative model, which is a flexible distribution and has been commonly considered in the machine learning literature. For example, many common deep generative models such as 
Generative Adversarial Nets (GANs) \citep{goodfellow2014generative} are Gaussian generative models, where the input is a Gaussian sample. 
\begin{definition}[Gaussian generative model] \label{model:class2}
For $\theta^*\in\bR^p$ and $g:\R^p\to\R^d$ ($d\ge p$), the $(\theta^*,g)$-Gaussian model is defined as the following distribution over $(x,y)\in \bR^d\times\{1,-1\}$, $x=g(z)$, where: 
$$ z\mid y\sim \cN(y\cdot\theta^*, I), \text{ for } i=1,2,...,n, 
$$
and $y$ follows the Bernoulli distribution $\bP(y=1)=\bP(y=-1)=1/2$.
\end{definition}

Now suppose we can learn an $h\in\{h: h\circ g \text{ is an identity mapping in }\R^p\}$ approximately such that the estimator $\hat h$ satisfies the following condition.
\begin{assumption}\label{assump}
For any given $v\in\R^p$, $k\in\{-1,1\}$, there exists a $\theta^*\in\R^p$, such that given $y=k$, the probability density function of $R_1=v^\top \hat h(x)$ and $R_2=v^\top h(x)=v^\top z\sim N(k\cdot b^\top\theta^*, \|v\|^2)$ satisfies that $p_{R_1}(u)=p_{R_2}(u)\cdot (1+\delta_u)$ for all $u\in\R$ where $\delta_u$ satisfies $\E_{R_1}[|\delta_u|]=o(1)$  when $n\to\infty$. 
\end{assumption}

As a special case of Definition~\ref{model:class2}, we consider the Gaussian model 
with unknown $\sigma$. Estimating $\sigma$ by 
$$\hat\sigma=\sqrt{\|\sum_{i=1}^n (x_i-y_i\hat\theta)\|^2/pn}$$ 
will satisfy Assumption~\ref{assump} when $\|\theta^*\|<C$ for some universal constant $C$. In practice, for more general cases, we can learn such $h$ following the framework of GANs. For example, 
$$\hat h=\argmin_{h}\max_{k\in\{-1,1\}} \cW(h(x),z\mid y),$$ 
where $z$ is the Gaussian mixture defined in Definition~\ref{model:class2} with $\theta^*=1_p/\sqrt{p}$ and $\sigma=1$, and $\cW(\cdot,\cdot)$ denotes the Wasserstein distance. Due to the flexibility of $h$, the choice of $\theta^*$ and $\sigma$ will not impact the training process.

Now we consider the following two classifiers:  
$$\hat{\mathcal{C}}(x)=\sgn(\hat\theta^\top \hat h(x)),$$ 
where $\hat\theta=\sum_{i=1}^n \hat h(x_i) y_i/n$, and 
$$\hat{\mathcal{C}}^{mix}(x)=\sgn(\hat{\theta}^{mix\top} \hat h(x))$$
where $$\hat{\theta}^{mix}=\sum_{i,j=1}^n\bE_{\lambda\sim\cD_\lambda} (\lambda \hat h(x_i)+(1-\lambda)\hat h(x_j))\cdot \tilde{y}_{i,j}(\lambda)/n^2.$$ Similarly, for a generic $\hat\theta$, the confidence scores are given by 
$$
p_k(x)=1/(e^{-2k\cdot\hat\theta^\top \hat h(x)}+1).$$ We then have the following result showing that under the more general Gaussian generative model, the Mixup method could still provably lead to an improvement on the calibration. 
\begin{theorem}\label{thm:helpcalibration2}
Under the settings described above with Assumption~\ref{assump}, there exists $c_2>c_1>0$, when $p/n\in(c_1,c_2)$, $\hat h$ is $L$-Lipschitz, and $\|\theta\|_2<C$ for some universal constants $L, C>0$ (not depending on $n$ and $p$), then for sufficiently large $p$ and $n$, there exist $\alpha,\beta>0$ for the Mixup distribution $\cD_\lambda = Beta(\alpha,\beta)$, such that, with high probability,
 $$
\ECE(\hat{\mathcal{C}}^{mix})<\ECE(\hat{\mathcal{C}}).
$$
\end{theorem}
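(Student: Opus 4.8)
The plan is to reduce Theorem~\ref{thm:helpcalibration2} to the already-established supervised case, Theorem~\ref{thm:helpcalibration}, by showing that the learned feature map $\hat h$ transports the Gaussian generative model back to (an approximate) Gaussian model in the latent $\R^p$ space. Concretely, under Assumption~\ref{assump}, for any test direction $v$ the law of $v^\top \hat h(x)$ given $y=k$ is, up to a multiplicative $(1+\delta_u)$ correction with $\E_{R_1}[|\delta_u|]=o(1)$, equal to the law of $v^\top z$, where $z$ is exactly the latent Gaussian mixture of Definition~\ref{model:class2}. Taking $v=\hat\theta$ (and $v=\hat\theta^{mix}$), this says the one-dimensional projections driving the confidence scores $p_k(x)=1/(e^{-2k\hat\theta^\top\hat h(x)}+1)$ behave, in distribution, like the projections in the clean Gaussian model. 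Since the ECE depends on the classifier only through the distribution of the scalar score $\hat\theta^\top\hat h(x)$ (and analogously for the Mixup score), this is the bridge I need.

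First I would make the estimators concrete in the latent space. Because $\hat h$ is chosen so that $\hat h\circ g\approx\mathrm{id}_{\R^p}$, we have $\hat h(x_i)=\hat h(g(z_i))\approx z_i$, so that $\hat\theta=\sum_i \hat h(x_i)y_i/n\approx \sum_i z_i y_i/n$ and likewise $\hat\theta^{mix}$ matches the latent-space Mixup estimator of the supervised analysis, up to errors controlled by the $L$-Lipschitz property of $\hat h$ and the approximation quality of $\hat h\circ g$. I would quantify these errors with standard Gaussian concentration on $\{z_i\}$, establishing with high probability that $\|\hat\theta-\theta^*\|$ and $\|\hat\theta^{mix}\|$ concentrate at the same rates used inside the proof of Theorem~\ref{thm:helpcalibration}; here the hypotheses $p/n\in(c_1,c_2)$ and $\|\theta\|_2<C$ play exactly the roles they did before. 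The Lipschitz constant $L$ enters to control how deviations in $z$-space propagate to $\hat h(x)$-space, keeping the perturbations $o(1)$.

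Next I would write the ECE for $\hat{\mathcal{C}}^{mix}$ and $\hat{\mathcal{C}}$ as integrals against the score distribution and substitute the latent representation. Using Assumption~\ref{assump}, each expectation over the true score law $p_{R_1}$ may be replaced by the corresponding expectation over the clean Gaussian law $p_{R_2}$ at the cost of an additive error bounded by $\E_{R_1}[|\delta_u|]\cdot\|(\text{bounded integrand})\|_\infty=o(1)$, since the calibration integrand $|\bP(\hat y=y\mid \hat p=v)-v|$ is uniformly bounded by $1$. After this replacement, the quantity $\ECE(\hat{\mathcal{C}}^{mix})-\ECE(\hat{\mathcal{C}})$ equals the corresponding difference computed entirely within the clean Gaussian model of Definition~\ref{model:class}, plus an $o(1)$ remainder. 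Theorem~\ref{thm:helpcalibration} guarantees that, for a suitable choice of $\alpha,\beta>0$, this clean-model difference is negative and bounded away from $0$ (strictly, it is negative with high probability; I would extract a uniform negative gap from its proof, or argue the gap does not vanish faster than the $o(1)$ error). Choosing $n,p$ large enough that the $o(1)$ remainder is dominated by this gap yields $\ECE(\hat{\mathcal{C}}^{mix})<\ECE(\hat{\mathcal{C}})$ with high probability.

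The main obstacle is the last step: I must ensure the $o(1)$ distributional-approximation error from Assumption~\ref{assump} does not swamp the calibration gap from Theorem~\ref{thm:helpcalibration}. The risk is that the clean-model improvement is itself only of order $o(1)$ as $p,n\to\infty$, so a crude $o(1)$ bound on the error is insufficient; I would need a quantitative lower bound on $|\ECE(\hat{\mathcal{C}}^{mix})-\ECE(\hat{\mathcal{C}})|$ in the clean model and a matching rate for $\E_{R_1}[|\delta_u|]$ and for the Lipschitz-propagated estimator errors, then verify the former dominates. A secondary technical point is that Assumption~\ref{assump} is stated for a fixed direction $v$ with an accompanying $\theta^*$, whereas I apply it at the data-dependent directions $\hat\theta,\hat\theta^{mix}$; I would handle this either by conditioning on the training data (so the directions are fixed when invoking the assumption) or by a covering/uniformity argument over the high-probability range of $\hat\theta$, taking care that the $\theta^*$ supplied by the assumption remains consistent with the generative model across the relevant directions.
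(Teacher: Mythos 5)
Your proposal follows essentially the same route as the paper's proof: both use Assumption~\ref{assump} to replace the law of the score $\hat\theta^\top \hat h(x)$ by the clean latent Gaussian law at an additive $o(1)$ cost in ECE (the calibration integrand being bounded by $1$), show the Mixup estimator collapses to $(1-t)\hat\theta(0)+o_P(1)$ via concentration of $\hat h(x_i)$, and then rerun the supervised analysis of Theorem~\ref{thm:helpcalibration}, whose calibration gap is $\Omega(1)$ and therefore dominates the $o(1)$ transfer error --- exactly the resolution you anticipate for your ``main obstacle.'' Your secondary concern about applying the assumption at data-dependent directions is handled in the paper implicitly in the way you suggest, by conditioning on the training data so that $\hat\theta$ and $\hat\theta^{mix}$ are fixed directions relative to the independent test point.
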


\begin{figure*}[t!]
\centering
\begin{subfigure}[b]{0.7\columnwidth}
  \includegraphics[width=\textwidth, height=0.7\textwidth]{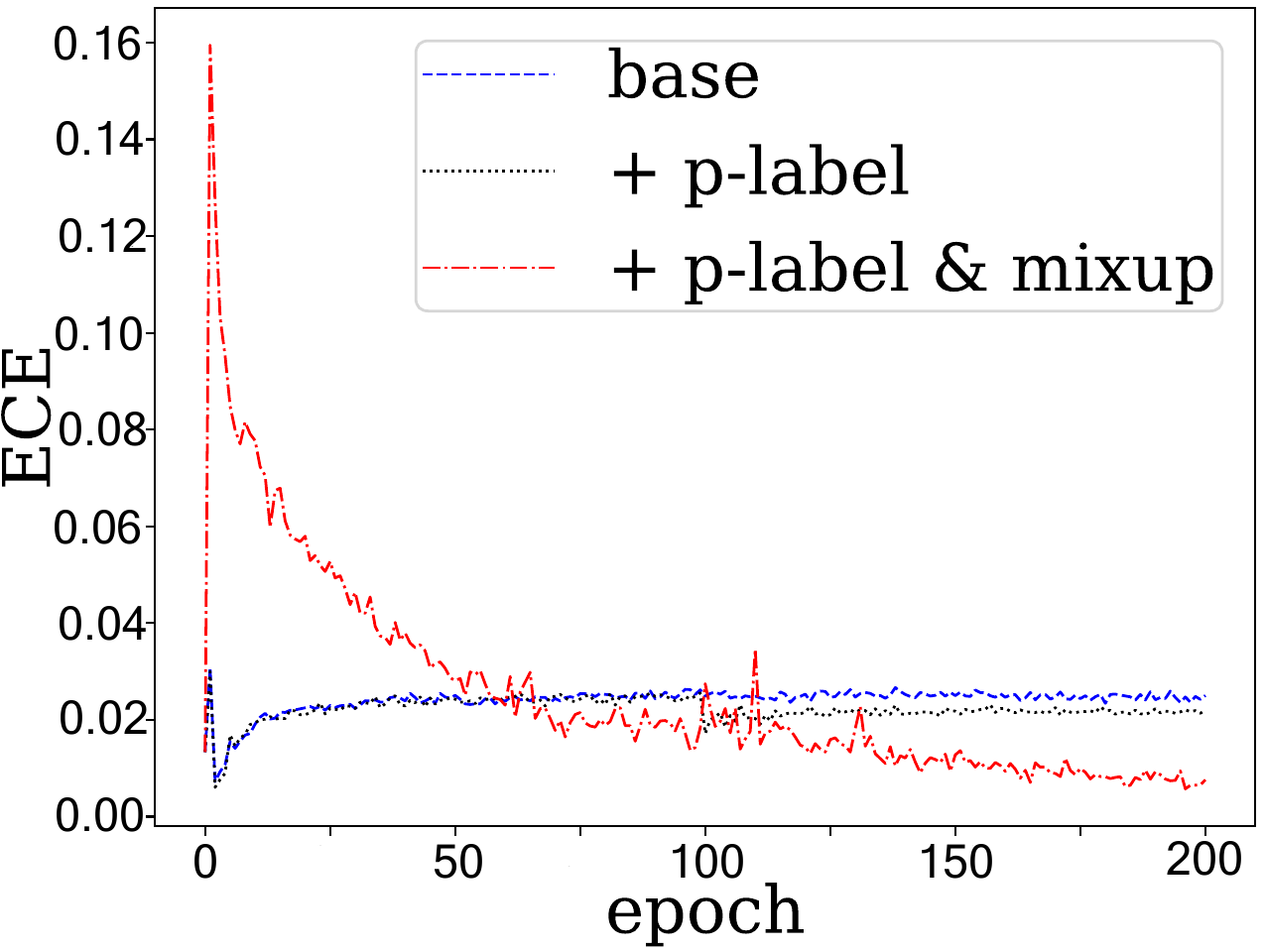}
 \caption{Kuzushiji-MNIST}
 \label{subf:kmnist}
\end{subfigure}
\hskip 15mm
\begin{subfigure}[b]{0.7\columnwidth}
  \includegraphics[width=\textwidth, height=0.7\textwidth]{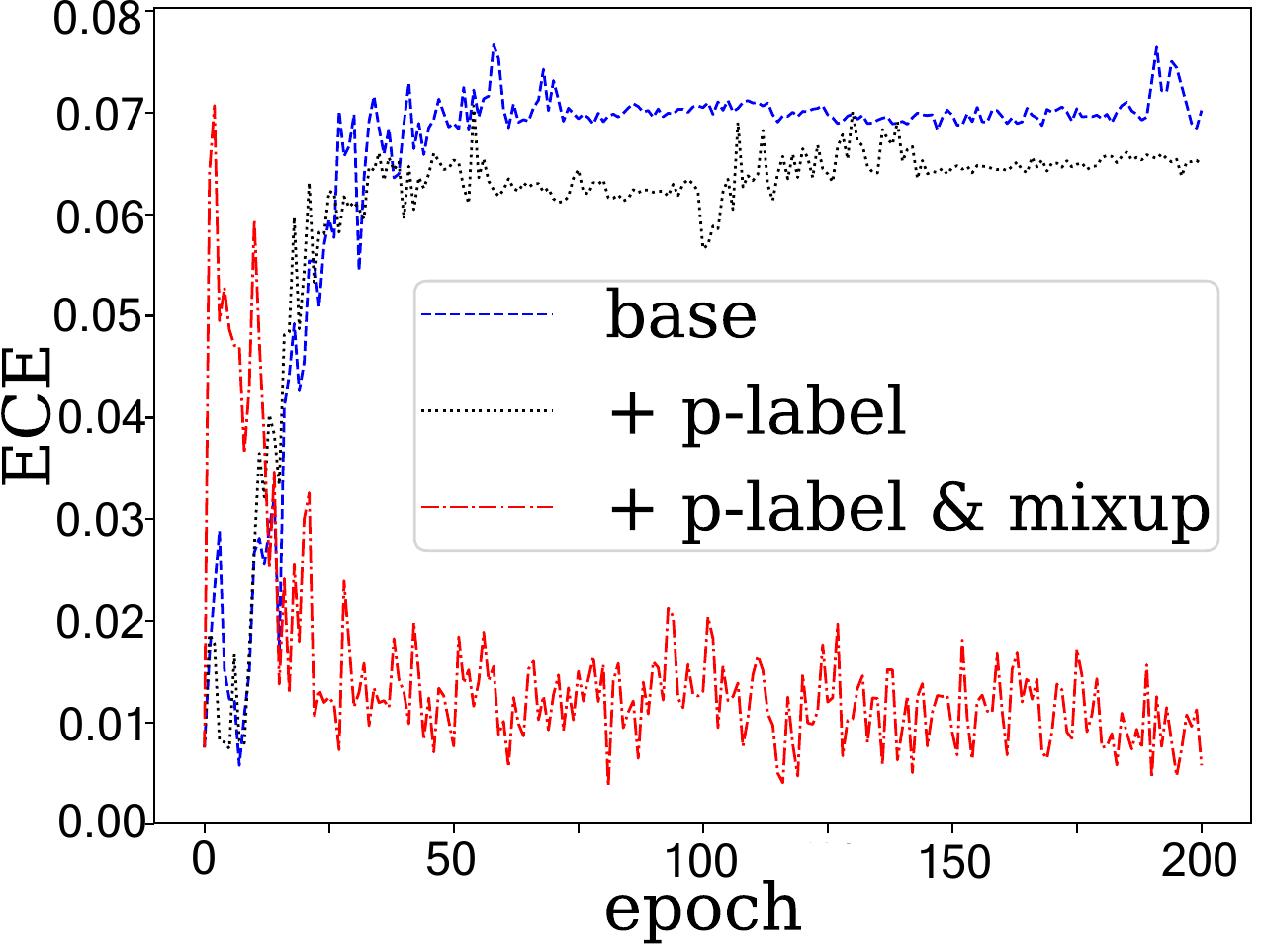}
 \caption{Fashion-MNIST}
 \label{subf:fmnist}
\end{subfigure}
\hskip 15mm
\begin{subfigure}[b]{0.7\columnwidth}
  \includegraphics[width=\textwidth, height=0.7\textwidth]{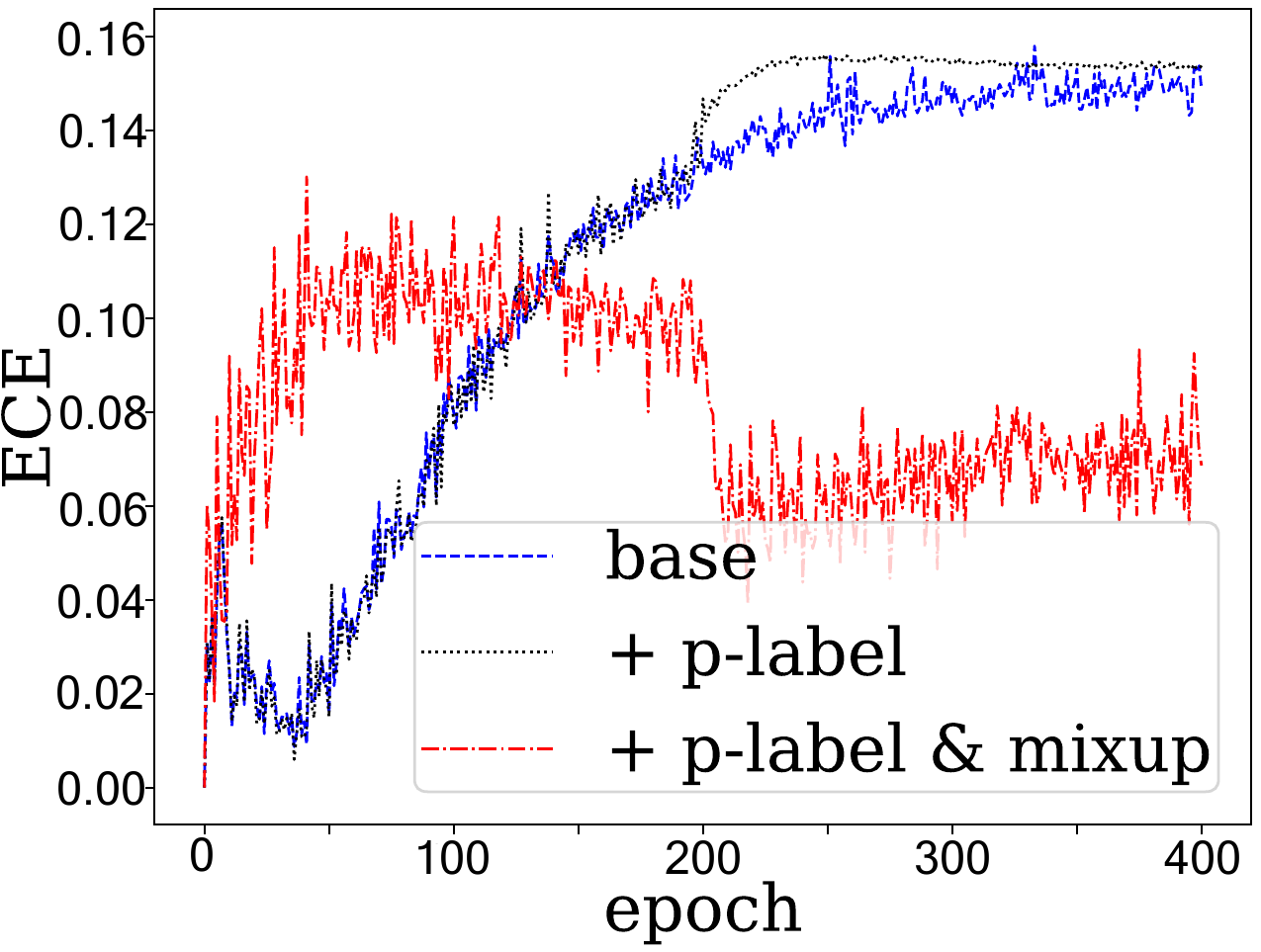}
 \caption{CIFAR-10}
 \label{subf:cifar10}
\end{subfigure}
\hskip 15mm
\begin{subfigure}[b]{0.7\columnwidth}
  \includegraphics[width=\textwidth, height=0.7\textwidth]{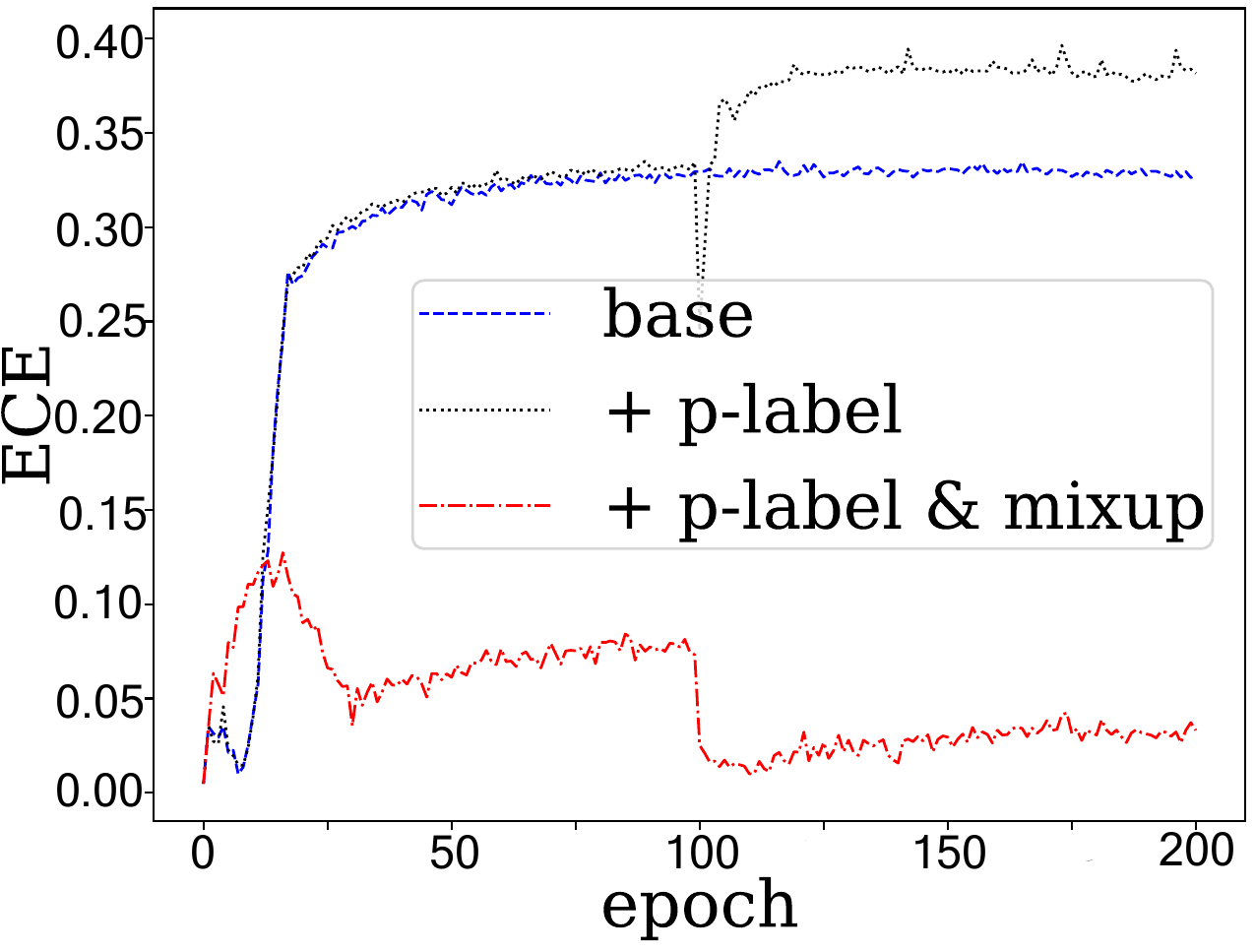}
 \caption{CIFAR-100}
 \label{subf:cifar100}
\end{subfigure}
\caption{ECE calculated for ResNets on varieties of data sets. In (a) and (b), using only pseudo-label algorithm improves calibration, while in (c) and (d), using only pseudo-label algorithm hurts calibration. Further applying Mixup in the last step of pseudo-label algorithm promotes calibration in both cases. The pseudo-labels (or p-labels in short) are inserted into training at the midpoint of the entire training: i.e., at epoch = 100 for (a), (b) and (d) and epoch = 200 for (c).} 
\label{fig:7} 
\end{figure*}
\section{Mixup Improves Calibration in Semi-supervised Learning} \label{sec:2}
Data augmentation by incorporating cheap unlabeled data from multiple domains is a powerful way to improve prediction accuracy especially when there is limited labeled data. One of the commonly used semi-supervised learning algorithms is the pseudo-labeling algorithm \citep{chapelle2009semi}, which first trains an initial classifier $\hat{\mathcal{C}}_{init}$ on the labeled data, then assigns pseudo-labels to the unlabeled data using the $\hat{\mathcal{C}}_{init}$. Lastly, using the combined labeled and pseudo-labeled data to perform supervised learning and obtain a final classifier $\hat{\mathcal{C}}_{final}$. Previous work has shown that the pseudo-labeling algorithm has many benefits such as improving prediction accuracy and robustness against adversarial attacks \citep{carmon2019unlabeled}. However, as we observe from Figure \ref{subf:cifar10} and \ref{subf:cifar100}, incorporating unlabeled data via the pseudo-labeling algorithm does not always improve calibration; sometimes pseudo-labeling even hurts calibration. We find that further applying Mixup at the last step of pseudo-labeling algorithm mitigates this issue and improves calibration as shown in Figure \ref{fig:7}. The details of the experimental setup are included in the Appendix

\begin{algorithm}[b!]
   \caption{The pseudo-labeling algorithm }
   \label{alg:example}
\textbf{Step 1:} Obtain an initial classifier 
$$\hat{\mathcal{C}}_{init}(x)=sgn(\hat\theta_{init}^\top x),$$ 
where $\hat\theta_{init}=\sum_{i=1}^{n_l} x_iy_i/n_l.
$
\vspace{0.1cm}

\textbf{Step 2:} Apply $\hat{\mathcal{C}}_{init}$ on the unlabeled data set $\{ x^u_i\}_{i=1}^{n_u}$, and obtain pseudo-labels $y^u_i=\hat{\mathcal{C}}_{init}(x^u_i)$ for $i\in[n_u]$.
\vspace{0.1cm}

\textbf{Step 3:} Obtain the final classifier $\hat{\mathcal{C}}_{final}(x)=sgn(\hat\theta_{final}^\top x)$, where
$$
\hat\theta_{final}=\frac{1}{n_l+n_{u}}\left(\sum_{i=1}^{n_l} x_i y_i+\sum_{i=1}^{n_{u}} x^u_i y^u_i\right)$$
\end{algorithm}

We justify the empirical findings above by theoretically analyzing the calibration in the semi-supervised learning setting. 
Specifically, we assume we have $n_l$ labeled data points $\{x_i, y_i\}_{i=1}^{n_l}$ and $n_u$ unlabeled data points $\{x^u_i\}_{i=1}^{n_u}$  i.i.d. sampled from the $(\theta^*,\sigma)$-Gaussian model in Definition~\ref{model:class}. The pseudo-labeling algorithm is the same as the one considered in \citet{carmon2019unlabeled}, which is shown in Algorithm \ref{alg:example}.

We then present two theorems. The first theorem demonstrates that when the labeled data is not sufficient, then under some mild conditions, the unlabeled data will help the calibration.
The second theorem characterizes settings where the standard pseudo-labeling algorithm (Algorithm \ref{alg:example}) 
makes calibration worse and increases ECE. 


  \begin{theorem}\label{thm:semi1}
Suppose $C_1\sqrt{p/n_l}\le\|\theta\|\le C_2\sqrt{p/n_l}$ for some universal constant $C_1<1/2$ and $C_2>2$, when $p/n_l$, $\|\theta\|$, $n_u$ are sufficiently large, we have with high probability, $$
ECE(\hat{\mathcal{C}}_{final})<ECE(\hat{\mathcal{C}}_{init}).
$$
\end{theorem}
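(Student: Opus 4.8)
The plan is to reduce $\ECE$ to a function of two scalar summaries of the classifier, the alignment $m=\hat\theta^\top\theta^*$ and the squared norm $\kappa=\|\hat\theta\|^2$, and then to track how the pseudo-labeling step moves this pair. Reusing the computation behind Theorem~\ref{thm:helpcalibration}: under the Gaussian model $\hat\theta^\top x\mid y=k\sim\cN(k\,m,\sigma^2\kappa)$, so a Bayes computation gives, at prediction confidence $v$, $\Pr(\hat y=y\mid \hat p=v)=1/\bigl(1+((1-v)/v)^{\rho}\bigr)$ with the single \emph{slope ratio} $\rho=m/\kappa=\hat\theta^\top\theta^*/\|\hat\theta\|^2$, while the law of $\hat p$ is the push-forward of the mixture $\tfrac12\cN(m,\sigma^2\kappa)+\tfrac12\cN(-m,\sigma^2\kappa)$ through the logistic link of Section~\ref{sec:supervised learning}. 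With the normalization used there, $\rho=1$ (i.e.\ $\hat\theta=\theta^*$) is exactly the calibrated point, and $\ECE$ becomes an explicit one-dimensional integral of $|q_\rho(w)-v(w)|$ against the density of $|W|$, $W=\hat\theta^\top x$. The first key step is a monotonicity lemma: because $\|\hat\theta\|^2>\hat\theta^\top\theta^*$ for both classifiers (so $\rho<1$, the overconfident regime), and because a Laplace estimate of the integral gives $\ECE\asymp \mathrm{poly}\cdot e^{-\mu^2/2}$ with signal-to-noise ratio $\mu=m/(\sigma\sqrt{\kappa})$, at fixed alignment $m$ the error $\ECE$ is increasing in $\kappa$. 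Hence it suffices to show the two classifiers share, to leading order, the same alignment $m\approx\|\theta^*\|^2$ while $\|\hat\theta_{final}\|<\|\hat\theta_{init}\|$.

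First I would pin down $\hat{\mathcal C}_{init}$. Writing $x_iy_i=\theta^*+\sigma\xi_i$ with $\xi_i\sim\cN(0,I)$ gives $\hat\theta_{init}=\theta^*+\tfrac{\sigma}{\sqrt{n_l}}\bar\xi$, $\bar\xi\sim\cN(0,I)$, so standard Gaussian concentration yields, with high probability, $\|\hat\theta_{init}\|^2=\|\theta^*\|^2+\sigma^2 p/n_l\,(1+o(1))$ and $\hat\theta_{init}^\top\theta^*=\|\theta^*\|^2(1+o(1))$. Under the hypothesis $\|\theta^*\|\asymp\sqrt{p/n_l}$, these give $m_{init}=\Theta(\kappa_{init})$, keeping $\rho_{init}$ bounded away from $0$ and $1$, while the initial SNR $\mu_{init}=\hat\theta_{init}^\top\theta^*/(\sigma\|\hat\theta_{init}\|)\asymp\sqrt{p/n_l}$ diverges, which is precisely what makes the pseudo-labels reliable.

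The crux is the pseudo-labeling step. Conditioning on $\hat\theta_{init}$ and using that the $\{x^u_i\}$ are independent of the labeled sample, the summands $x^u_i y^u_i=x^u_i\,\sgn(\hat\theta_{init}^\top x^u_i)$ are i.i.d., and a direct Gaussian integral gives their conditional mean $(2\Phi(a/\sigma)-1)\,\theta^*+2\sigma\phi(a/\sigma)\,u$, where $u=\hat\theta_{init}/\|\hat\theta_{init}\|$, $a=u^\top\theta^*$, and $\Phi,\phi$ are the standard Gaussian CDF and density. Since $a/\sigma\asymp\mu_{init}\to\infty$, the bias factor $2\Phi(a/\sigma)-1\to1$ and the orthogonal term is exponentially small, so this mean is $\theta^*(1+o(1))$. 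A $p$-dimensional concentration bound for the sub-Gaussian (conditionally i.i.d.) summands then gives $\hat\theta_{final}=\theta^*+\zeta$ with $\|\zeta\|^2=\sigma^2 p/n_u\,(1+o(1))$, hence $\hat\theta_{final}^\top\theta^*=\|\theta^*\|^2(1+o(1))$ and $\|\hat\theta_{final}\|^2=\|\theta^*\|^2+\sigma^2 p/n_u\,(1+o(1))$. Because $n_u\gg n_l$, the excess norm $\sigma^2 p/n_u$ is far below the initial excess $\sigma^2 p/n_l$, so at essentially equal alignment $\|\hat\theta_{final}\|<\|\hat\theta_{init}\|$; the monotonicity lemma then yields $\ECE(\hat{\mathcal C}_{final})<\ECE(\hat{\mathcal C}_{init})$ with high probability.

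The main obstacle I anticipate is this third step: I must simultaneously control the pseudo-label error rate and the high-dimensional fluctuation of $\tfrac1{n_u}\sum_i x^u_i y^u_i$ about its conditional mean, while correctly handling the dependence of the pseudo-labels on $\hat\theta_{init}$ (cleanly isolated by conditioning) and checking that the exponentially small bias does not corrupt the $\Theta(\sqrt{p/n_u})$ fluctuation. A secondary difficulty is making the ECE-monotonicity quantitative: the naive saddle point of the integrand sits at an infeasible negative value of $w$, so the dominant mass comes from moderate confidences near the peak of $|q_\rho-v|$, and the estimate $\ECE\asymp e^{-\mu^2/2}$ must be backed by matching upper and lower bounds rather than a formal stationary-phase argument. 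The hypothesis $C_1\sqrt{p/n_l}\le\|\theta^*\|\le C_2\sqrt{p/n_l}$ enters exactly here, forcing $\mu_{init}\to\infty$ (reliable pseudo-labels) while keeping $m=\Theta(\kappa)$, so that the comparison stays strictly inside the overconfident regime.
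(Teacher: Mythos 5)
Your proposal is correct in outline and follows essentially the same skeleton as the paper's proof: reduce ECE to the alignment ratio $\rho=\hat\theta^\top\theta^*/\|\hat\theta\|^2$ (your pair $(m,\kappa)$ enters the comparison only through this ratio and the signal-to-noise level), show that under the window $C_1\sqrt{p/n_l}\le\|\theta^*\|\le C_2\sqrt{p/n_l}$ the initial classifier has $\rho_{init}\le C_2^2/(C_2^2+1)$ bounded away from $1$, show that pseudo-labeling yields $\rho_{final}$ much closer to $1$ from below, and conclude by monotonicity. The one step where you genuinely deviate is the pseudo-labeling analysis, and your version is the sounder one: you condition on $\hat\theta_{init}$ and compute the exact conditional mean $\E[x^u y^u\mid\hat\theta_{init}]=(2\Phi(a/\sigma)-1)\theta^*+2\sigma\phi(a/\sigma)u$, so both bias terms are exponentially small because $a/\sigma\asymp\sqrt{p/n_l}$ is large. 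The paper instead writes $\hat\theta_{final}=\gamma\theta+\tilde\delta$ (with $\gamma$ the signed fraction of correct pseudo-labels) and asserts that the first-coordinate bias satisfies $|\E[\epsilon_i^{u(1)}y_i^u]|=\Omega_P(1)$; that quantity is exactly $2\sigma\phi(a/\sigma)$, so the paper's lower bound is wrong in the stated regime (harmlessly so, since a smaller bias only helps the final inequality). Your route buys a correct, closed-form account of the bias; the paper's buys a shorter bookkeeping of the error indicators.

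Two caveats apply to your argument and the paper's alike. First, the monotonicity step is heuristic in both: ECE is an integral against the law of $v=\hat\theta^\top X$, which itself depends on $(m,\kappa)$ and differs between the two classifiers, so neither the paper's implicit claim that ECE is a decreasing function of $\rho$ alone, nor your Laplace estimate $\mathrm{ECE}\asymp e^{-\mu^2/2}$, is yet a proof; you correctly flag that matching upper and lower bounds are needed, especially since $\mathrm{ECE}(\hat{\mathcal{C}}_{init})$ is itself exponentially small in $\|\theta^*\|^2$ and must dominate the exponentially small miscalibration of $\hat{\mathcal{C}}_{final}$. Second, your claim that the comparison stays strictly in the overconfident regime ($\rho_{final}<1$) can fail if $n_u$ is allowed to grow so fast that $p/n_u$ drops below the exponentially small pseudo-label error terms: in the limit $n_u\to\infty$ with everything else fixed, $\hat\theta_{final}$ converges to $(2\Phi(a/\sigma)-1)\theta^*+2\sigma\phi(a/\sigma)u$, whose ratio exceeds $1$, i.e., the final classifier becomes slightly under-confident. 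In that regime one must instead compare $|1-\rho_{final}|$ (exponentially small) against $1-\rho_{init}=\Omega(1)$, rather than invoke one-sided monotonicity. This gap is real but repairable, and it is equally present in the paper's proof, where it is masked by the erroneous $\Omega_P(1)$ bias claim that artificially keeps $\|\tilde\delta\|^2=\Omega_P(1)$ and hence $\rho_{final}<1$.
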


Meanwhile, in some cases, for instance, when the labeled data is sufficient, the pseudo-labeling algorithm may hurt the calibration, as shown in the following theorem.

\begin{theorem}\label{thm:semi2}
If $C_1\|\theta\|<C_2$, $p<C_3$ for some constants $C_1,C_2, C_3>0$. Let $n_l$ and $n_{u}\to\infty$, then with high probability, $$
ECE(\hat{\mathcal{C}}_{init})<ECE(\hat{\mathcal{C}}_{final}).
$$
\end{theorem}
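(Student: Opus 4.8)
The plan is to exploit the fact that this regime is exactly opposite to Theorem~\ref{thm:helpcalibration}: the dimension $p$ is bounded while $n_l,n_u\to\infty$, so every empirical average concentrates at the parametric rate and the high-dimensional noise disappears. First I would show, by the law of large numbers together with a standard sub-Gaussian concentration bound (each summand $x_iy_i$ has bounded covariance once $p$ and $\|\theta\|$ are bounded), that with high probability $\hat\theta_{init}=\theta^*+O(1/\sqrt{n_l})$, so that $\hat{\mathcal C}_{init}$ behaves like the classifier with parameter $\theta^*$. The component of $\hat\theta_{init}$ perpendicular to $\theta^*$ is $O(1/\sqrt{n_l})$ and feeds only into lower-order error terms, so to leading order $\hat\theta_{init}$ is the positive multiple $\|\theta^*\|\,u$ of $u\defeq\theta^*/\|\theta^*\|$.

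Next I would identify the limit of $\hat\theta_{final}$. Conditioning on $\hat\theta_{init}$, the pseudo-labelled average $\tfrac1{n_u}\sum_i x^u_i y^u_i$ concentrates on $\bE[x\,\sgn(\hat\theta_{init}^\top x)]$; by continuity of this map near $\hat\theta_{init}=\theta^*$ together with the first step, this equals $c\,u+o(1)$ with $c=\bE\,|\,\|\theta^*\|+\sigma Z\,|$ and $Z\sim\cN(0,1)$. The key quantitative fact is that $c>\|\theta^*\|$ (Jensen applied to the convex map $t\mapsto|t|$, strict for $\sigma>0$): pseudo-labelling \emph{inflates} the effective signal. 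Consequently $\hat\theta_{final}\to a_{final}\,u$ with $a_{final}=\tfrac{n_l}{n_l+n_u}\|\theta^*\|+\tfrac{n_u}{n_l+n_u}c>\|\theta^*\|=a_{init}$, and under the stated conditions (which keep $\|\theta\|$ bounded and the signal-to-noise ratio $\|\theta\|/\sigma$ small enough) the inflation is strong enough that $a_{final}$ overshoots the perfectly-calibrated value $2\|\theta^*\|$; indeed, as $\|\theta^*\|\to0$ one has $c\to\sigma\sqrt{2/\pi}\gg 2\|\theta^*\|$.

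Then I would reduce $\ECE$ to a one-dimensional quantity. Because both classifiers use the Bayes-optimal direction $u$, they have identical accuracy, and the reported confidence $\hat p$ is a monotone function of $|s|$ with $s\defeq u^\top x$, so conditioning on $\hat p$ is equivalent to conditioning on $|s|$. Writing $q(t)=(1+e^{-2\|\theta^*\|t/\sigma^2})^{-1}$ for the true probability of a correct prediction given $|s|=t$ (the posterior computed from Definition~\ref{model:class}) and $g_a(t)=(1+e^{-at/\sigma^2})^{-1}$ for the confidence of a magnitude-$a$ classifier, one obtains the representation $\ECE(a)=\bE_{s}\big|q(|s|)-g_a(|s|)\big|$, where $s$ is drawn from the balanced mixture of $\cN(\pm\|\theta^*\|,\sigma^2)$. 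For each fixed $t>0$ the map $a\mapsto|q(t)-g_a(t)|$ is V-shaped with vertex at $a=2\|\theta^*\|$ (the same vertex for every $t$), so $\ECE(a)$ is strictly unimodal, decreasing on $(0,2\|\theta^*\|)$ and increasing on $(2\|\theta^*\|,\infty)$, vanishing at $a=2\|\theta^*\|$.

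The last and hardest step is to convert the magnitude overshoot into the calibration inequality $\ECE(a_{final})>\ECE(a_{init})=\ECE(\|\theta^*\|)$. Here $a_{init}=\|\theta^*\|$ sits a distance $\|\theta^*\|$ below the minimiser $2\|\theta^*\|$ while $a_{final}$ sits above it, so the claim amounts to showing that the overshoot dominates the undershoot inside the logistic integral. I expect this to be the main obstacle, because $g_a$ is nonlinear and the two deviations are integrated against the same mixture density, so a naive symmetry argument does not suffice; the plan is to use the stated conditions (bounded $p$, and $\|\theta\|$ small relative to $\sigma$, which forces $a_{final}$ close to $c\approx\sigma\sqrt{2/\pi}$ and hence far past $2\|\theta^*\|$) to upper bound $\ECE(\|\theta^*\|)$ by its small-signal value while lower bounding $\ECE(a_{final})$ via the increasing branch of $\ECE(a)$, and to close the gap by monotonicity on $(2\|\theta^*\|,\infty)$. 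Finally I would propagate the $O(1/\sqrt{n_l})+O(1/\sqrt{n_u})$ fluctuations through the Lipschitz map $a\mapsto\ECE(a)$ to upgrade the limiting strict inequality to a with-high-probability statement for all sufficiently large $n_l,n_u$.
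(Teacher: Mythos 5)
Your steps 1--2 are correct and in fact reproduce the paper's own computation in cleaner form: the paper writes $\hat\theta_{final}=\gamma\theta+\tilde\delta$ and shows that, because $y^u_i=\sgn(\hat\theta_{init}^\top x^u_i)$ is correlated with the noise, $\tilde\delta$ has an $\Omega_P(1)$ component along $\hat\theta_{init}$; your limit $c=\E\left|\|\theta^*\|+\sigma Z\right|$ is exactly $\gamma\|\theta^*\|+\E[\sigma Z\,\sgn(\|\theta^*\|+\sigma Z)]$, and your Jensen argument for $c>\|\theta^*\|$ is the tidiest way to see the inflation. The reduction of ECE to a one-dimensional functional along $u$ also matches the paper.

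The genuine gap is your calibration point, and it makes step 4 unsalvageable rather than merely hard. You take the confidence of a magnitude-$a$ classifier to be $g_a(t)=(1+e^{-at/\sigma^2})^{-1}$, reading Eq.~\eqref{eq:score} literally, so that perfect calibration sits at $a=2\|\theta^*\|$. But the paper's proofs use the plug-in posterior of the Gaussian model, $(1+e^{-2\hat\theta^\top x/\sigma^2})^{-1}$ (in the appendix the confidence is $\frac{1}{e^{-2v}+1}$ with $v=\hat\theta^\top X$; Eq.~\eqref{eq:score} is off from this by a factor of $2$, an inconsistency in the paper itself). Under the consistent convention, calibration requires $\hat\theta^\top\theta/\|\hat\theta\|^2=1$, i.e.\ $a=\|\theta^*\|$. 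This collapses the whole difficulty: with $p$ fixed and $n_l\to\infty$, $\hat\theta_{init}\to\theta^*$, so $\hat{\mathcal{C}}_{init}$ is \emph{asymptotically perfectly calibrated} and $ECE(\hat{\mathcal{C}}_{init})\to0$, while $ECE(\hat{\mathcal{C}}_{final})$ stays bounded away from $0$ precisely because $c>\|\theta^*\|$ strictly; the theorem follows with no overshoot/undershoot comparison at all. Under your convention, by contrast, the claim you need ($a_{final}$ past $2\|\theta^*\|$) holds only when $\E\left|\|\theta^*\|+\sigma Z\right|>2\|\theta^*\|$, i.e.\ only for $\|\theta^*\|/\sigma$ below an absolute threshold (numerically about $0.43$). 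For admissible $\|\theta^*\|$ above this threshold --- the theorem permits any $\|\theta^*\|$ bounded by a constant --- one has $\|\theta^*\|<a_{final}<2\|\theta^*\|$, so your own V-shape/unimodality argument yields $ECE(\hat{\mathcal{C}}_{final})<ECE(\hat{\mathcal{C}}_{init})$, the reverse inequality, and at the crossing value $\E\left|\|\theta^*\|+\sigma Z\right|=2\|\theta^*\|$ your $ECE(\hat{\mathcal{C}}_{final})$ would even tend to $0$. So the side condition you anticipate needing ($\|\theta^*\|$ small relative to $\sigma$) is not a hypothesis of the theorem, and no refinement of step 4 can close the argument in the stated generality; the fix is to use the $(1+e^{-2\hat\theta^\top x/\sigma^2})^{-1}$ normalization, after which your steps 1--3 already finish the proof.
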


The above two theorems suggest that the pseudo-labeling algorithm is not able to robustly guarantee improvement in calibration. In the following,
we show that we can mitigate this issue by applying Mixup to the last step in Algorithm \ref{alg:example}. Specifically, we consider the following classifier with Mixup: 
$$\hat{\mathcal{C}}_{mix,final}(x)=sgn(\hat\theta_{mix,final}^\top x),$$
where
$$
\hat\theta_{final,mix}(\lambda)=\bE_{\lambda\sim\cD_\lambda}[\frac{1}{n_l+n_{u}}\sum_{i=1}^{n_l+n_{u}} x^{l,u}_{i,j}(\lambda) y^{l,u}_{i,j}(\lambda)].$$
 Here $\{x^{l,u}_{i,j}(\lambda),y^{l,u}_{i,j}(\lambda)\}_{i,j=1}^{n_l+n_u}$ is the data set obtained by applying Mixup to the pooled data set by combining $\{x_i,y_i\}_{i=1}^{n_l}$ and $\{x^u_i,y^u_i\}_{i=1}^{n_u}$. 
We then have the following result showing Mixup helps the calibration in the semi-supervised setting. 
\begin{theorem}\label{thm:helpcalibrationsemi}
Under the setup described above, and denote the ECE of $\hat{\mathcal{C}}_{final}$ and $\hat{\mathcal{C}}_{mix,final}$ by $ECE(\hat{\mathcal{C}}_{final})$ and  $ECE(\hat{\mathcal{C}}_{mix,final})$ respectively. If $C_1<\|\theta\|<C_2$ for some universal constants $C_1,C_2$ (not depending on $n$ and $p$), then for sufficiently large $p$ and $n_l, n_{u}$, there exists $\alpha,\beta>0$, such that when the Mixup distribution $\lambda\sim Beta(\alpha,\beta)$, with high probability, we have
 $$
ECE(\hat{\mathcal{C}}_{mix,final})<ECE(\hat{\mathcal{C}}_{final}).
$$
\end{theorem}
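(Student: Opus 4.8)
The plan is to reduce $\ECE$ to a one-dimensional over/under-confidence quantity and to show that Mixup acts on the pooled estimator as a pure shrinkage that corrects overconfidence. First I would record the calibration curve of any linear rule $\hat{\mathcal C}=\sgn(\hat\theta^\top x)$ under the $(\theta^*,\sigma)$-Gaussian model. Conditioning on $y=1$, the logit $W=\hat\theta^\top x\sim\cN(a,\sigma^2 b^2)$ with $a=\hat\theta^\top\theta^*$ and $b=\|\hat\theta\|$, so the true accuracy of a point with $|W|=t$ is $A(t)=1/(1+e^{-\kappa_0 t/\sigma^2})$ with $\kappa_0=2a/b^2$, while the reported confidence from \eqref{eq:score} is $P(t)=1/(1+e^{-t/\sigma^2})$. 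Hence
\[
\ECE(\hat{\mathcal C})=\int_0^\infty\big|A(t)-P(t)\big|\,f(t)\,dt ,
\]
where $f$ is the density of $|W|$. Perfect calibration is $\kappa_0=1$, and $\kappa_0<1$ is exactly overconfidence (the reported logit slope exceeds the true one). This reduction is the same one underlying Theorem~\ref{thm:helpcalibration}, so I can reuse it wholesale.

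Next I would compute the Mixup estimator on the pooled (labeled plus pseudo-labeled) set of size $N=n_l+n_u$. Expanding the pairwise interpolation and averaging over all index pairs gives
\[
\hat\theta_{mix,final}=c_1(\alpha,\beta)\,\hat\theta_{final}+(1-c_1(\alpha,\beta))\,\bar x\,\bar y,\qquad c_1(\alpha,\beta)=\E_{\lambda\sim Beta(\alpha,\beta)}\!\big[\lambda^2+(1-\lambda)^2\big]\in(\tfrac12,1],
\]
with $\bar x,\bar y$ the pooled sample means. The labeled labels are balanced and the pseudo-labels $\sgn(\hat\theta_{init}^\top x^u_i)$ are balanced by the symmetry of the unlabeled law, so a concentration bound yields $\bar y=o(1)$ and $\|\bar x\,\bar y\|=o(\|\hat\theta_{final}\|)$; hence $\hat\theta_{mix,final}=c_1\,\hat\theta_{final}\,(1+o(1))$. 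Crucially, the rescaling $\hat\theta_{final}\mapsto c_1\hat\theta_{final}$ leaves both the true accuracy $A(\cdot)$ and the law $f$ of the base logit unchanged and only replaces the reported confidence by $P_{c_1}(t)=1/(1+e^{-c_1 t/\sigma^2})$. Thus $\ECE(\hat{\mathcal C}_{mix,final})=\int_0^\infty|A(t)-P_{c_1}(t)|\,f(t)\,dt$, with $c_1\to1^-$ and decreasing in $\alpha$ as $\alpha\to0^+$, so Mixup simply dials the reported slope down from $1$.

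It then remains to choose $\alpha,\beta$ so the integral strictly decreases, which reduces to two facts. First, monotonicity: if $\kappa_0\le c_1\le1$ then for every $t>0$ one has $A(t)\le P_{c_1}(t)\le P_1(t)$, so the integrand $P_{c_1}(t)-A(t)$ decreases pointwise as $c_1\downarrow$, giving $\ECE(\hat{\mathcal C}_{mix,final})<\ECE(\hat{\mathcal C}_{final})$ for any $c_1\in(\kappa_0,1)$, which is realized by all sufficiently small $\alpha>0$ (this is exactly the sign statement of Theorem~\ref{thm:capacity}). Second, overconfidence: $\kappa_0=2\hat\theta_{final}^\top\theta^*/\|\hat\theta_{final}\|^2<1$ with high probability. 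Establishing this is the heart of the argument and the main new work relative to Theorem~\ref{thm:helpcalibration}: I would condition on $\hat\theta_{init}$ (a function of the labeled data alone) and, by Gaussian concentration for the unlabeled sums, evaluate the high-probability values of $\hat\theta_{final}^\top\theta^*$ and $\|\hat\theta_{final}\|^2$ including the pseudo-label terms $x^u_i\,\sgn(\hat\theta_{init}^\top x^u_i)$. Because these terms are by construction positively aligned with $\hat\theta_{init}$, they inflate $\|\hat\theta_{final}\|$ relative to its projection onto $\theta^*$, driving $\kappa_0$ below $1$ in the regime $C_1<\|\theta\|<C_2$.

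The main obstacle is precisely this control of the pseudo-labeled estimator: the assignment $\sgn(\hat\theta_{init}^\top x^u_i)$ is a nonlinear functional of the very sample $x^u_i$ that it multiplies, so the summands are neither unbiased nor independent of their signs. I would neutralize this by freezing $\hat\theta_{init}$, reducing each unlabeled contribution to a one-dimensional Gaussian expectation of the form $\E[x\,\sgn(\eta^\top x)]$ evaluable in closed form by a Stein-type identity, and then verifying that the resulting deterministic $\kappa_0$ satisfies $\kappa_0<1$; finite-sample fluctuations are absorbed by standard concentration once $n_u$ is large. The remaining pieces — negligibility of the cross term $\bar x\,\bar y$ and the elementary pointwise monotonicity — are routine.
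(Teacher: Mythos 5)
Your proposal is correct and follows essentially the same route as the paper's own proof: the same reduction of ECE to the slope quantity $\hat\theta_{final}^\top\theta/\|\hat\theta_{final}\|^2$, the same representation of the pooled Mixup estimator as shrinkage $c_1\hat\theta_{final}$ plus a negligible $\bar x\,\bar y$ cross term, and the same conditioning-on-$\hat\theta_{init}$ selection-bias computation to show the slope is below $1$ (the paper's first-coordinate term $\E[\epsilon_i^{u(1)}y_i^u]=\Omega_P(1)$, obtained in a rotated coordinate system, is exactly your Stein-identity term $\E[x\,\sgn(\eta^\top x)]$). The only cosmetic difference is that you conclude via pointwise monotonicity for any reported slope $c_1\in(\kappa_0,1)$, whereas the paper sets the shrinkage equal to its high-probability bound $c_0$; both yield the same $\Omega(1)$ gap that dominates the $o_P(1)$ approximation errors.
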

{From Theorem \ref{thm:helpcalibrationsemi}, we can see that even though incorporating unlabeled data can sometimes make the model less calibrated, adding Mixup training consistently (i.e., under the same conditions of either Theorem~\ref{thm:semi1} or Theorem~\ref{thm:semi2}) mitigates this issue and provably improves calibration.}

\section{Extension to Maximum Calibration Error}
\label{sec:MCE}
 Here we further investigate how Mixup helps calibration under maximum calibration error. Similar conclusions can be reached for MCE as those for ECE in Section \ref{subsec:mixuphelps}, demonstrating that the effects of Mixup can be found across common calibration metrics.

\begin{figure}[!t]
	\centering
	\begin{subfigure}[b]{0.45\columnwidth}
		\includegraphics[width=\textwidth, height=0.7\textwidth]{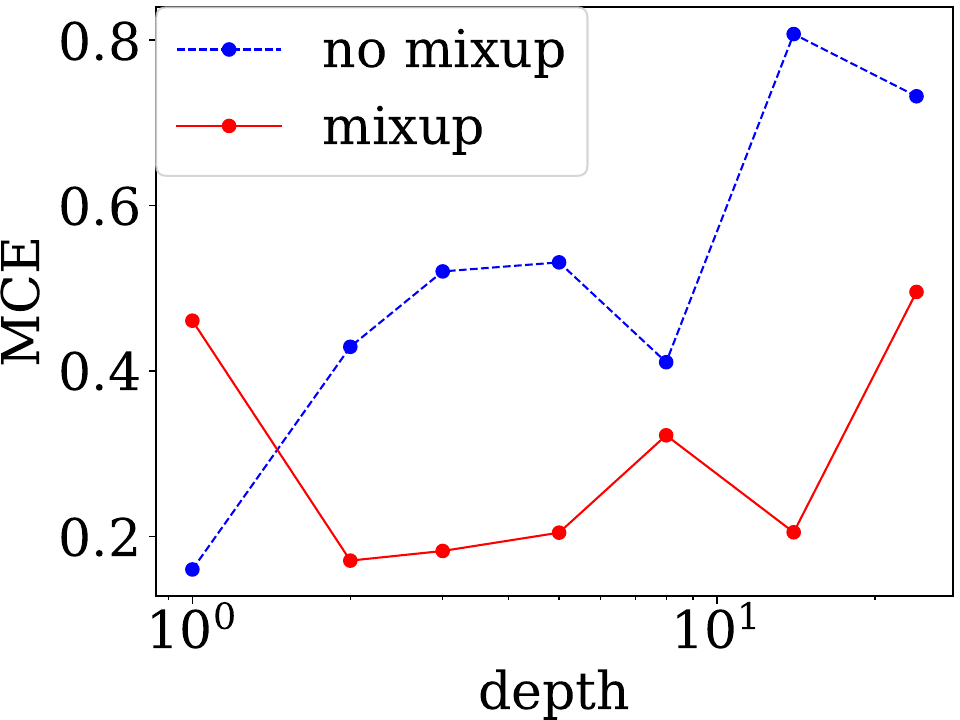}
		\caption{CIFAR-10}
	\end{subfigure}
\hskip 5mm
	\begin{subfigure}[b]{0.45\columnwidth}
		\includegraphics[width=\textwidth, height=0.7\textwidth]{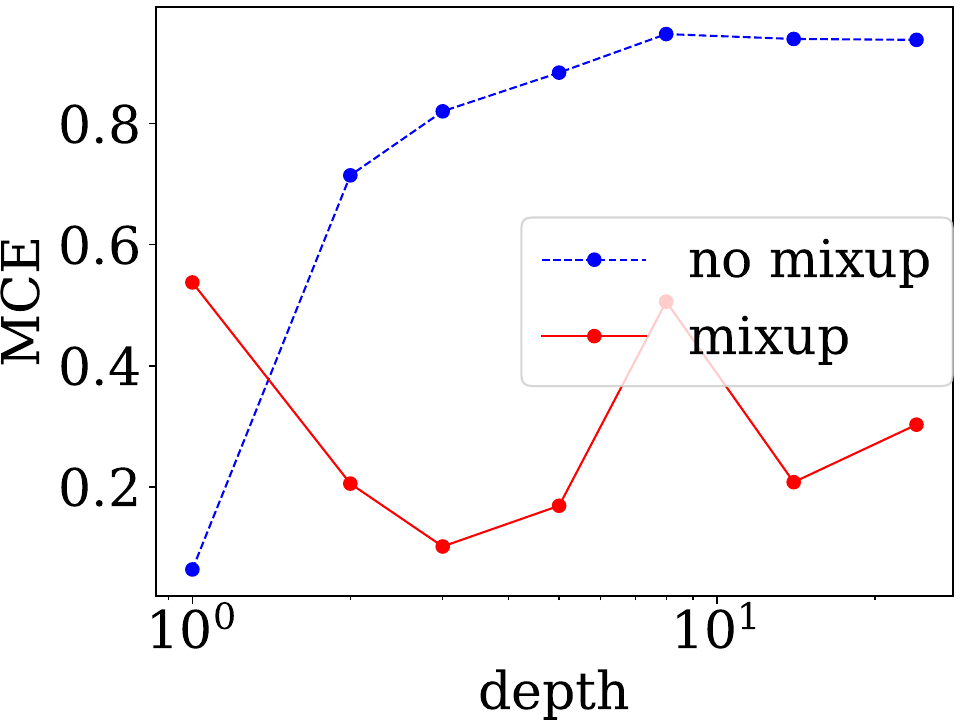}
		\caption{CIFAR-100}
	\end{subfigure}
	\caption{Maximum Calibration Error (MCE) calculated with varying network depth. Mixup augmentation can reduce MCE especially for larger capacity models (deeper networks) compared to these models trained without Mixup.} 
	\label{fig:mce:1} 
\end{figure}

 \begin{theorem}\label{thm:helpcalibrationMCE}
Under the settings described in Theorem \ref{thm:helpcalibration}, there exists $c_2>c_1>0$, when $p/n\in(c_1,c_2)$ and $\|\theta\|_2<C$ for some universal constants $ C>0$ (not depending on $n$ and $p$), then for sufficiently large $p$ and $n$, there exist $\alpha,\beta>0$, such that when the distribution $\cD_\lambda$ is chosen as $Beta(\alpha,\beta)$, with high probability,
 $$
MCE(\hat{\mathcal{C}}^{mix})<MCE(\hat{\mathcal{C}}).
$$
\end{theorem}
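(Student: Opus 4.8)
The plan is to reduce the maximum calibration error to a monotone one-dimensional function of a single scalar summary of the classifier --- precisely the same quantity that governs ECE in Theorem~\ref{thm:helpcalibration} --- and then to reuse the concentration estimates established there. First I would record the exact calibration curve. Write $\psi(u)=(1+e^{-u})^{-1}$ for the logistic link and $s=\hat\theta^\top x/\sigma^2$, so that $\hat y=\sgn(s)$, $\hat p=\psi(|s|)$, and the support of $\hat p$ is $[1/2,1]$. Conditioning on $y=1$ gives $s\sim\cN(\hat\theta^\top\theta^*/\sigma^2,\,\|\hat\theta\|^2/\sigma^2)$, and a Gaussian density-ratio computation together with the class symmetry $(x,y)\mapsto(-x,-y)$ yields, for $v=\psi(t)$ with $t=\psi^{-1}(v)\ge0$,
\[
\bP(\hat y=y\mid\hat p=v)=\psi(\kappa\,t),\qquad \kappa:=\frac{2\,\hat\theta^\top\theta^*}{\|\hat\theta\|^2}.
\]
Hence the pointwise gap is $g(v)=\psi(\kappa t)-\psi(t)$, and both ECE and MCE depend on the classifier only through $\kappa$; the identical computation for $\hat\theta^{mix}$ produces $\kappa^{mix}=2(\hat\theta^{mix})^\top\theta^*/\|\hat\theta^{mix}\|^2$. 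The only difference from Theorem~\ref{thm:helpcalibration} is that MCE uses the supremum of $|g|$ rather than its $\cD_{\hat p}$-average.

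Second I would prove that MCE is strictly monotone in $\kappa$. Since $\hat p\in[1/2,1]$ we have $\mathrm{MCE}=\sup_{t\ge0}|\psi(\kappa t)-\psi(t)|$, which for $\kappa\in(0,1)$ equals $\sup_{t\ge0}[\psi(t)-\psi(\kappa t)]$ and is attained at an interior $t^\star>0$, because the envelope vanishes at $t=0$ and $t=\infty$ and is positive in between. For each fixed $t>0$ the quantity $\psi(t)-\psi(\kappa t)$ is strictly decreasing in $\kappa$, so if $\kappa<\kappa^{mix}<1$ then, evaluating at the maximizer $t^\star$ of the $\kappa^{mix}$ problem, $\mathrm{MCE}(\kappa^{mix})=\psi(t^\star)-\psi(\kappa^{mix}t^\star)<\psi(t^\star)-\psi(\kappa t^\star)\le\mathrm{MCE}(\kappa)$, a strict decrease. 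Thus it suffices to show that Mixup pushes $\kappa$ strictly closer to $1$ from below.

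Third I would import the concentration facts from the proof of Theorem~\ref{thm:helpcalibration}. Writing $x_i=y_i\theta^*+\sigma\epsilon_i$ gives $\hat\theta=\tfrac12\theta^*+\tfrac{\sigma}{2n}\sum_i y_i\epsilon_i$, so with high probability $\hat\theta^\top\theta^*=\tfrac12\|\theta^*\|^2(1+o(1))$ and $\|\hat\theta\|^2=\tfrac14(\|\theta^*\|^2+\sigma^2p/n)(1+o(1))$, whence $\kappa=4\|\theta^*\|^2/(\|\theta^*\|^2+\sigma^2p/n)+o(1)$. Choosing $c_1>4C^2/\sigma^2$ forces $\kappa\le\kappa_+<1$ uniformly over $p/n\in(c_1,c_2)$ and $\|\theta^*\|\le C$, i.e.\ the classifier is over-confident. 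A direct expansion gives $\hat\theta^{mix}=a\,\hat\theta+\tfrac{b}{n^2}\big(\sum_i x_i\big)\big(\sum_i y_i\big)$ with $a=1-2b$ and $b=\bE_\lambda[\lambda(1-\lambda)]\in(0,1/4)$; the correction has norm $O(\sqrt{p}/n)=o(1)$ while $a\hat\theta$ has constant norm, so $\kappa^{mix}=\kappa/a+o(1)$. Since $a$ ranges over $(1/2,1)$ as $(\alpha,\beta)$ vary, I would fix a constant $a\in(\kappa_+,1)$, which gives $\kappa<\kappa^{mix}<1$ with a constant margin that survives the $o(1)$ fluctuations, and then conclude by the monotonicity step.

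The main obstacle is the same quantitative estimate that drives the ECE result, not anything specific to MCE: controlling the lower-order term $\tfrac{b}{n^2}\big(\sum_i x_i\big)\big(\sum_i y_i\big)$ and the fluctuations of $\hat\theta^\top\theta^*$ and $\|\hat\theta\|^2$ sharply enough that $\kappa<\kappa^{mix}<1$ holds with a constant margin with high probability. This control is already available under the settings of Theorem~\ref{thm:helpcalibration}, so the genuinely new ingredient --- replacing the density-weighted average by a supremum --- is if anything easier than the ECE case, since the pointwise monotonicity above makes the maximizing confidence level irrelevant and removes any dependence on $\cD_{\hat p}$.
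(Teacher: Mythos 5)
Your proposal is correct and follows essentially the same route as the paper's proof: reduce MCE to a one-dimensional comparison of sigmoids governed by the single ratio $\kappa=2\hat\theta^\top\theta^*/\|\hat\theta\|^2$, use concentration of $\hat\theta^\top\theta^*$ and $\|\hat\theta\|^2$ (with $\|\hat\theta\|^2$ inflated by $p/n$) to bound $\kappa$ strictly below the calibrated value, observe that Mixup rescales the estimator by a tunable constant $a=1-2\E_\lambda[\lambda(1-\lambda)]\in(1/2,1)$ up to an $O_P(\sqrt p/n)$ correction, and choose $a$ so the new slope sits strictly between the old one and the calibrated value. Your explicit monotonicity lemma for $\kappa\mapsto\sup_{t\ge0}[\psi(t)-\psi(\kappa t)]$, proved by evaluating at the maximizer of the Mixup problem, is just a cleaner packaging of the paper's pointwise inequality together with its lower bound $|\psi(2v)-\psi(2c_0v)|$ on the gap, so the two arguments coincide in substance.
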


From the above theorem, we can see Mixup can also help decrease the maximum calibration error. Comparing with ECE, from Figure \ref{fig:mce:1}, we can similarly observe that when the model capacity is small, Mixup does not really help. We here provide the following theorem to further illustrate that point.

\begin{theorem}\label{thm:lowdimMCE}
There exists a threshold $\tau=o(1)$ such that if $p/n\le\tau$ and $\|\theta\|_2<C$ for some universal constant $C>0$, given any constants $\alpha,\beta>0$ (not depending on $n$ and $p$), when $n$ is sufficiently large, we have, with high probability, $$
MCE(\hat{\mathcal{C}})<MCE(\hat{\mathcal{C}}^{mix}).
$$
\end{theorem}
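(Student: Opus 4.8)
The plan is to mirror the argument for the low-dimensional ECE result (Theorem~\ref{thm:lowdim}), exploiting the fact that for the Gaussian model the entire calibration curve of a linear score $\hat{\mathcal{C}}(x)=\sgn(\hat\theta^\top x)$ collapses onto a single scalar. Writing $\psi(t)=1/(1+e^{-t})$ for the logistic function, a direct computation (already used in the proof of Theorem~\ref{thm:helpcalibrationMCE}) shows that for fresh test data $x\mid y\sim\cN(y\theta^*,\sigma^2 I)$ the score $s=\hat\theta^\top x/\sigma^2$ is, conditionally on $y=\pm1$, Gaussian with mean $\pm\,\hat\theta^\top\theta^*/\sigma^2$ and variance $\|\hat\theta\|^2/\sigma^2$; hence $\hat p=\psi(|s|)$ and, at confidence level $v=\psi(t)$ with $t\ge0$, one gets $\Pr(\hat y=y\mid\hat p=v)=\psi(\kappa t)$ with the single ``calibration slope''
\[
\kappa\;=\;\frac{2\,\hat\theta^\top\theta^*}{\|\hat\theta\|^2}.
\]
Since $\hat p\ge1/2$ almost surely, the maximum defining MCE runs over $t\ge0$, so $MCE(\hat{\mathcal{C}})=M(\kappa):=\max_{t\ge0}|\psi(\kappa t)-\psi(t)|$ is a function of $\kappa$ alone. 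The first step is to record that $M(1)=0$ and $M$ is strictly increasing on $(1,\infty)$: for $\kappa>1$ and $t>0$ the quantity $\psi(\kappa t)-\psi(t)$ is positive and pointwise increasing in $\kappa$, so the same holds for its maximum.

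The second step is to pin down $\kappa$ and $\kappa^{mix}$ when $p/n\to0$. Because $x_iy_i-\theta^*\sim\cN(0,\sigma^2 I)$, we have $\hat\theta=\theta^*/2+e$ with $e\sim\cN(0,\tfrac{\sigma^2}{4n}I_p)$; chi-square and Gaussian tail bounds give $\|e\|^2=O(p/n)$ and $|e^\top\theta^*|=O(\|\theta^*\|/\sqrt n)$ with high probability, so under $\|\theta^*\|<C$ and $p/n\le\tau=o(1)$ one obtains $\kappa\to4$. For the Mixup estimator, expanding $\sum_{i,j}\tilde x_{i,j}(\lambda)\tilde y_{i,j}(\lambda)$ and taking $\bE_\lambda$ yields the exact decomposition $\hat\theta^{mix}=a\,\hat\theta+b\,w$, where $a=\bE_\lambda[\lambda^2+(1-\lambda)^2]\in[\tfrac12,1)$, $b=\bE_\lambda[\lambda(1-\lambda)]>0$ (so $a=1-2b$), and $w=(\tfrac1n\sum_i x_i)(\tfrac1n\sum_j y_j)$. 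Here $\tfrac1n\sum_j y_j=O_p(1/\sqrt n)$ and $\|\tfrac1n\sum_i x_i\|=O_p(\sqrt{p/n}+1/\sqrt n)$, so $\|w\|=O_p(\sqrt p/n)=o(1)$; combined with $\hat\theta\to\theta^*/2$ this gives $\hat\theta^{mix}\to a\theta^*/2$ and hence $\kappa^{mix}\to 4/a$.

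The third step combines these. Because $\alpha,\beta>0$ are fixed constants, $a<1$ is a fixed constant and the two limits differ by the order-one gap $4/a-4=4(1-a)/a>0$. Choosing the threshold $\tau=\tau_n\to0$ slowly and taking $n$ large forces all the high-probability fluctuations above to be $o(1)$, in particular smaller than $\tfrac14$ of this gap, so with high probability $1<\kappa<\kappa^{mix}$. Monotonicity of $M$ on $(1,\infty)$ from the first step then gives $MCE(\hat{\mathcal{C}})=M(\kappa)<M(\kappa^{mix})=MCE(\hat{\mathcal{C}}^{mix})$, which is the claim.

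I expect the main obstacle to be the quantitative control in the second and third steps: one must verify that, uniformly over the admissible range $p/n\le\tau$, the random deviations of both $\kappa$ and $\kappa^{mix}$ from their deterministic limits are genuinely $o(1)$ and therefore dominated by the order-one separation $4(1-a)/a$, rather than the (conceptually easy) reduction to $\kappa$ or the monotonicity of $M$. In particular the cross term $w$ must be shown to vanish even though $\tfrac1n\sum_i x_i$ can have non-negligible norm when $p$ is not tiny; the cancellation comes from the extra $1/\sqrt n$ carried by $\tfrac1n\sum_j y_j$, and making this rigorous with matching high-probability bounds is where the bookkeeping concentrates.
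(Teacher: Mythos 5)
Your proof is correct and reaches the stated conclusion, but by a genuinely different route than the paper's. The paper recycles the sigmoid formulas derived for Theorem~\ref{thm:helpcalibrationMCE} and argues a dichotomy: when $p/n\to 0$ the standard classifier's slope ratio $\hat\theta(0)^\top\theta/\|\hat\theta(0)\|^2$ tends to $1$, so $MCE(\hat{\mathcal{C}})=O_P(p/n)=o_P(1)$, while the Mixup classifier retains the fixed shrinkage $1-t$ with $t\in(0,1)$, so $MCE(\hat{\mathcal{C}}^{mix})=\max_v\bigl|\tfrac{1}{e^{-2v}+1}-\tfrac{1}{e^{-2(1-t)v}+1}\bigr|+o_P(1)=\Omega(1)$; the comparison is vanishing-versus-nonvanishing. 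You instead collapse each MCE to the one-parameter function $M(\kappa)=\max_{t\ge 0}|\psi(\kappa t)-\psi(t)|$ of a scalar calibration slope, prove $M$ is strictly increasing on $(1,\infty)$, compute the two limiting slopes, and conclude from monotonicity plus the fact that the order-one gap $4(1-a)/a$ dominates the $o_P(1)$ fluctuations. Your route is more robust: it never requires the baseline to be asymptotically perfectly calibrated, only that both slopes lie on the same monotone branch of $M$ and are strictly ordered, and it makes the separation quantitative; the paper's route is shorter given the formulas it already has, and directly exhibits the intended message that the un-mixed classifier is the one that becomes calibrated in the low-dimensional limit. Two caveats are worth recording. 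First, your limiting slopes ($4$ and $4/a$) differ from the paper's ($1$ and $1/(1-t)$, where $a=1-t$) because you read the displayed definitions $\hat\theta=\frac{1}{2n}\sum_i x_iy_i$ and $p_k(x)=\psi(k\hat\theta^\top x/\sigma^2)$ literally, whereas the paper's proofs silently use the self-consistent convention $\hat\theta(0)=\theta+\epsilon_n$ (i.e.\ the $\frac{1}{n}$ normalization) with confidence $\psi(2v)$; this factor-of-two mismatch is the paper's own inconsistency, and your conclusion survives under either reading precisely because your argument only uses the ordering of the slopes, not their absolute values. Second, like the paper's proof, your computation of the limits implicitly requires $\|\theta^*\|$ to be bounded below by a positive constant (if $\theta^*$ were allowed to vanish, both slopes degenerate to $0$ and the strict inequality fails); this is a gap shared with, not introduced beyond, the paper's own argument.
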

Lastly, we provide a similar theorem as Theorem \ref{thm:capacity} to further illustrate that Mixup helps in the high-dimensional (overparametrized) regime.

\begin{theorem}\label{thm:capacityMCE}
For any constant $c_{\max}>0$, $p/n\to c_{ratio}\in(0,c^{\max})$, when  $\theta$ is sufficiently large (still of a constant level), we have for any $\beta>0$, with high probability, the change of ECE by using Mixup, characterized by
$$
\frac{d}{d\alpha}MCE(\hat{\mathcal{C}}^{mix}_{\alpha,\beta})\mid_{\alpha\to0+}
$$ is negative, and monotonically decreasing with respect to $c_{ratio}$. 
\end{theorem}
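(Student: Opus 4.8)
The plan is to run the same argument as the proof of Theorem~\ref{thm:capacity}, replacing the expectation functional underlying ECE by the $\max$ functional underlying MCE and controlling the resulting non-smoothness with an envelope (Danskin) argument. Throughout, write $b_{\alpha,\beta}\defeq\E_{\lambda\sim Beta(\alpha,\beta)}[\lambda(1-\lambda)]=\frac{\alpha\beta}{(\alpha+\beta)(\alpha+\beta+1)}$. First I would reduce the Mixup estimator to a pure shrinkage of $\hat\theta$: expanding $\hat\theta^{mix}_{\alpha,\beta}=\E_\lambda\sum_{i,j}\tilde{x}_{i,j}(\lambda)\tilde{y}_{i,j}(\lambda)/2n^2$ and collecting the $\lambda^2,(1-\lambda)^2$ terms and the cross terms gives $\hat\theta^{mix}_{\alpha,\beta}=(1-2b_{\alpha,\beta})\hat\theta+b_{\alpha,\beta}\,\bar x\,\bar y$ with $\bar x=\sum_i x_i/n$, $\bar y=\sum_i y_i/n$. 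Since $\bar y$ concentrates at $0$ with fluctuation $O(n^{-1/2})$, the cross term is negligible w.h.p., so $\hat\theta^{mix}_{\alpha,\beta}=(1-2b_{\alpha,\beta})\hat\theta+o(1)$. At $\alpha\to0+$ one has $b_{\alpha,\beta}\to0$ (recovering the no-Mixup classifier, consistent with the $Beta(0,\beta)$ convention) and $\frac{d}{d\alpha}b_{\alpha,\beta}\big|_{\alpha\to0+}=\frac{1}{\beta+1}$.

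Next I would obtain a one-dimensional closed form for the calibration curve. Conditioning on $\hat p=v$ is equivalent to conditioning on $|\hat\theta^\top x|=\sigma^2\log\frac{v}{1-v}$, and the Gaussian Bayes computation already used in Theorems~\ref{thm:helpcalibration} and~\ref{thm:helpcalibrationMCE} yields $\Pr(\hat y=y\mid\hat p=v)=1/(1+e^{-q\,\log\frac{v}{1-v}})$, where $q=q(\hat\theta)$ is an explicit constant multiple of $\hat\theta^\top\theta^*/\|\hat\theta\|^2$. The crucial structural fact is scaling invariance: replacing $\hat\theta$ by $\kappa\hat\theta$ sends $q\mapsto q/\kappa$. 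Using the concentration of $\hat\theta^\top\theta^*$ and $\|\hat\theta\|^2$ from Theorem~\ref{thm:helpcalibration}, $q(\hat\theta)$ concentrates w.h.p. at a deterministic $q_\star(c_{ratio})$ that is monotone in $c_{ratio}=p/n$ and is pushed toward the perfectly-calibrated value as $\|\theta\|$ grows. Combined with the shrinkage step, the Mixup slope obeys $q(\hat\theta^{mix}_{\alpha,\beta})=q_\star(c_{ratio})/(1-2b_{\alpha,\beta})+o(1)$, hence $\frac{d}{d\alpha}q(\hat\theta^{mix}_{\alpha,\beta})\big|_{\alpha\to0+}=\frac{2q_\star}{\beta+1}$.

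Writing $g(q)\defeq\max_{\ell\ge0}\big|\,1/(1+e^{-q\ell})-1/(1+e^{-\ell})\,\big|$, we have $MCE(\hat{\mathcal{C}}^{mix}_{\alpha,\beta})=g\big(q(\hat\theta^{mix}_{\alpha,\beta})\big)$ w.h.p. I would then differentiate through the $\max$ by Danskin's theorem: if $\ell^\star(q)$ is the interior maximizer, then $g'(q)=-\ell^\star\,\phi(q\ell^\star)$ with $\phi(u)=e^{-u}/(1+e^{-u})^2>0$, so by the chain rule $\frac{d}{d\alpha}MCE\big|_{\alpha\to0+}=g'(q_\star)\cdot\frac{2q_\star}{\beta+1}$. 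Establishing both assertions then reduces to two statements about the scalar map $q\mapsto q\,g'(q)$: its sign at $q_\star$ (giving negativity of the derivative, since in the relevant overconfident regime $q_\star$ sits where $g'<0$), and $\frac{d}{dq}\big(q\,g'(q)\big)>0$ on the relevant range of $q_\star$ (which, since $q_\star$ is monotone in $c_{ratio}$, yields that the derivative is monotonically decreasing in $c_{ratio}$).

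The main obstacle is exactly the analysis of the $\max$ in the third step. First, unlike the ECE integral, MCE is a supremum, so I must verify that $\ell^\star(q)$ is unique and interior (both endpoints $\ell=0,\infty$ give gap $0$), which legitimizes the envelope formula; this amounts to showing $h_q(\ell)=1/(1+e^{-\ell})-1/(1+e^{-q\ell})$ is unimodal in $\ell$ for each fixed $q$ in range. Second, the monotonicity claim does not hold for all $q\in(0,\infty)$ — $q\,g'(q)$ vanishes at the perfect-calibration slope and is non-monotone globally — so the role of the hypothesis ``$\|\theta\|$ sufficiently large'' is precisely to confine $q_\star(c_{ratio})$, for every $c_{ratio}\in(0,c^{\max})$, to the neighborhood of the perfect-calibration value on which $q\mapsto q\,g'(q)$ is provably monotone, where a second-order Taylor expansion of $g$ makes $\frac{d}{dq}(q\,g'(q))$ tractable. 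The remaining ingredients — the shrinkage identity and the concentration of $q$ — are identical to those already used for Theorems~\ref{thm:helpcalibration} and~\ref{thm:capacity}, and can be quoted rather than re-derived.
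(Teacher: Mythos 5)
Your plan reproduces the paper's proof in different notation: your shrinkage identity is the paper's $\hat\theta(t)=(1-t)\hat\theta(0)+t\epsilon$ with $t=2b_{\alpha,\beta}$, your Danskin step is the paper's differentiation of $\xi(t)$ holding the maximizer $v^*$ fixed, and your derivative $\tfrac{2}{\beta+1}q_\star g'(q_\star)$ coincides with the paper's $-\tfrac{2}{\beta+1}\cdot\tfrac{2v^*e^{-2v^*}}{(1+e^{-2v^*})^2}$ (they are equal via the first-order condition), so the negativity half of your argument is sound and matches the paper. The gap is in the monotonicity half, at exactly the inequality you reduce it to: $\tfrac{d}{dq}\bigl(q\,g'(q)\bigr)>0$ is \emph{false} on the whole relevant range $q\in(0,1)$, including near $q=1$, and the second-order Taylor expansion you propose as the rescue in fact proves the opposite sign. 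Concretely, write $\phi(u)=e^{-u}/(1+e^{-u})^2$ and $F(u)\defeq u\phi(u)$. Danskin gives $q\,g'(q)=-F(q\ell^\star)$, and the inner first-order condition $\phi(\ell^\star)=q\,\phi(q\ell^\star)$ is precisely $F(\ell^\star)=F(q\ell^\star)$. Since $F$ is unimodal with peak at $u_0\approx 1.54$ and $q\ell^\star<\ell^\star$, this forces $q\ell^\star<u_0<\ell^\star$. Now increase $q$: $\ell^\star$ decreases, hence $F(\ell^\star)$ increases (we are right of the peak), hence $F(q\ell^\star)$ increases, hence $q\ell^\star$ increases (we are left of the peak), hence $q\,g'(q)=-F(q\ell^\star)$ strictly \emph{decreases} on $(0,1)$. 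Near $q=1$ the expansion is explicit: the condition $F(\ell^\star)=F(q\ell^\star)$ gives $\ell^\star=u_0\bigl(1+\tfrac{1-q}{2}\bigr)+o(1-q)$, so
$$
q\,g'(q)=-F(u_0)+\tfrac{1}{8}u_0^2\,|F''(u_0)|\,(1-q)^2+o\bigl((1-q)^2\bigr),
$$
which is decreasing in $q$ for $q<1$; numerically $q\,g'(q)\approx-0.222,\,-0.204,\,-0.091$ at $q=0.9,\,0.5,\,0.1$. Since $q_\star$ is decreasing in $c_{ratio}$, your chain of implications then delivers that $\tfrac{d}{d\alpha}MCE\mid_{\alpha\to0+}$ is monotonically \emph{increasing} in $c_{ratio}$ — the opposite of the theorem's second assertion — and taking $\|\theta\|$ large does not help: it only pushes $q_\star$ into the quadratically flat region where the same wrong-signed monotonicity persists.

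You should also know this is not an artifact of your reformulation: the paper's own proof founders on the identical point. It asserts that $-2v^*e^{-2v^*}/(1+e^{-2v^*})^2$ is ``a decreasing function of $v^*$,'' but its own first-order condition $\tfrac{2\rho v^*}{e^{2\rho v^*}+e^{-2\rho v^*}+2}=\tfrac{2v^*}{e^{2v^*}+e^{-2v^*}+2}$ places $v^*$ strictly to the \emph{right} of the peak of $v\mapsto 2v/(e^{2v}+e^{-2v}+2)$, where that map is decreasing, so $-f(v^*)$ is increasing there; correcting this reverses the paper's conclusion too. This is a genuine divergence between MCE and ECE: in Theorem~\ref{thm:capacity} the derivative is a Gaussian \emph{average} $-\E_v[F(2v)]$ with large mean, and the left tail of the average genuinely produces the claimed monotonicity, whereas the MCE derivative is $-F$ evaluated at a single maximizer pinned past the peak. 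Two minor points: your first flagged obstacle (uniqueness and interiority of $\ell^\star$) is real but benign — unimodality of $h_q$ can be checked and both endpoints give zero gap, so the envelope formula is fine; but your side remark that $q\,g'(q)$ vanishes at the perfect-calibration slope is incorrect — $g$ has a kink at $q=1$ with $g'(1^-)=-F(u_0)\neq 0$ — though nothing in your argument hinges on it.
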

\section{Conclusion and Discussion} \label{sec:3}

Mixup is a popular data augmentation scheme and it has been empirically shown to improve calibration in machine learning. In this paper, we provide a theoretical point of view on how and when Mixup helps the calibration, by studying data generative models. We identify that the calibration improvement induced by Mixup is a high-dimensional phenomenon, and that such reduction in ECE becomes more substantial when the dimension is compared to the number of samples. This suggests that Mixup can be especially helpful for calibration in low sample regime where post-hoc calibration approaches like Platt-scaling are not commonly used. 
We further study the relationship between Mixup and calibration in a semi-supervised setting when there is an abundance of unlabeled data. Using unlabeled data alone can hurt calibration in some settings, while combining Mixup with pseudo-labeling can mitigate this issue.

Our work points to a few promising further directions. Since there are many variants of Mixup \citep{berthelot2019mixmatch,verma2019manifold,roady2020improved,kim2020puzzle}, it would be interesting to study how these extensions of Mixup affect calibration. Another interesting direction is to use the analysis and framework developed in this paper to study the semi-supervised setting where the unlabeled data come from a different domain than the target one. It would be interesting to study how the calibration will change by leveraging the out-of-domain unlabeled data.

\section*{Acknowledgements}
The research of Linjun Zhang is partially supported by  NSF DMS-2015378. The research of Zhun Deng is supported by the Sloan Foundation grants, the NSF grant 1763665, and the Simons Foundation Collaboration on the Theory of Algorithmic Fairness. James Zou is supported by funding from NSF CAREER and the Sloan Fellowship.


\bibliography{example_paper,zhun}
\bibliographystyle{icml2022}
\newpage

\appendix 
\onecolumn
\noindent\textbf{\Large Appendix}

\section{Technical Details}

\subsection{Proof of Theorem~\ref{thm:helpcalibration}}\label{sec:start}
\begin{theorem}[Restatement of Theorem \ref{thm:helpcalibration}]
Under the settings described in the main paper, if $p/n\to c$  and $\|\theta\|_2<C$ for some universal constants $c, C>0$ (not depending on $n$ and $p$), then for sufficiently large $p$ and $n$, there exist $\alpha,\beta>0$, such that when the distribution $\cD_\lambda$ is chosen as $Beta(\alpha,\beta)$, with high probability,
 $$
ECE(\hat \cC^{mix})<ECE(\hat \cC).
$$
\end{theorem}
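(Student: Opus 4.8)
The plan is to reduce the entire comparison to a one-dimensional statement about the scalar score $s=\hat\theta^\top x$, exploiting the fact that Mixup acts, to leading order, as a pure shrinkage $\hat\theta^{mix}\approx a\,\hat\theta$ that rescales this score by a factor $a\in(\tfrac12,1)$ while preserving its direction. First I would compute the estimators explicitly. Writing $x_iy_i=\theta^*+\sigma y_i g_i$ with $g_i\sim\cN(0,I)$ gives $\hat\theta=\tfrac12\theta^*+\tfrac{\sigma}{2n}\sum_i y_ig_i$. Expanding $\sum_{i,j}\tilde x_{i,j}(\lambda)\tilde y_{i,j}(\lambda)$ and taking $\bE_\lambda$ yields
$$\hat\theta^{mix}=a\,\hat\theta+\tfrac{b}{2n^2}\big(\textstyle\sum_i x_i\big)\big(\textstyle\sum_i y_i\big),\qquad a=\bE_\lambda[\lambda^2+(1-\lambda)^2],\ b=1-a,$$
and for $\lambda\sim Beta(\alpha,\beta)$ one computes $a=1-\tfrac{2\alpha\beta}{(\alpha+\beta)(\alpha+\beta+1)}$, which ranges over $(\tfrac12,1)$ as $(\alpha,\beta)$ vary and tends to $1$ as $\alpha\to0$. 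The cross term has norm $o_P(1)$ since $\sum_i y_i=O_P(\sqrt n)$ and $\|\sum_i g_i\|=O_P(\sqrt{np})$, so $\hat\theta^{mix}=a\hat\theta+\epsilon$ with $\|\epsilon\|=o_P(1)$.

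Next I would derive a closed form for the calibration curve. Conditionally on the training data, $s=\hat\theta^\top x$ satisfies $s\mid y\sim\cN(y\mu,\sigma^2\nu^2)$ with $\mu=\hat\theta^\top\theta^*$, $\nu^2=\|\hat\theta\|^2$; a Bayes-type density-ratio computation gives, for the logistic $\psi(t)=1/(1+e^{-t})$, the accuracy at confidence level $v$,
$$\bP(\hat y=y\mid\hat p=v)=\psi\big(\kappa\,\psi^{-1}(v)\big),\qquad \kappa:=\frac{2\hat\theta^\top\theta^*}{\|\hat\theta\|^2},$$
so that, with $R=|s|/\sigma^2$ and $v=\psi(R)$,
$$\ECE(\hat\cC)=\bE_R\big[\,|\psi(\kappa R)-\psi(R)|\,\big].$$
Perfect calibration is exactly $\kappa=1$. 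Chi-square concentration gives $\mu\to\tfrac12\|\theta^*\|^2$ and $\nu^2\to\tfrac14(\|\theta^*\|^2+\sigma^2c)$, hence $\kappa\to\kappa^\star=4\|\theta^*\|^2/(\|\theta^*\|^2+\sigma^2c)$. Choosing $c_1=3C^2/\sigma^2$ forces $\kappa^\star<1$ throughout $p/n\in(c_1,c_2)$, i.e. the classifier is strictly \emph{overconfident}, which is the high-dimensional signature that Mixup exploits.

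Finally I would compare the two ECEs by coupling on the same test point. Since $s^{mix}=\hat\theta^{mix\top}x=as+\epsilon^\top x$ with $\epsilon^\top x=o_P(1)$, and $\kappa^{mix}=\kappa/a+o_P(1)$, the Mixup accuracy argument becomes $\kappa^{mix}R^{mix}=\kappa R+o_P(1)$ and the confidence argument $R^{mix}=aR+o_P(1)$, so $\ECE(\hat\cC^{mix})=\bE_R[\,|\psi(\kappa R)-\psi(aR)|\,]+o(1)$. I would then pick $a=(\kappa^\star+1)/2\in(\kappa^\star,1)\cap(\tfrac12,1)$, realized by some $Beta(\alpha,\beta)$. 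For every $R>0$, monotonicity of $\psi$ with $\kappa<a<1$ gives $\psi(\kappa R)<\psi(aR)<\psi(R)$, whence
$$|\psi(\kappa R)-\psi(aR)|=\psi(aR)-\psi(\kappa R)<\psi(R)-\psi(\kappa R)=|\psi(\kappa R)-\psi(R)|,$$
with a per-point gap $\psi(R)-\psi(aR)$ bounded below on a set of positive probability. Integrating yields a deterministic order-one gap $\Delta=\bE_R[\psi(R)-\psi(aR)]>0$, which dominates the $o(1)$ errors, giving $\ECE(\hat\cC^{mix})<\ECE(\hat\cC)$ with high probability.

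The hard part will be making the last reduction rigorous: one must show that the negligible cross term $\epsilon$ and the $o_P(1)$ perturbations of $(\mu,\nu^2,\kappa)$ propagate through the nonlinear ECE functional without erasing the $\Theta(1)$ pointwise gap. This requires uniform Lipschitz/continuity control of $R\mapsto|\psi(\kappa R)-\psi(aR)|$ and of the law of $R$ under the $o_P(1)$ change of $(\mu,\nu^2)$, together with the quantitative claim that $\kappa$ is bounded away from $1$ (not merely below it) so that a single choice of $a$ satisfies $\kappa<a<1$ with high probability.
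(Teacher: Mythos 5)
Your proposal is correct and follows essentially the same route as the paper's own proof: you decompose $\hat\theta^{mix}$ as a shrinkage $a\,\hat\theta$ plus an $o_P(1)$ cross term, derive the closed-form logistic calibration curve with slope ratio $\kappa=2\hat\theta^\top\theta^*/\|\hat\theta\|^2$, use chi-square concentration to show $\kappa$ stays boundedly below $1$ in the regime $p/n\in(c_1,c_2)$, and then choose the Beta parameters so that the shrinkage factor lies strictly between $\kappa$ and $1$, producing a pointwise monotonicity gap of order $\Omega(1)$ that dominates the $o_P(1)$ perturbation terms — exactly the structure of the paper's argument. The only differences are cosmetic: you keep the main text's confidence normalization (hence your explicit threshold $c_1=3C^2/\sigma^2$, whereas the paper's proof convention makes any $c_1>0$ work), and you are in fact slightly more careful than the paper about realizability, since $a=\E_\lambda[\lambda^2+(1-\lambda)^2]$ only ranges over $(1/2,1)$ rather than the interval the paper asserts.
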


\begin{proof}

For the clarity of technical proofs, let us write the true parameter $\theta^*$ as $\theta$, and denote $\hat\theta(0)=\frac{1}{n}\sum_{i=1}^nx_iy_i$. Additionally, since we assume $\sigma$ is known in the main paper, without loss of generality (otherwise we can consider the data as $x_i/\sigma$), we let $\sigma=1$ throughout the proof.

For the mixup estimator, we have 
\begin{align*}
\hat\theta(\lambda)=&\frac{1}{n^2}\sum_{i, j=1}^n (\lambda x_i+(1-\lambda)x_j)(y_i+(1-\lambda)y_j)\\
=&\frac{1}{n^2}\sum_{i, j=1}^n(\lambda^2 x_i y_i+(1-\lambda)^2 x_j y_j+\lambda(1-\lambda)x_i y_j+\lambda(1-\lambda)x_j y_i)\\
=&[1-2\lambda(1-\lambda)]\frac{1}{n}\sum_{i=1}^nx_iy_i+2\lambda(1-\lambda)\frac{1}{n}\sum_{i=1}^nx_i\cdot\frac{1}{n}\sum_{i=1}^ny_i.
\end{align*}

Then when $\lambda\sim Beta(\alpha,\beta)$, we have $$
\hat\theta^{mix}=\E_\lambda[\hat\theta(\lambda)]=\frac{(\alpha^2+\beta^2)(\alpha+\beta+1)+2\alpha\beta}{(\alpha+\beta)^2(\alpha+\beta+1)}\hat\theta(0)+\frac{2\alpha\beta(\alpha+\beta)}{(\alpha+\beta)^2(\alpha+\beta+1)}\frac{1}{2n}\sum_{i=1}^nx_i\cdot\frac{1}{n}\sum_{i=1}^ny_i
$$



For the ease of presentation, we write $$
\hat\theta(t)=(1-t)\hat\theta(0)+t\epsilon,
$$
where $\epsilon=\frac{1}{n}\sum_{i=1}^nx_i\cdot\frac{1}{n}\sum_{i=1}^ny_i$.

It is easy to see $t\in(0,1]$ when $\alpha\in[0,\infty)$ and $\beta\in(0,\infty)$.

Under our model assumption, we have $\|\frac{1}{n}\sum_{i=1}^nx_i\|=O_p(\sqrt\frac{{p}}{n})$, $|\frac{1}{n}\sum_{i=1}^ny_i|=O_p(\sqrt\frac{{1}}{n})$, and therefore $\|\epsilon\|=O_p(\frac{\sqrt{p}}{n})$.

 Now let us consider the expected calibration error
 $$
ECE=\E_{v=(\hat\theta^{}(t))^\top X} | \E[Y=1\mid \hat f(X)=\frac{1}{e^{-2v}+1}]-\frac{1}{e^{-2v}+1}|.
 $$
 We further expand this quantity as
  \begin{align*}
 \E[Y=1\mid \hat f(X)=\frac{1}{e^{-2v}+1}]=&\E[Y=1\mid \hat\theta(t)^\top X=v]\\
 =&\frac{\Prob(\hat\theta(t)^\top X=v\mid Y=1)}{\Prob(\hat\theta^\top X=v\mid Y=1)+\Prob(\hat\theta(t)^\top X=v\mid Y=-1)}\\
 =&\frac{e^{-\frac{(v-\hat\theta(t)^\top\theta)^2}{2\|\hat\theta(t)\|^2}}}{e^{-\frac{(v-\hat\theta(t)^\top\theta)^2}{2\|\hat\theta(t)\|^2}}+e^{-\frac{(v+\hat\theta(t)^\top\theta)^2}{2\|\hat\theta(t)\|^2}}} \\
 =&\frac{1}{e^{-\frac{2\hat\theta(t)^\top\theta}{\|\hat\theta(t)\|^2}\cdot v}+1}.
  \end{align*}
  
     Since $\frac{\hat\theta(t)^\top\theta}{\|\hat\theta(t)\|^2}=\frac{((1-t)\hat\theta(0)+t\epsilon)^\top\theta}{\|(1-t)\hat\theta(0)+t\epsilon)\|_2^2}=\frac{1}{1-t}\frac{\hat\theta(0)^\top\theta}{\|\hat\theta(0)\|^2}+O_P(\frac{\sqrt p}{n})$ and $\hat\theta(t)^\top X=(1-t)\hat\theta(0)^\top X+O_P(\frac{\sqrt p}{n})$, we then have \begin{align*}
ECE=&\E_{v=(\hat\theta^{}(t))^\top X} | \E[Y=1\mid \hat f(X)=\frac{1}{e^{-2v}+1}]-\frac{1}{e^{-2v}+1}|\\
=&\E_{v=(\hat\theta^{}(t))^\top X}[|\frac{1}{e^{-\frac{2\hat\theta(t)^\top\theta}{\|\hat\theta(t)\|^2}\cdot v}+1}-\frac{1}{e^{-2v}+1}|]\\
=&\E_{v=(1-t)(\hat\theta^{}(0))^\top X}[|\frac{1}{e^{-\frac{2\hat\theta(0)^\top\theta}{(1-t)\|\hat\theta(0)\|^2}\cdot v}+1}-\frac{1}{e^{-2v}+1}|]+O_P(\frac{\sqrt p}{n})\\
=&\E_{v=(\hat\theta^{}(0))^\top X}[|\frac{1}{e^{-\frac{2\hat\theta(0)^\top\theta}{\|\hat\theta(0)\|^2}\cdot v}+1}-\frac{1}{e^{-2(1-t)v}+1}|]+O_P(\frac{\sqrt p}{n}).
     \end{align*}

Now let us consider the quantity $\frac{\hat\theta(0)^\top\theta}{\|\hat\theta(0)\|^2}$.

Since we have $\hat\theta(0)=\theta+\epsilon_n$ with $\epsilon_n=\frac{1}{n}\sum_{i=1}^n x_iy_i-\theta$, this implies \begin{align*}
    \frac{\hat\theta(0)^\top\theta}{\|\hat\theta(0)\|^2}=\frac{\|\theta\|^2+\epsilon_n^\top\theta}{\|\theta\|^2+\|\epsilon_n\|^2+2\epsilon_n^\top\theta}\sim \frac{\|\theta\|^2+\frac{1}{\sqrt n}\|\theta\|}{\|\theta\|^2+\frac{p}{n}+O_P(\frac{\sqrt p}{n})+\frac{1}{\sqrt n}\|\theta\|},
\end{align*}
where the last equality uses the fact that $\|\epsilon_n\|^2\stackrel{d}{=}\frac{\chi_p^2}{n}=\frac{p+O_P(\sqrt{p})}{n}=\frac{p}{n}+O_P(\frac{\sqrt p}{n})$.

By our assumption, we have $p/n\in(c_1,c_2)$ and $\|\theta\|<C$, implying that there exists a constant $c_0\in(0,1)$, such that with high probability $$
\frac{\hat\theta(0)^\top\theta}{\|\hat\theta(0)\|^2}\le c_0.
$$
Then on the event $\mathcal E=\{\frac{\hat\theta(0)^\top\theta}{\|\hat\theta(0)\|^2}\le c_0\}$, if we choose $t=1-c_0$, we will then have for any $v\in\R$, $$
|\frac{1}{e^{-\frac{2\hat\theta(0)^\top\theta}{\|\hat\theta(0)\|^2}\cdot v}+1}-\frac{1}{e^{-2(1-t)v}+1}|<|\frac{1}{e^{-\frac{2\hat\theta(0)^\top\theta}{\|\hat\theta(0)\|^2}\cdot v}+1}-\frac{1}{e^{-2v}+1}|,
$$
and moreover, the difference is lower bounded by $|\frac{1}{e^{-2v}+1}-\frac{1}{e^{-2c_0v}+1}|$. Use the fact that (since $v$ and $c_0$ does not depend on $n$), $$\E_{v=(\hat\theta^{}(t))^\top X}|\frac{1}{e^{-2v}+1}-\frac{1}{e^{-2c_0v}+1}|=\Omega(1),$$
 we then have the desired result $$
ECE(\hat \cC^{mix})<ECE(\hat \cC).
$$
\end{proof}
\subsection{Proof of Theorem \ref{thm:lowdim}}

\begin{theorem}[Restatement of Theorem~\ref{thm:lowdim}]
In the case where $p/n\to0$ and $\|\theta\|_2<C$ for some universal constant $C>0$, given any constants $\alpha,\beta>0$ (not depending on $n$ and $p$), we have, with high probability, $$
ECE(\hat \cC)<ECE(\hat \cC^{mix}).
$$
\end{theorem}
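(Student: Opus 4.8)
\emph{Plan.} The whole argument rides on the reduction already established in the proof of Theorem~\ref{thm:helpcalibration}. Writing $s(u)=1/(1+e^{-u})$ and $a:=\hat\theta(0)^\top\theta/\|\hat\theta(0)\|^2$, that proof shows the Mixup classifier with parameter $\hat\theta(t)=(1-t)\hat\theta(0)+t\epsilon$ has
$$ECE(\hat\cC^{mix})=\E_{v}\left[\,\left|\,s(2av)-s\big(2(1-t)v\big)\,\right|\,\right]+O_P\!\Big(\tfrac{\sqrt p}{n}\Big),$$
where $v=\hat\theta(0)^\top X$, and for $\lambda\sim Beta(\alpha,\beta)$ the interpolation weight is the explicit constant $t=2\alpha\beta/\big((\alpha+\beta)(\alpha+\beta+1)\big)$. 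Setting $t=0$ recovers the standard classifier, $ECE(\hat\cC)=\E_v[\,|s(2av)-s(2v)|\,]$ (with no remainder, since $\hat\theta(0)$ is exactly the estimator). Both errors therefore compare the \emph{same} true-probability sigmoid of slope $a$ against a confidence sigmoid of slope $1$ (standard) or $1-t$ (Mixup). Because $\alpha,\beta$ are fixed positive constants, $t$ is a fixed constant with $0<t<1/2$, so $1-t$ is pinned away from $1$; this is the feature I would exploit.

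\emph{Step 1: the slope $a$ tends to $1$.} I would reuse the concentration estimates from the earlier proof: with $\hat\theta(0)=\theta+\epsilon_n$ one has $\|\epsilon_n\|^2=p/n+O_P(\sqrt p/n)$ and $\epsilon_n^\top\theta=O_P(\|\theta\|/\sqrt n)$, so
$$a=\frac{\|\theta\|^2+\epsilon_n^\top\theta}{\|\theta\|^2+\|\epsilon_n\|^2+2\epsilon_n^\top\theta}=1-\frac{\|\epsilon_n\|^2+\epsilon_n^\top\theta}{\|\hat\theta(0)\|^2}.$$
In the regime $p/n\to0$ with $\theta$ of constant order, the numerator is $O_P(p/n+1/\sqrt n)=o_P(1)$ while the denominator concentrates on a positive constant, giving $a=1-o_P(1)$. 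This is exactly the reverse of the high-dimensional regime (where $a$ stayed bounded away from $1$): here the \emph{standard} confidence slope $1$ already matches the true slope $a$ asymptotically.

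\emph{Step 2: compare the two errors.} For the standard classifier, the $\tfrac14$-Lipschitzness of $s$ gives $|s(2av)-s(2v)|\le\tfrac12|1-a|\,|v|$, hence $ECE(\hat\cC)\le\tfrac12|1-a|\,\E|v|=o_P(1)$, since $\E|v|=O(1)$ (as $v$ is a Gaussian mixture with mean $\hat\theta(0)^\top\theta=O(1)$ and variance $\|\hat\theta(0)\|^2=O(1)$). For the Mixup classifier, because $a\to1$ while $1-t$ is held away from $1$, the integrand converges to $|s(2v)-s(2(1-t)v)|$, which is strictly positive for every $v\neq0$; as $v$ has a nondegenerate limiting Gaussian-mixture law, $\E_v[\,|s(2av)-s(2(1-t)v)|\,]$ converges to a strictly positive constant, i.e. is $\Omega(1)$. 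Since the $O_P(\sqrt p/n)$ remainder is $o_P(1)$, we get $ECE(\hat\cC^{mix})=\Omega(1)$. Consequently, once $p/n$ falls below a threshold $\tau=o(1)$ chosen so that $\tfrac12|1-a|\,\E|v|$ sits below the constant floor of the Mixup error, we obtain $ECE(\hat\cC)<ECE(\hat\cC^{mix})$ with high probability.

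\emph{Main obstacle.} The crux is turning ``$a\to1$'' into a genuine quantitative separation: I must exhibit a fixed positive lower bound on $\E_v[\,|s(2av)-s(2(1-t)v)|\,]$ that holds uniformly once $a$ is close enough to $1$, and verify this floor exceeds the shrinking standard error plus the $O_P(\sqrt p/n)$ slack. This needs the limiting law of $v$ to be nondegenerate, which is why I treat $\theta$ as being of constant order (bounded above \emph{and} below); the degenerate case $\|\theta\|\to0$ would collapse both errors and would require the comparison to be carried out at matching rates. The remaining bookkeeping — concentration of $\|\epsilon_n\|^2$, $\epsilon_n^\top\theta$, $\hat\theta(0)^\top\theta$, $\|\hat\theta(0)\|^2$, and propagating the $O_P(\sqrt p/n)$ change of variables — is routine and already appears in the proof of Theorem~\ref{thm:helpcalibration}.
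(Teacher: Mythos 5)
Your proposal is correct and follows essentially the same route as the paper's own proof: both reuse the reduction from the proof of Theorem~\ref{thm:helpcalibration} to write each calibration error as an expectation over $v=\hat\theta(0)^\top X$ of the gap between two sigmoids, show that the slope $\hat\theta(0)^\top\theta/\|\hat\theta(0)\|^2=1+o_P(1)$ when $p/n\to 0$, and conclude that $ECE(\hat{\mathcal{C}})=o_P(1)$ while $ECE(\hat{\mathcal{C}}^{mix})=\Omega(1)$ since $1-t$ is a fixed constant in $(0,1)$. Your explicit Lipschitz bound on the standard ECE and your remark that the $\Omega(1)$ floor requires the law of $v$ to be nondegenerate (i.e., $\|\theta\|$ of constant order) are minor refinements of steps the paper asserts without elaboration.
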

\begin{proof}

According to the proof in Theorem \ref{thm:helpcalibration}, we have $$
ECE(\hat \cC)=\E_{v=(\hat\theta^{}(0))^\top X}[|\frac{1}{e^{-\frac{2\hat\theta(0)^\top\theta}{\|\hat\theta(0)\|^2}\cdot v}+1}-\frac{1}{e^{-2v}+1}|],
$$
and 
$$
ECE(\hat \cC^{mix})=\E_{v=(\hat\theta^{}(0))^\top X}[|\frac{1}{e^{-\frac{2\hat\theta(0)^\top\theta}{\|\hat\theta(0)\|^2}\cdot v}+1}-\frac{1}{e^{-2(1-t)v}+1}|]+O_P(\frac{\sqrt p}{n}),
$$
where $t\in(0,1)$ is a fixed constant when $\alpha,\beta>0$ are some fixed constants.

When $p/n\to0$, then \begin{align*}
    \frac{\hat\theta(0)^\top\theta}{\|\hat\theta(0)\|^2}\sim \frac{\|\theta\|^2+\frac{1}{\sqrt n}\|\theta\|}{\|\theta\|^2+\frac{p}{n}+O_P(\frac{\sqrt p}{n})+\frac{1}{\sqrt n}\|\theta\|}= 1+O_P(\frac{p}{n})=1+o_P(1).
\end{align*}
Therefore, we have \begin{align*}
ECE(\hat \cC)=&\E_{v=(\hat\theta^{}(0))^\top X}[|\frac{1}{e^{-\frac{2\hat\theta(0)^\top\theta}{\|\hat\theta(0)\|^2}\cdot v}+1}-\frac{1}{e^{-2v}+1}|]\\
=&\E_{v=(\hat\theta^{}(0))^\top X}[|\frac{1}{e^{-{2v}+1}}-\frac{1}{e^{-2v}+1}|]+O_P(\frac{p}{n})\\=&O_P(\frac{p}{n})=o_P(1)
\end{align*}
and
\begin{align*}
ECE(\hat \cC^{mix})=&\E_{v=(\hat\theta^{}(0))^\top X}[|\frac{1}{e^{-\frac{2\hat\theta(0)^\top\theta}{\|\hat\theta(0)\|^2}\cdot v}+1}-\frac{1}{e^{-2(1-t)v}+1}|]+O_P(\frac{\sqrt p}{n})\\
=&\E_{v=(\hat\theta^{}(0))^\top X}[|\frac{1}{e^{-{2v}+1}}-\frac{1}{e^{-2(1-t)v}+1}|]+O_P(\frac{p}{n})
\end{align*}

Since $\E_{v=(\hat\theta^{}(0))^\top X}[|\frac{1}{e^{-{2v}+1}}-\frac{1}{e^{-2(1-t)v}+1}|]=\Omega(1)$ when $t\in(0,1)$ is a fixed constant, we then have the desired result that $$
ECE(\hat \cC)<ECE(\hat \cC^{mix}).
$$

\end{proof}

\subsection{Proof of Theorem~\ref{thm:capacity}}
\begin{theorem}[Restatement of Theorem~\ref{thm:capacity}]
For any constant $c_{\max}>0$, $p/n\to c_{ratio}\in(0,c^{\max})$, when  $\theta$ is sufficiently large (still of a constant level), we have for any $\beta>0$, with high probability, the change of ECE by using Mixup, characterized by
$$
\frac{d}{d\alpha}ECE(\hat \cC^{mix}_{\alpha,\beta})\mid_{\alpha\to0+}
$$ is negative, and monotonically decreasing with respect to $c_{ratio}$. 
\end{theorem}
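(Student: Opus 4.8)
\emph{The plan} is to regard $ECE$ as a function of the single scalar
$$t \defeq 2\,\E_{\lambda\sim Beta(\alpha,\beta)}[\lambda(1-\lambda)] = \frac{2\alpha\beta}{(\alpha+\beta)(\alpha+\beta+1)},$$
which is exactly the mixing weight appearing in the proof of Theorem~\ref{thm:helpcalibration}, where $\hat\theta^{mix} = (1-t)\hat\theta(0) + t\,\epsilon$, and then to use the chain rule $\frac{d}{d\alpha}ECE = \frac{dECE}{dt}\cdot\frac{dt}{d\alpha}$. First I would record that $t(0,\beta)=0$ (recovering the no-Mixup classifier, as noted after the theorem statement) and that $t\approx \frac{2\alpha}{\beta+1}$ for small $\alpha$, so that $\frac{dt}{d\alpha}\big|_{\alpha\to0+} = \frac{2}{\beta+1}>0$. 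This factor is positive and independent of $c_{ratio}$, so both the sign and the $c_{ratio}$-monotonicity of $\frac{d}{d\alpha}ECE|_{\alpha\to0+}$ are inherited from the right-derivative $\frac{dECE}{dt}\big|_{t=0+}$.

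Next I would reuse the expression derived for Theorem~\ref{thm:helpcalibration}: with $\ell(u)\defeq 1/(1+e^{-u})$, $\rho\defeq \hat\theta(0)^\top\theta/\|\hat\theta(0)\|^2$, and $w\defeq \hat\theta(0)^\top X$,
$$ECE(t) = \E_{w}\big[\,|\ell(2\rho w) - \ell(2(1-t)w)|\,\big] + O_P\!\big(\sqrt{p}/n\big),$$
where, conditionally on the training data, $w$ has the symmetric mixture law $\tfrac12\cN(a,s^2)+\tfrac12\cN(-a,s^2)$ with $a\defeq\hat\theta(0)^\top\theta$ and $s^2\defeq\|\hat\theta(0)\|^2$. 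The concentration bounds already used for Theorem~\ref{thm:helpcalibration} give, with high probability, $a=\|\theta\|^2+o(1)$ and $s^2=\|\theta\|^2+c_{ratio}+o(1)$, hence $\rho = a/s^2<1$. Because $w$ is symmetric and $\ell(-u)=1-\ell(u)$ the integrand is even in $w$, and for $w>0$ with $t$ in a right-neighborhood of $0$ (where $\rho<1-t$) the absolute value opens as $\ell(2(1-t)w)-\ell(2\rho w)$. Differentiating under the integral at $t=0$ then gives
$$\frac{dECE}{dt}\Big|_{t=0+} = -4\,\E_w\big[\,w\,\ell'(2w)\,\indic{w>0}\,\big] < 0,$$
since $w\,\ell'(2w)\indic{w>0}\ge0$ and is not a.s.\ zero. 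Writing $I(c_{ratio})\defeq\E_w[w\,\ell'(2w)\indic{w>0}]>0$ and combining with $\frac{dt}{d\alpha}|_{\alpha\to0+}>0$ establishes that the derivative is negative. (The $O_P(\sqrt p/n)$ remainder and its $t$-derivative are $o(1)$ uniformly near $t=0$, hence negligible against the $\Omega(1)$ leading term.)

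For the monotonicity claim I must show that $I(c_{ratio})$ is \emph{increasing} in $c_{ratio}$, so that $\frac{d}{d\alpha}ECE|_{\alpha\to0+}=-\frac{8}{\beta+1}I(c_{ratio})$ is decreasing. The only $c_{ratio}$-dependence sits in $s^2=\|\theta\|^2+c_{ratio}$ inside the mixture density of $w$ (the center $a\approx\|\theta\|^2$ being fixed). Writing
$$I(c_{ratio}) = \frac{1}{2s}\int_0^\infty w\,\ell'(2w)\Big[\phi\big(\tfrac{w-a}{s}\big)+\phi\big(\tfrac{w+a}{s}\big)\Big]\,dw,$$
I would note that $\ell'(2w)$ concentrates the mass on $w=O(1)$; there the $\phi(\tfrac{w+a}{s})$ term is exponentially smaller than $\phi(\tfrac{w-a}{s})\propto e^{-a^2/(2s^2)}e^{aw/s^2}$, giving the Laplace-type expansion $\log I(c_{ratio}) = -\tfrac{a^2}{2s^2}-\log s + \log J(\rho) + O(1)$, with $J(\rho)\defeq\int_0^\infty w\,\ell'(2w)e^{\rho w}\,dw$ finite since $\rho<1<2$. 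Differentiating in $c_{ratio}$ (using $\tfrac{d}{dc_{ratio}}s^2=1$ and $\tfrac{d\rho}{dc_{ratio}}=-a/s^4$) yields
$$\frac{d}{dc_{ratio}}\log I(c_{ratio}) = \frac{a^2}{2s^4}-\frac{1}{2s^2}-\frac{J'(\rho)}{J(\rho)}\frac{a}{s^4}.$$
With $a=\|\theta\|^2$ a large constant and $s^2\approx a$, the first term is $\approx\tfrac12$ while the other two are $O(1/\|\theta\|^2)$, so the derivative is strictly positive; thus $I(c_{ratio})$ is increasing and $\frac{d}{d\alpha}ECE|_{\alpha\to0+}$ is decreasing in $c_{ratio}$.

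\textbf{The hard part} will be this last monotonicity step. The three contributions to $\frac{d}{dc_{ratio}}\log I$ — inflation of the exponent $-a^2/(2s^2)$, the $-\log s$ normalization, and the shift of $\rho$ inside $J$ — point in competing directions, and one must show that the positive exponent term wins. This is precisely where the hypothesis ``$\theta$ sufficiently large (of constant order)'' enters: it forces $a=\|\theta\|^2$ to be large enough that $\tfrac{a^2}{2s^4}=\Theta(1)$ strictly dominates the $O(1/\|\theta\|^2)$ corrections. The remaining work is routine: making the Laplace approximation of $I$ uniform in $c_{ratio}$ so that it may be differentiated, and checking that the high-probability fluctuations of $a$ and $s^2$ around $\|\theta\|^2$ and $\|\theta\|^2+c_{ratio}$ do not overturn the strict inequalities — both follow from the estimates already established in the proof of Theorem~\ref{thm:helpcalibration}.
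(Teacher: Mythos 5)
Your proposal follows the same skeleton as the paper's proof: reduce the ECE to the one-parameter family indexed by $t$, open the absolute value by symmetry on the event $\rho<1-t$, and differentiate under the integral at $t=0$; your $-4\,\E_w\bigl[w\,\ell'(2w)\,\indic{w>0}\bigr]$ is exactly the paper's expression $\E_{v=|\hat\theta(0)^\top X|}\bigl[-2v\,e^{-2v}/(1+e^{-2v})^2\bigr]$, so the negativity part is identical. You are in fact more careful than the paper on one point: the paper never writes the chain-rule factor $dt/d\alpha$, whereas you compute $dt/d\alpha|_{\alpha\to0+}=2/(\beta+1)>0$ and note it is independent of $c_{ratio}$, which is genuinely needed for the theorem as stated. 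Where you diverge is the key monotonicity step. The paper differentiates $\E_{Z\sim N(0,1)}[h(\mu+\sigma Z)]$ with $h(v)=v/(e^{2v}+e^{-2v}+2)$ directly in $\sigma$ and isolates the dominating term $-2\mu\,\E_Z[Z e^{-2(\mu+\sigma Z)}]>0$ for large $\mu$; you instead take $\log I$, perform a Laplace-type expansion, and differentiate in $c_{ratio}$, obtaining the leading term $a^2/(2s^4)\approx 1/2$ against $O(1/\|\theta\|^2)$ corrections. These are different computations but the same mechanism (a dominating positive term unlocked by the hypothesis that $\|\theta\|$ is a large constant), and they are at a comparable level of rigor. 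One inaccuracy in your version: on the region $w=O(1)$ the term $\phi\bigl(\tfrac{w+a}{s}\bigr)$ is \emph{not} exponentially smaller than $\phi\bigl(\tfrac{w-a}{s}\bigr)$ --- both carry the same factor $e^{-a^2/(2s^2)}$ and differ only through the $\Theta(1)$ integrals $J(\rho)$ versus $J(-\rho)$. This is harmless: keeping both mixture components, $\log I = -\tfrac{a^2}{2s^2}-\log s+\log\bigl[J_s(\rho)+J_s(-\rho)\bigr]+\mathrm{const}$, and the $c_{ratio}$-derivative of the bracketed term is still $O(1/\|\theta\|^2)$, so your conclusion stands. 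Relatedly, differentiating an expansion whose remainder is only controlled as $O(1)$ requires the uniformity you defer as ``routine''; to be fair, the paper's own dominating-term step has exactly the same looseness.
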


\begin{proof} Recall that 
$$
ECE(\hat \cC^{mix})=\E_{v=(\hat\theta^{}(0))^\top X}[|\frac{1}{e^{-\frac{2\hat\theta(0)^\top\theta}{\|\hat\theta(0)\|^2}\cdot v}+1}-\frac{1}{e^{-2(1-t)v}+1}|]+O_P(\frac{\sqrt p}{n}).
$$
The case $\alpha=0$ corresponds to the case where $t=0$. Since $|\frac{1}{e^{-\frac{2\hat\theta(0)^\top\theta}{\|\hat\theta(0)\|^2}\cdot v}+1}-\frac{1}{e^{-2(1-t)v}+1}|$ as a function of $v$ is symmetric around $0$, we have that when $t$ is sufficiently small (such that $1-t>\frac{\hat\theta(0)^\top\theta}{\|\hat\theta(0)\|^2}$ with high probability)
\begin{align*}
\E_{v=(\hat\theta^{}(0))^\top X}[|\frac{1}{e^{-\frac{2\hat\theta(0)^\top\theta}{\|\hat\theta(0)\|^2}\cdot v}+1}-\frac{1}{e^{-2(1-t)v}+1}|]=&\E_{v=|(\hat\theta^{}(0))^\top X|}[|\frac{1}{e^{-\frac{2\hat\theta(0)^\top\theta}{\|\hat\theta(0)\|^2}\cdot v}+1}-\frac{1}{e^{-2(1-t)v}+1}|]\\
=&\E_{v=|(\hat\theta^{}(0))^\top X|}[\frac{1}{e^{-\frac{2\hat\theta(0)^\top\theta}{\|\hat\theta(0)\|^2}\cdot v}+1}-\frac{1}{e^{-2(1-t)v}+1}].
\end{align*}
Then let us take the derivative with respect to $t$, we get \begin{align*}
\E_{v=|(\hat\theta^{}(0))^\top X|}[-\frac{e^{-2(1-t)v}\cdot 2v}{(e^{-2(1-t)v}+1)^2}].
\end{align*}
Therefore, the derivative evaluated at $t=0$ equals to 
\begin{align*}
\E_{v=|(\hat\theta^{}(0))^\top X|}[-\frac{e^{-2v}\cdot 2v}{(e^{-2v}+1)^2}],
\end{align*}
which is negative.

We then only need to show it is monotonically decreasing in the rest of this proof.

Again, by symmetry, we only need to consider the distribution of $X$ as $N(\theta, I)$. We have $\hat\theta^{}(0)^\top X\sim N(\hat\theta^{}(0)^\top\theta,\|\hat\theta^{}(0)^\top\|^2).$
Since we have $\hat\theta(0)=\theta+\epsilon_n$ with $\epsilon_n=\frac{1}{n}\sum_{i=1}^n x_iy_i-\theta$, we then have $\hat\theta^{}(0)^\top\theta=\|\theta\|^2+O_P(\frac{1}{\sqrt n})$, and $\|\hat\theta^{}(0)\|^2=\|\theta\|^2+p/n+O(\frac{1 }{\sqrt n})$. In order to show $\E_{v=|(\hat\theta^{}(0))^\top X|}[-\frac{e^{-2v}\cdot 2v}{(e^{-2v}+1)^2}]$ is monotonically decreasing, it's sufficient to show that $$
\E_{Z\sim N(0,1)}\frac{\mu+\sigma Z}{e^{-2(\mu+\sigma Z)}+e^{2(\mu+\sigma Z)}+2}
$$
is monotonically increasing in $\sigma\in(0,c_{\max})$ when $\mu$ is sufficiently large. Let us then take derivative with respect $\sigma$, we have \begin{align*}
\E_{Z\sim N(0,1)}\frac{Z(e^{-2(\mu+\sigma Z)}+e^{2(\mu+\sigma Z)}+2)-(\mu+\sigma Z)\cdot 2Z\cdot (e^{2(\mu+\sigma Z)}-e^{-2(\mu+\sigma Z)})}{(e^{-2(\mu+\sigma Z)}+e^{2(\mu+\sigma Z)}+2)^2}.
\end{align*}

It suffices to show when $\mu$ is sufficiently large, this term is positive.

In fact, when $\mu$ is sufficiently large, it suffices to look at dominating term $
\E_{Z\sim N(0,1)}[\frac{-2\mu\cdot Z\cdot e^{2(\mu+\sigma Z)}}{e^{4(\mu+\sigma Z)}}],
$
for which we have
$$
\E_{Z\sim N(0,1)}[\frac{-2\mu\cdot Z\cdot e^{2(\mu+\sigma Z)}}{e^{4(\mu+\sigma Z)}}]=-2\mu\cdot \E_{Z\sim N(0,1)}[Z\cdot e^{-2(\mu+\sigma Z)}]>0.
$$
We complete the proof.

\end{proof}
\subsection{Proof of Theorem~\ref{thm:out}}
\begin{theorem}[Restatement of Theorem \ref{thm:out}]
Let us consider the ECE evaluated on the out-of-domain Gaussian model with mean parameter $\theta'$, that is, $\bP(y=1)=\bP(y=-1)=1/2$, and
$$ x\mid y\sim \cN(y\cdot\theta',\sigma^2 I), \text{ for } i=1,2,...,n, 
$$If we have $(\theta'-\theta^*)^\top\theta^*\le p/(2n)$, then when $p$ and $n$ are sufficiently large,  there exist $\alpha,\beta>0$, such that when the distribution $\cD_\lambda$ is chosen as $Beta(\alpha,\beta)$, with high probability,
$$ECE(\hat \cC^{mix};\theta',\sigma)<ECE(\hat \cC;\theta',\sigma),$$
\end{theorem}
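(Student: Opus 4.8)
The plan is to mirror the proof of Theorem~\ref{thm:helpcalibration}, the only essential change being that the \emph{true} conditional label probability must be recomputed under the shifted test distribution $x\mid y\sim\cN(y\cdot\theta',I)$ (we again take $\sigma=1$ and write $\theta=\theta^*$ without loss of generality). First I would carry over verbatim the decomposition $\hat\theta(t)=(1-t)\hat\theta(0)+t\epsilon$ with $\hat\theta(0)=\theta+\epsilon_n$, $\|\epsilon\|=O_P(\sqrt p/n)$ and $\|\epsilon_n\|^2=p/n+O_P(\sqrt p/n)$, established there; these objects depend only on the \emph{training} data, which is still drawn from the in-domain model, so nothing about them changes.

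The key recomputation is the following: under the out-of-domain model, $\hat\theta(t)^\top X\mid Y=k\sim\cN(k\,\hat\theta(t)^\top\theta',\|\hat\theta(t)\|^2)$, so Bayes' rule gives
$$\Prob\big(Y=1\mid \hat\theta(t)^\top X=v\big)=\frac{1}{e^{-\frac{2\hat\theta(t)^\top\theta'}{\|\hat\theta(t)\|^2}\,v}+1},$$
which is exactly the in-domain formula but with $\theta$ replaced by $\theta'$ in the slope. Applying the same first-order expansion $\frac{\hat\theta(t)^\top\theta'}{\|\hat\theta(t)\|^2}=\frac{1}{1-t}\frac{\hat\theta(0)^\top\theta'}{\|\hat\theta(0)\|^2}+O_P(\sqrt p/n)$ together with the change of variables $v=(1-t)\hat\theta(0)^\top X$, I would reduce the out-of-domain calibration error to
$$ECE(\hat\cC^{mix};\theta',1)=\E_{v=\hat\theta(0)^\top X}\Big[\big|\tfrac{1}{e^{-s^\star v}+1}-\tfrac{1}{e^{-2(1-t)v}+1}\big|\Big]+O_P(\tfrac{\sqrt p}{n}),\qquad s^\star:=\frac{2\hat\theta(0)^\top\theta'}{\|\hat\theta(0)\|^2},$$
and the no-Mixup error to the same expression with $1-t$ replaced by $1$, the expectation now being over $X$ from the out-of-domain model.

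Everything then reduces to controlling the slope ratio $s^\star/2=\hat\theta(0)^\top\theta'/\|\hat\theta(0)\|^2$. Expanding with $\hat\theta(0)=\theta+\epsilon_n$ gives numerator $\theta^\top\theta'+\epsilon_n^\top\theta'=\theta^\top\theta'+O_P(\|\theta'\|/\sqrt n)$ and denominator $\|\theta\|^2+p/n+O_P(\sqrt p/n)$. Here the hypothesis $(\theta'-\theta^*)^\top\theta^*\le p/(2n)$ enters decisively: it yields $\theta^\top\theta'\le\|\theta\|^2+p/(2n)$, so (with $\|\theta'\|$ bounded, hence the cross term $o_P(1)$) the ratio is at most $\frac{\|\theta\|^2+p/(2n)+o_P(1)}{\|\theta\|^2+p/n+o_P(1)}$, which converges to $\frac{\|\theta\|^2+c/2}{\|\theta\|^2+c}$ with $c=\lim p/n\in(c_1,c_2)$ --- a constant strictly below $1$. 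Thus there is a constant $c_0\in(0,1)$ with $s^\star/2\le c_0$ with high probability. Choosing $t=1-c_0$ makes the Mixup slope $2(1-t)=2c_0$ lie between the true slope $s^\star$ and the no-Mixup slope $2$; by the monotonicity of $s\mapsto 1/(e^{-sv}+1)$ (increasing for $v>0$, decreasing for $v<0$) and the symmetry already exploited in the in-domain proof, the Mixup integrand is pointwise no larger than the no-Mixup integrand, with an $\Omega(1)$ gap in expectation since $v$ has nondegenerate ($\Omega(1)$) spread under the out-of-domain model. This $\Omega(1)$ gain dominates the $O_P(\sqrt p/n)=O_P(1/\sqrt n)$ remainder, giving $ECE(\hat\cC^{mix};\theta',1)<ECE(\hat\cC;\theta',1)$.

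The main obstacle is precisely the slope-ratio estimate: I must verify that the domain-shift budget $(\theta'-\theta^*)^\top\theta^*\le p/(2n)$ keeps $\hat\theta(0)^\top\theta'/\|\hat\theta(0)\|^2$ bounded away from $1$ by a \emph{fixed} constant, which requires that the cross term $\epsilon_n^\top\theta'$ and the fluctuation of $\|\epsilon_n\|^2$ are genuinely lower order than the deterministic gap $p/(2n)$ between numerator and denominator --- this is where both $\|\theta'\|=o(\sqrt n)$ and $p/n=\Omega(1)$ are used. Once that constant $c_0<1$ is pinned down uniformly on the high-probability event, the remaining pointwise comparison and expectation lower bound are identical to those in Theorem~\ref{thm:helpcalibration}.
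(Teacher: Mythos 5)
Your proposal is correct and follows essentially the same route as the paper's proof: recompute the Bayes posterior slope as $2\hat\theta(t)^\top\theta'/\|\hat\theta(t)\|^2$ under the shifted distribution, reduce the comparison to showing $\hat\theta(0)^\top\theta'/\|\hat\theta(0)\|^2$ is bounded below $1$ by a constant with high probability, and use the hypothesis $(\theta'-\theta^*)^\top\theta^*\le p/(2n)$ against the $p/n$ term in the denominator before choosing $t=1-c_0$ exactly as in Theorem~\ref{thm:helpcalibration}. Your write-up is in fact more explicit than the paper's (which defers nearly every step to the earlier proof, including the boundedness of $\|\theta'\|$ needed for the cross-term estimate), but the substance is identical.
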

\begin{proof} When the distribution has mean $\theta'$, using the same analysis before, we obtain
  \begin{align*}
 \E[Y=1\mid \hat f(X)=\frac{1}{e^{-2v}+1}]=&\E[Y=1\mid \hat\theta(t)^\top X=v]\\
 =&\frac{\Prob(\hat\theta(t)^\top X=v\mid Y=1)}{\Prob(\hat\theta^\top X=v\mid Y=1)+\Prob(\hat\theta(t)^\top X=v\mid Y=-1)}\\
 =&\frac{1}{e^{-\frac{2\hat\theta(t)^\top\theta'}{\|\hat\theta(t)\|^2}\cdot v}+1}.
  \end{align*}
  
 Again, following the same analysis above, it's suffices to show that with high probability, $$
  \frac{\hat\theta(0)^\top\theta'}{\|\hat\theta(0)\|^2}<1.
  $$
  Again, we  use $\hat\theta(0)=\theta+\epsilon_n$ with $\epsilon_n=\frac{1}{n}\sum_{i=1}^n x_iy_i-\theta$, we then have 
  $$
  \frac{\hat\theta(0)^\top\theta'}{\|\hat\theta(0)\|^2}=\frac{\theta^\top\theta'}{\|\theta\|^2+p/n}+O_P(\frac{1}{\sqrt n})=\frac{\theta^\top(\theta'-\theta)+\|\theta\|^2}{\|\theta\|^2+p/n}+O_P(\frac{1}{\sqrt n}).
  $$
  If we have $(\theta'-\theta^*)^\top\theta^*\le p/(2n)$, then when $p$ and $n$ are sufficiently large,  there exist $\alpha,\beta>0$, such that when the distribution $\cD_\lambda$ is chosen as $Beta(\alpha,\beta)$, with high probability, $$
  \frac{\hat\theta(0)^\top\theta'}{\|\hat\theta(0)\|^2}<1.
  $$
  and therefore
$$ECE(\hat \cC^{mix};\theta',\sigma)<ECE(\hat \cC;\theta',\sigma),$$
\end{proof}
  \subsection{Proof of Theorem~\ref{thm:helpcalibration2}}
\begin{theorem}[Restatement of Theorem \ref{thm:helpcalibration2}]
Under the settings described above with Assumption~\ref{assump}, if $p/n\to c$, $g, \hat h$ is $L$-Lipschitz, and $\|\theta\|_2<C$ for some universal constants $c, L, C>0$ (not depending on $n$ and $p$), then for sufficiently large $p$ and $n$, there exist $\alpha,\beta>0$ for the Mixup distribution $\cD_\lambda = Beta(\alpha,\beta)$, such that, with high probability,
 $$
\ECE(\hat\cC^{mix})<\ECE(\hat \cC).
$$
\end{theorem}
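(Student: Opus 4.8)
The plan is to reduce this to the argument for Theorem~\ref{thm:helpcalibration}, treating the learned feature map $\hat h(x)$ as a surrogate for the raw Gaussian input and invoking Assumption~\ref{assump} to transfer the density computation to the idealized Gaussian case. First I would repeat the Beta-distribution algebra from the proof of Theorem~\ref{thm:helpcalibration}: expanding $\hat\theta^{mix}=\sum_{i,j}\E_\lambda(\lambda\hat h(x_i)+(1-\lambda)\hat h(x_j))\tilde y_{i,j}(\lambda)/n^2$ produces exactly the same convex-combination structure, so that $\hat\theta^{mix}=(1-t)\hat\theta+t\epsilon$ with $\epsilon=\big(\tfrac1n\sum_i\hat h(x_i)\big)\big(\tfrac1n\sum_i y_i\big)$ and $t\in(0,1]$ controlled by $(\alpha,\beta)$ through the same surjective formula. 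Because $g$ and $\hat h$ are $L$-Lipschitz and $\hat h\circ g$ is approximately the identity, $\hat h(x_i)=\hat h(g(z_i))$ concentrates around $z_i\sim N(y_i\theta^*,I)$; hence the centered empirical mean $\tfrac1n\sum_i\hat h(x_i)$ has norm $O_P(\sqrt{p/n})$ and $\|\epsilon\|=O_P(\sqrt p/n)$, exactly as in the Gaussian case.

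Second, I would establish the key inequality $\tfrac{\hat\theta^\top\theta^*}{\|\hat\theta\|^2}\le c_0<1$ with high probability. Writing $\hat\theta=\tfrac1n\sum_i\hat h(x_i)y_i$ and using $\hat h(x_i)\approx z_i$ together with $\E[z_iy_i]=\theta^*$, one gets $\hat\theta=\theta^*+\epsilon_n$ with $\|\epsilon_n\|^2\approx p/n$; thus $\tfrac{\hat\theta^\top\theta^*}{\|\hat\theta\|^2}\approx\tfrac{\|\theta^*\|^2}{\|\theta^*\|^2+p/n}$, which is bounded away from $1$ when $p/n\in(c_1,c_2)$ and $\|\theta^*\|<C$.

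Third --- and this is the step that genuinely departs from Theorem~\ref{thm:helpcalibration} --- I would compute the true conditional probability $\Prob(Y=1\mid\hat\theta^\top\hat h(X)=v)$ via Assumption~\ref{assump}. Applied with $v=\hat\theta$, the assumption gives that the conditional density of $R_1=\hat\theta^\top\hat h(x)$ given $y=k$ equals the idealized Gaussian density of $R_2=\hat\theta^\top z\sim N(k\hat\theta^\top\theta^*,\|\hat\theta\|^2)$ up to a multiplicative factor $1+\delta_u$ with $\E_{R_1}[|\delta_u|]=o(1)$. Forming the ratio that defines the conditional probability, the $(1+\delta_u)$ factors cancel to leading order, so $\Prob(Y=1\mid v)=\big(e^{-\frac{2\hat\theta^\top\theta^*}{\|\hat\theta\|^2}v}+1\big)^{-1}$ up to an $o(1)$ correction in expected absolute value. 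From here the argument is identical to Theorem~\ref{thm:helpcalibration}: the predicted confidence uses slope $2$, the true conditional uses slope $\frac{2\hat\theta^\top\theta^*}{\|\hat\theta\|^2}=2c_0<2$, and choosing $(\alpha,\beta)$ so that $t=1-c_0$ makes the Mixup predicted slope $2(1-t)$ match the true slope, cutting the calibration error by an $\Omega(1)$ amount while the non-Mixup error stays $\Omega(1)$.

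The main obstacle is Step three: ensuring the $\Omega(1)$ ECE gain survives all the $o(1)$ perturbations at once --- those from $\|\epsilon\|=O_P(\sqrt p/n)$, from the Lipschitz approximation $\hat h\circ g\approx\mathrm{id}$, and especially from the density-ratio error $\delta_u$ of Assumption~\ref{assump}. The subtlety is that Assumption~\ref{assump} only controls $\E_{R_1}[|\delta_u|]$, an $L^1$-type quantity, whereas the ECE integrand is a bounded, $1$-Lipschitz function of the confidence; I would verify that the calibration gap is integrated against this integrable weight so that the $L^1$ control is exactly what is needed, and that the induced error is $o(1)$ uniformly over the bounded range of $v$ that carries the $\Omega(1)$ signal. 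Given the bound $\tfrac{\hat\theta^\top\theta^*}{\|\hat\theta\|^2}\le c_0<1$ from Step two, the remaining estimates are the same concentration arguments already used for Theorem~\ref{thm:helpcalibration}.
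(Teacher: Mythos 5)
Your overall architecture matches the paper's proof: the same mixup algebra giving $\hat\theta^{mix}=(1-t)\hat\theta+t\epsilon$ with $t$ ranging over $(0,1]$, the use of Assumption~\ref{assump} to replace $\Prob(Y=1\mid \hat\theta^\top\hat h(X)=v)$ by the Gaussian logistic formula up to a factor $(1+\delta_v)$ whose effect on the ECE is $o(1)$ because the integrand is bounded, the reduction to showing $\hat\theta^\top\theta^*/\|\hat\theta\|^2\le c_0<1$ with high probability, and the final choice $t=1-c_0$. Your Steps one, three, and the concluding mechanism are essentially the paper's proof.

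The genuine gap is in your Step two (and in the concentration claim inside Step one): you justify $\|\epsilon_n\|^2\approx p/n$ and the ratio bound by asserting that $\hat h(x_i)=\hat h(g(z_i))$ ``concentrates around $z_i$,'' i.e.\ a \emph{pointwise} coupling $\hat h(x_i)\approx z_i$. This follows neither from Assumption~\ref{assump}, which constrains only the one-dimensional marginal densities of $v^\top\hat h(x)$ for fixed $v$, nor from Lipschitzness: if $\hat h\circ g=R$ is a rotation fixing $\theta^*$, then Assumption~\ref{assump} holds exactly and $\hat h\circ g$ is $1$-Lipschitz, yet $\hat h(x_i)$ is nowhere near $z_i$. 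What your argument therefore leaves unestablished is precisely the \emph{lower} bound on the noise in $\hat\theta$ --- that it carries $\Omega_P(p/n)$ of squared norm --- and it is this lower bound on $\|\hat\theta\|^2$, not the upper bound, that forces the ratio strictly below $1$. The paper fills this hole with two distributional arguments you are missing: (i) mean closeness $\|\E[\hat h(x)]-\theta^*\|=o(1)$, obtained by applying Assumption~\ref{assump} with the specific test vectors $v=\E[\hat h(x)]$ and $v=\theta^*$ and combining the two resulting estimates; and (ii) the lower bound $\|\frac{1}{2n}\sum_i y_i\hat h(x_i)-\E[\hat h(x)]\|^2=\Omega_P(p/n)$, proved coordinatewise from variance matching $|\Var(\hat h_j(x))-1|=o(1)$ together with subgaussianity of $\hat h(x)$ --- which is where the Lipschitz hypothesis is actually used, rather than for any pointwise approximation. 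Relatedly, you misplace the main obstacle: the $L^1$ control of $\delta_u$ against the bounded ECE integrand (your Step three) is the easy part and is handled in the paper exactly as you describe; the new work relative to Theorem~\ref{thm:helpcalibration} is the ratio bound itself, where the exact $\chi^2_p$ computation of the Gaussian case is unavailable, so ``the same concentration arguments'' cannot be invoked verbatim.
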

\begin{proof}

Let us first recall Assumption~\ref{assump}:
\begin{assumption}[Assumption~\ref{assump} in the main text]
For any given $v\in\R^p$, $k\in\{-1,1\}$, there exists a $\theta^*\in\R^p$, such that given $y=k$, the probability density function of $R_1=v^\top \hat h(x)$ and $R_2=v^\top h(x)=v^\top z\sim N(k\cdot b^\top\theta^*, \|v\|^2)$ satisfies that $p_{R_1}(u)=p_{R_2}(u)\cdot (1+\delta_u)$ for all $u\in\R$ where $\delta_u$ satisfies $\E_{R_1}[|\delta_u|^2]=o(1)$  when $n\to\infty$. 
\end{assumption}

Using the similar analysis from above, we have
  \begin{align*}
 \E[Y=1\mid \hat f(X)=\frac{1}{e^{-2v}+1}]=&\E[Y=1\mid \hat\theta(t)^\top \hat h(X)=v]\\
 =&\frac{\Prob(\hat\theta(t)^\top \hat h(X)=v\mid Y=1)}{\Prob(\hat\theta^\top \hat h(X)=v\mid Y=1)+\Prob(\hat\theta(t)^\top \hat h(X)=v\mid Y=-1)}\\
  =&\frac{e^{-\frac{(v-\hat\theta(t)^\top\theta)^2}{2\|\hat\theta(t)\|^2}}}{e^{-\frac{(v-\hat\theta(t)^\top\theta)^2}{2\|\hat\theta(t)\|^2}}+e^{-\frac{(v+\hat\theta(t)^\top\theta)^2}{2\|\hat\theta(t)\|^2}}} (1+\delta_v)\\
 =&\frac{1}{e^{-\frac{2\hat\theta(t)^\top\theta}{\|\hat\theta(t)\|^2}\cdot v}+1}(1+\delta_v).
  \end{align*}
  Then by Assumption~\ref{assump} and use the fact that $|\frac{1}{e^{-\frac{2\hat\theta(t)^\top\theta}{\|\hat\theta(t)\|^2}\cdot v}+1}|\le 1$, we have the expected calibration error as 
  \begin{align*}
      ECE=&\E_{v=(\hat\theta^{}(t))^\top X} | \E[Y=1\mid \hat f(X)=\frac{1}{e^{-2v}+1}]-\frac{1}{e^{-2v}+1}|\\
=&\E_{v=(\hat\theta^{}(t))^\top X}[|\frac{1}{e^{-\frac{2\hat\theta(t)^\top\theta}{\|\hat\theta(t)\|^2}\cdot v}+1}-\frac{1}{e^{-2v}+1}|]+o(1)
  \end{align*}
  
 Then, by Assumption~\ref{assump}, we have $\E[|v^\top (\hat h(x)-z)|]\to 0$, where $z\sim\frac{1}{2}N(-\theta,I)+\frac{1}{2}N(\theta,I)$, which implies $\|\frac{1}{n}\sum_{i=1}^n\hat h(x_i)-\frac{1}{n}\sum_{i=1}^n z_i\|=o(\sqrt{p})$. Since $\|\frac{1}{n}\sum_{i=1}^n z_i\|=O_P(\frac{\sqrt p}{n})$, and $|\frac{1}{n}\sum_{i=1}^ny_i|=O_p(\sqrt\frac{{1}}{n})$, we have 
 $$
\| \frac{1}{n}\sum_{i=1}^n\hat h(x_i)\cdot\frac{1}{n}\sum_{i=1}^ny_i\|= o_p(\sqrt\frac{{p}}{n})=o_p(1).
 $$
 As a result, using the same analysis as those in Section~\ref{sec:start}, we have 
$$\hat{\theta}^{mix}=\sum_{i,j=1}^n\bE_{\lambda\sim\cD_\lambda} (\lambda \hat h(x_i)+(1-\lambda)\hat h(x_j))\cdot \tilde{y}_{i,j}(\lambda)/n^2=(1-t)\hat\theta(0)+o_p(1),
$$  
 and therefore $\frac{\hat\theta(t)^\top\theta}{\|\hat\theta(t)\|^2}=\frac{((1-t)\hat\theta(0)+t\epsilon)^\top\theta}{\|(1-t)\hat\theta(0)+t\epsilon)\|_2^2}=\frac{1}{1-t}\frac{\hat\theta(0)^\top\theta}{\|\hat\theta(0)\|^2}+o_P(1)$ and $\hat\theta(t)^\top X=(1-t)\hat\theta(0)^\top X+o_P(1)$, we then have \begin{align*}
ECE=&\E_{v=(\hat\theta^{}(t))^\top X} | \E[Y=1\mid \hat f(X)=\frac{1}{e^{-2v}+1}]-\frac{1}{e^{-2v}+1}|\\
=&\E_{v=(\hat\theta^{}(t))^\top X}[|\frac{1}{e^{-\frac{2\hat\theta(t)^\top\theta}{\|\hat\theta(t)\|^2}\cdot v}+1}-\frac{1}{e^{-2v}+1}|]\\
=&\E_{v=(1-t)(\hat\theta^{}(0))^\top X}[|\frac{1}{e^{-\frac{2\hat\theta(0)^\top\theta}{(1-t)\|\hat\theta(0)\|^2}\cdot v}+1}-\frac{1}{e^{-2v}+1}|]+o_P(1)\\
=&\E_{v=(\hat\theta^{}(0))^\top X}[|\frac{1}{e^{-\frac{2\hat\theta(0)^\top\theta}{\|\hat\theta(0)\|^2}\cdot v}+1}-\frac{1}{e^{-2(1-t)v}+1}|]+o_P(1).
     \end{align*}
     Again, it boils down to studying the quantity $\frac{\hat\theta(0)^\top\theta}{\|\hat\theta(0)\|^2}$, and it suffices to show this quantity is smaller than 1 with high probability. 
     
     Using the same analysis above, recall that we have $\E[|v^\top (\hat h(x)-z)|]\to 0$ for any $\bm v$ with $\|\bm v\|<C$. Let $\tilde\theta=\E_x[\hat h(x)]$ and plugging in $v=\tilde\theta$, we then obtain $|\tilde\theta^\top(\tilde\theta-\theta)|=o(1)$. Also, plugging in $v=\theta$, we obtain $|\theta^\top(\tilde\theta-\theta)|=o(1)$. Combining these two pieces, we obtain $$\|\tilde\theta-\theta\|=o(1).$$
     
     As a result, we have $$
     \|\hat\theta(0)\|^2=\|\frac{1}{2n}\sum_{i=1}^n\hat y_ih(x_i)\|\ge\|\frac{1}{2n}\sum_{i=1}^ny_i\hat h(x_i)-\E[\hat h(x)]\|+\|\E[\hat h(x)]\|=\Omega_P(\sqrt{\frac{p}{n}})+\|\theta\|+o(1),
     $$
     where the term $\Omega_P(\sqrt{\frac{p}{n}})$ is derived as follows.
     
     First of all, we write
     $$
     \|\frac{1}{2n}\sum_{i=1}^n\hat y_ih(x_i)-\E[\hat h(x)]\|^2=\sum_{j=1}^p(\frac{1}{2n}\sum_{i=1}^n\hat y_ih_j(x_i)-\E[\hat h_j(x)])^2.
     $$
     For each coordinate, we have $Var(z_j)=1$ and \begin{equation}\label{eq:var}
     |Var(\hat h_j(x_i))-1|=o(1).
     \end{equation}
     Additionally, since $\hat h(x)$ is sub-gaussian, combining with the inequality~\eqref{eq:var}, we have that $\hat h_j(x_i)$ has subgaussian norm lower bounded by some constant, which implies $$
      \|\frac{1}{n}\sum_{i=1}^n\hat y_ih(x_i)-\E[\hat h(x)]\|^2=\sum_{j=1}^p(\frac{1}{n}\sum_{i=1}^n\hat y_ih_j(x_i)-\E[\hat h_j(x)])^2=\Omega_P(\frac{p}{n}).
     $$
     
     Additionally, we have $$
     \|\hat\theta(0)^\top\theta\|\le \|\theta\|+o(1).
     $$
     Therefore, we have with high probability, $$\frac{\hat\theta(0)^\top\theta}{\|\hat\theta(0)\|^2}<1.$$
     
     \paragraph{Verification of the unknown $\sigma$ case} When $\sigma$ is unknown, we estimate $\sigma$ by $\hat\sigma=\sqrt{\|\sum_{i=1}^n (x_i-y_i\hat\theta)\|^2/pn}$. It's easy to see $|\hat\sigma-\sigma|=O_P(1/\sqrt{n})$. We then let $\hat h(x)=x/\hat\sigma$, and verify for any $v\in\R^p$ with $\|v\|\le C$, 
      $R_1=v^\top \hat h(x)=v^\top x/\hat\sigma$ and $R_2=v^\top x/\sigma$ satisfies that $p_{R_1}(u)=p_{R_2}(u)\cdot (1+\delta_u)$ for all $u\in\R$ where $\delta_u$ satisfies $\E_{R_1}[|\delta_u|^2]=o(1)$  when $n\to\infty$. We have $\hat h$ and $g$ are all Lipschitz with constant $2\sigma$.
      
      When $y$=1, we have $$
p_{R_1}(u)=\frac{1}{\sqrt{2\pi}\sigma/\hat\sigma}\exp\{-\frac{(u-v^\top\theta)^2}{2\sigma^2/\hat\sigma^2}\},      p_{R_2}(u)=\frac{1}{\sqrt{2\pi}}\exp\{-\frac{(u-v^\top\theta)^2}{2}\}
      $$
      Denote $g(a)=\frac{1}{\sqrt{2\pi}a}\exp\{-\frac{(u-v^\top\theta)^2}{2a^2}\}$, we then have $g'(a)=-\frac{1}{\sqrt{2\pi}a^2}\exp\{-\frac{(u-v^\top\theta)^2}{2a^2}\}+\frac{1}{\sqrt{2\pi}a}\exp\{-\frac{(u-v^\top\theta)^2}{2a^2}\}\cdot\frac{(u-v^\top\theta)^2}{a^3}$ and therefore $g'(1)=-\frac{1}{\sqrt{2\pi}}\exp\{-\frac{(u-v^\top\theta)^2}{2}\}+\frac{1}{\sqrt{2\pi}}\exp\{-\frac{(u-v^\top\theta)^2}{2}\}\cdot{(u-v^\top\theta)^2}$. 
      
      We then have $$
      \delta_u=\frac{p_{R_1}(u)-p_{R_2}(u)}{p_{R_2}(u)}=[(u-v^\top\theta)^2-1]\cdot (\frac{\sigma^2}{\hat\sigma^2}-1)=O_P(\frac{(u-v^\top\theta)^2-1}{\sqrt n}).
      $$
      Since $\E_{u\sim R_1}[((u-v^\top\theta)^2-1)^2]=O(1)$, we have $\E_{R_1}[|\delta_u|^2]=o(1)$ when $n\to\infty.$
  \end{proof}

  \subsection{Proof of Theorem~\ref{thm:semi1}}
  
  \begin{theorem}[Restatement of Theorem \ref{thm:semi1}]
Suppose $C_1\sqrt{p/n_l}\le\|\theta\|\le C_2\sqrt{p/n_l}$ for some universal constant $C_1<1/2$ and $C_2>2$, when $p/n_l$, $\|\theta\|$, $n_u$ are sufficiently large, we have with high probability, $$
ECE(\hat \cC_{final})<ECE(\hat \cC_{init}).
$$
\end{theorem}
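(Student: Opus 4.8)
The plan is to reuse the reduction from the proof of Theorem~\ref{thm:helpcalibration}. For any linear rule $\hat\cC(x)=\sgn(\hat\theta^\top x)$ with confidence $1/(1+e^{-2\hat\theta^\top x})$, conditioning on $y=1$ gives $v=\hat\theta^\top X\sim\cN(\hat\theta^\top\theta,\|\hat\theta\|^2)$, and by the symmetry of the two Gaussian components the calibration error folds to $ECE(\hat\cC)=\E_v\big[\big|\tfrac{1}{1+e^{-2\rho v}}-\tfrac{1}{1+e^{-2v}}\big|\big]$ with $\rho=\hat\theta^\top\theta/\|\hat\theta\|^2$. Substituting $v=\kappa w$, $\kappa=\|\hat\theta\|$, turns this into $\E_{w\sim\cN(\mu,1)}\big[\big|\tfrac{1}{1+e^{-2\mu w}}-\tfrac{1}{1+e^{-2\kappa w}}\big|\big]$, where $\mu=\hat\theta^\top\theta/\|\hat\theta\|$ is the signal-to-noise ratio (SNR). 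A Laplace estimate (the mass of $\cN(\mu,1)$ sits near $w=\mu$, where both sigmoids are near saturation and $g(w):=|\cdots|\approx|e^{-2\kappa w}-e^{-2\mu w}|$) shows that, up to polynomial factors in $\mu$, $ECE(\hat\cC)\asymp e^{-\mu^2/2}$ when $\rho$ stays bounded away from $1$, while the prefactor is further depressed by a factor $|1-\rho|$ when $\rho\to1$. Thus everything reduces to comparing the SNRs $\mu_{init},\mu_{final}$ (and, secondarily, $\rho_{init},\rho_{final}$).

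First I would pin down $\mu_{init}$. Writing $\hat\theta_{init}=\theta+\epsilon$ with $\epsilon=\tfrac1{n_l}\sum_i(x_iy_i-\theta)$, concentration gives $\|\epsilon\|^2=p/n_l+O_P(\sqrt p/n_l)$ and $\epsilon^\top\theta=O_P(\|\theta\|/\sqrt{n_l})$, hence $\kappa_{init}^2\approx\|\theta\|^2+p/n_l$ and $\mu_{init}\approx\|\theta\|^2/\sqrt{\|\theta\|^2+p/n_l}$. The central computation is $\mu_{final}$. Set $w_0=\hat\theta_{init}/\|\hat\theta_{init}\|$ and $\nu=w_0^\top\theta=\mu_{init}$; as $n_u$ grows the pseudo-labeled block concentrates around the conditional mean $\E[x^u\sgn(\hat\theta_{init}^\top x^u)\mid\hat\theta_{init}]$, which a folded-Gaussian (Stein) calculation evaluates to $m(\nu)w_0+(2\Phi(\nu)-1)\theta_\perp$, with $m(\nu)=\nu(2\Phi(\nu)-1)+2\phi(\nu)$ and $\theta_\perp=\theta-\nu w_0$. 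Since $\theta=\nu w_0+\theta_\perp$, and $\nu=\mu_{init}\to\infty$ forces $2\Phi(\nu)-1\to1$, $m(\nu)\to\nu$, this mean converges to $\theta$; together with the vanishing weight $\tfrac{n_l}{n_l+n_u}$ of the labeled block, $\hat\theta_{final}\to\theta$, so $\mu_{final}\approx\|\theta\|$ and $\rho_{final}\to1$.

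The comparison is then immediate: $\mu_{final}\approx\|\theta\|>\|\theta\|^2/\sqrt{\|\theta\|^2+p/n_l}\approx\mu_{init}$, and the gap in the exponents is definite, $\mu_{final}^2-\mu_{init}^2\approx\tfrac{\|\theta\|^2(p/n_l)}{\|\theta\|^2+p/n_l}=\Theta(p/n_l)\to\infty$. Hence $ECE(\hat\cC_{final})/ECE(\hat\cC_{init})\lesssim\text{poly}(\mu)\,e^{-(\mu_{final}^2-\mu_{init}^2)/2}\to0$, giving $ECE(\hat\cC_{final})<ECE(\hat\cC_{init})$ with high probability. The effect is in fact reinforced by the slope gap: because $\rho_{final}\to1$, the difference $\kappa_{final}-\mu_{final}=\kappa_{final}|1-\rho_{final}|$ is of the order of the pseudo-label error $e^{-\mu_{init}^2/2}$, contributing an extra small prefactor to $ECE(\hat\cC_{final})$.

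The hard part will be the asymptotic matching, since both calibration errors vanish: unlike Theorem~\ref{thm:helpcalibration}, where $\hat\theta^{mix}$ was collinear with $\hat\theta$ and a single pointwise ordering sufficed, here $\hat\theta_{final}$ and $\hat\theta_{init}$ point in different directions, so one must control the leading-order Laplace asymptotics of $\E_{w\sim\cN(\mu,1)}[g(w)]$ uniformly in $(\mu,\kappa)$ and verify that the finite-$n_u$ fluctuation of $\tfrac1{n_u}\sum_i x^u_iy^u_i$ (of size $O_P(\sqrt{p/n_u})$) perturbs $\mu_{final}$ by far less than the $\Theta(\sqrt{p/n_l})$ SNR separation. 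The constants $C_1<1/2$ and $C_2>2$ serve to keep $s:=\|\theta\|^2/(p/n_l)$ in a fixed compact subinterval of $(0,\infty)$, so that $\rho_{init}=s/(s+1)$ is bounded away from $1$ and the exponent separation $\Theta(p/n_l)$ is uniform.
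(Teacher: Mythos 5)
Your proposal is correct in outline, but it takes a genuinely different route from the paper, and it is worth spelling out the difference. The paper's proof compares the two classifiers through the single scalar $\rho=\hat\theta^\top\theta/\|\hat\theta\|^2$: it shows $\rho_{init}\le C_2^2/(C_2^2+1)$ from $\hat\theta_{init}=\theta+\epsilon_n$, then decomposes $\hat\theta_{final}=\gamma\theta+\tilde\delta$ with error indicators $b_i$ (the Carmon-style analysis), proves concentration for $\gamma$, $\|\tilde\delta\|^2$ and $\tilde\delta^\top\theta$, and concludes $C_2^2/(C_2^2+1)<\rho_{final}<1$; the final inequality $ECE(\hat\cC_{final})<ECE(\hat\cC_{init})$ then rests on treating ECE as a decreasing function of $\rho$ alone on $(0,1)$. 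You instead work in the $(\mu,\kappa)$ parametrization with $\mu=\hat\theta^\top\theta/\|\hat\theta\|$, $\kappa=\|\hat\theta\|$, prove two-sided exponential bounds $ECE \asymp \mathrm{poly}\cdot e^{-\mu^2/2}$, and win by a divergent exponent gap $\mu_{final}^2-\mu_{init}^2=\Theta(p/n_l)$, using the exact folded-Gaussian mean $\E[x\,\sgn(w_0^\top x)]=(2\Phi(\nu)-1)\theta+2\phi(\nu)w_0$ in place of the paper's indicator decomposition. What your route buys is significant: the paper's monotonicity-in-$\rho$ step is actually delicate, because $ECE(\hat\cC_{init})$ and $ECE(\hat\cC_{final})$ are expectations under \emph{different} score distributions (different $\hat\theta^\top\theta$ and $\|\hat\theta\|$), so comparing them through $\rho$ alone is not literally justified — you correctly identify this as "the hard part," and your SNR/exponent comparison is robust to exactly this issue, at the price of having to establish the Laplace asymptotics uniformly. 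It also yields a stronger conclusion (the ratio of the two ECEs tends to $0$, not merely an ordering). One minor imprecision: the dominant contribution to $\E_{w\sim\cN(\mu,1)}[\,|\sigma(2\mu w)-\sigma(2\kappa w)|\,]$ comes from scores near the decision boundary $w\approx 0$, not from the bulk $w\approx\mu$ as your parenthetical suggests; the clean way to get your upper bound is $|\sigma(2\kappa w)-\sigma(2\mu w)|\le e^{-2\min(\mu,\kappa)|w|}$ together with
\begin{equation*}
\int_0^\infty e^{-2\mu w}\,\phi(w-\mu)\,dw=\int_0^\infty \phi(w+\mu)\,dw=1-\Phi(\mu)\lesssim e^{-\mu^2/2}/\mu,
\end{equation*}
and the matching lower bound for $\hat\cC_{init}$ uses that $\rho_{init}$ is bounded away from $1$ (your observation about the role of $C_1,C_2$), so the transition region $w=O(1/\mu_{init})$ carries an order-one discrepancy with density $\gtrsim\phi(\mu_{init})$. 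With that repair, your argument goes through.
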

\begin{proof}
According to the proof in the above section, we have $$
ECE(\hat C_{final})=\E_{v=(\hat\theta_{final})^\top X}[|\frac{1}{e^{-\frac{2\hat\theta_{final}^\top\theta}{\|\hat\theta_{final}\|^2}\cdot v}+1}-\frac{1}{e^{-2v}+1}|],
$$
and 
$$
ECE(\hat C_{init})=\E_{v=(\hat\theta_{init})^\top X}[|\frac{1}{e^{-\frac{2\hat\theta_{init}^\top\theta}{\|\hat\theta_{init}\|^2}\cdot v}+1}-\frac{1}{e^{-2v}+1}|].
$$

 For the initial estimator, we  use $\hat\theta_{init}=\hat\theta(0)=\theta+\epsilon_n$ with $\epsilon_n=\frac{1}{n}\sum_{i=1}^n x_iy_i-\theta$, we then have 
  $$
  \frac{\hat\theta(0)^\top\theta}{\|\hat\theta(0)\|^2}=\frac{\|\theta\|^2}{\|\theta\|^2+p/n_l}+O_P(\frac{1}{\sqrt n_l}).
  $$
  When $C_1\sqrt{p/n_l}\le\|\theta\|\le C_2\sqrt{p/n_l}$, we have  $$
   \frac{\hat\theta_{init}^\top\theta}{\|\hat\theta_{init}\|^2}=\frac{\hat\theta(0)^\top\theta}{\|\hat\theta(0)\|^2}\le\frac{C_2^2}{C_2^2+1}.
  $$
  
In the case where we combine the unlabeled data, we follow the similar analysis of \citet{carmon2019unlabeled} to study the property of $y_i^u$. Let $b_i$ be the indicator that the $i$-th pseudo-label is incorrect, so that $x_i^u\sim N((1-2b_i)y_i^u\theta,I):=(1-2b_i)y_i^u\theta+\epsilon_i^u$. Then we can write $$\hat\theta_{final}=\gamma\theta+\tilde\delta,$$
where $\gamma=\frac{1}{n_u}\sum_{i=1}^{n_u}(1-2b_i)$ and $\tilde\delta=\frac{1}{n_u}\sum_{i=1}^{n_u} \epsilon_i^uy_i^u$.

We then derive concentration bounds for $\|\tilde\delta\|^2$ and $\theta^\top\tilde\delta$. Recall that $y^u_i=sgn(\hat\theta_{init}^\top x_i^u)$, we choose a coordinate system such that the first coordinate is in the direction of $\hat\theta_{init}$,
and let $v^{(i)}$ denote the $i$-th entry of vector $v$ in this coordinate system. Then $y_i^u=sgn(x_i^{u(1)})=sgn(\theta^{\top}\hat\theta_{init}+\epsilon_i^{u(1)})$. 

Under this coordinate system, for $j\ge 2$, we have $\epsilon_i^{u(j)}$ are independent with $y_i^{u}$ and therefore $\epsilon_i^{u(j)}y_i^u\sim N(0,1)$ for all $j\ge 2$. For the first coordinate, since $\theta^{\top}\hat\theta_{init}=\Omega_P(1)$, we have $|\E[\epsilon_i^{u(1)}y_i^u]|=|\E[\epsilon_i^{u(1)}sgn(\theta^{\top}\hat\theta_{init}+\epsilon_i^{u(1)})]|=\Omega_P(1)$.

Then we have $$
\sum_{j=1}^p (\frac{1}{n_u}\sum_{i=1}^{n_u}\epsilon_i^{u(j)}y_i^u)^2=(\frac{1}{n_u}\sum_{i=1}^{n_u}\epsilon_i^{u(1)}y_i^u)^2+\sum_{j=2}^p (\frac{1}{n_u}\sum_{i=1}^{n_u}\epsilon_i^{u(j)}y_i^u)^2=\Omega_P(1)+\frac{p+O_p(\sqrt p)}{n_u}.
$$
The same analysis also yields $$
|\tilde\delta^\top\theta|=|\frac{1}{n_u}\sum_{i=1}^{n_u}\epsilon_i^{u(1)}y_i^u\theta_1+\sum_{j=2}^p\frac{1}{n_u}\sum_{i=1}^{n_u}\epsilon_i^{u(j)}y_i^u\theta_j|= \Omega_P(|\theta_1|)+O_P(\|\theta\|_2/\sqrt{n}).
$$

Moreover, the proportion of misclassified samples converge the misclassification error produced by $\hat C_{init}$: $$\gamma=\frac{1}{n_u}\sum_{i=1}^{n_u}(1-2b_i)\to 1-\exp(-c\|\theta\|^2)+O_P(\frac{1}{n}).$$

This implies \begin{align*}
    \frac{\hat\theta_{final}^\top\theta}{\|\hat\theta_{final}\|^2}=\frac{\gamma\|\theta\|^2+\tilde\delta^\top\theta}{\gamma^2\|\theta\|^2+\|\tilde\delta\|^2+2\tilde\delta^\top\theta}\sim \frac{\gamma\|\theta\|^2+\Omega_P(|\theta_1|)}{\gamma^2\|\theta\|^2+\Omega_P(\frac{n_u+p}{n_u})+\Omega_P(|\theta_1|)}.
\end{align*}

  When $\|\theta\|$ is sufficiently large (which implies $p/n_l$ is sufficiently large), we have $$
\frac{C_2^2}{C_2^2+1}<   \frac{\hat\theta_{final}^\top\theta}{\|\hat\theta_{final}\|^2}<1,
  $$
  implying $$
   \frac{\hat\theta_{init}^\top\theta}{\|\hat\theta_{init}\|^2}<\frac{\hat\theta_{final}^\top\theta}{\|\hat\theta_{final}\|^2}<1,
  $$
  and therefore $$
  ECE(\hat \cC_{final})<ECE(\hat \cC_{init}).
$$
\end{proof}
\subsection{Proof of Theorem~\ref{thm:semi2}}

\begin{theorem}[Restatement of Theorem \ref{thm:semi2} ]
If $\|\theta\|<C$ for some constant $C>2$, given fixed $p$ and let $n_l$ and $n_{u}\to\infty$ with $p$ fixed, then with high probability probability, $$
ECE(\hat \cC_{init})<ECE(\hat \cC_{final}).
$$
\end{theorem}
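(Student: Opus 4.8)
The plan is to reduce the comparison, exactly as in the proofs of Theorem~\ref{thm:helpcalibration} and Theorem~\ref{thm:semi1}, to the behaviour of the single scalar ratio $r(\hat\theta)\defeq \hat\theta^\top\theta/\|\hat\theta\|^2$. Those proofs show that, up to a vanishing error, $ECE(\hat{\mathcal{C}})=\E_{v=\hat\theta^\top X}\big[\,\big|(1+e^{-2r(\hat\theta)v})^{-1}-(1+e^{-2v})^{-1}\big|\,\big]$, a continuous functional that equals $0$ exactly when $r(\hat\theta)=1$ and is bounded below by a positive constant whenever $r(\hat\theta)$ stays bounded away from $1$ (the limiting law of $v$ being a fixed Gaussian mixture). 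It therefore suffices to prove that, in the regime $p$ fixed and $n_l,n_u\to\infty$, the initial ratio $r(\hat\theta_{init})\to1$ whereas the final ratio $r(\hat\theta_{final})$ converges to a constant strictly below $1$.

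The initial half is immediate. Writing $\hat\theta_{init}=\theta+\epsilon_{n_l}$ with $\epsilon_{n_l}=\frac1{n_l}\sum_{i=1}^{n_l}x_iy_i-\theta$, fixedness of $p$ gives $\|\epsilon_{n_l}\|^2=O_P(p/n_l)\to0$ and $\epsilon_{n_l}^\top\theta=O_P(\|\theta\|/\sqrt{n_l})\to0$, so $r(\hat\theta_{init})=\frac{\|\theta\|^2+\epsilon_{n_l}^\top\theta}{\|\theta\|^2+\|\epsilon_{n_l}\|^2+2\epsilon_{n_l}^\top\theta}\to1$ and hence $ECE(\hat{\mathcal{C}}_{init})\to0$.

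For the final estimator I would invoke the law of large numbers, legitimate because $p$ is fixed. The labeled average obeys $\frac1{n_l}\sum_{i=1}^{n_l}x_iy_i\to\theta$, while the pseudo-labeled average obeys $\frac1{n_u}\sum_{i=1}^{n_u}x_i^uy_i^u\to\E[x^u\,\sgn(\theta^\top x^u)]$, where the sign direction $\hat\theta_{init}$ is replaced by its limit $\theta$ at negligible cost. Decomposing $x^u$ along $u\defeq\theta/\|\theta\|$ and using the symmetry of the mixture $\tfrac12 N(\theta,I)+\tfrac12 N(-\theta,I)$ annihilates the orthogonal component, leaving $\E[x^uy^u]=\E|u^\top x^u|\cdot u=\frac{m(\|\theta\|)}{\|\theta\|}\theta$ with $m(\mu)=\E|Z+\mu|=\mu(2\Phi(\mu)-1)+2\phi(\mu)$ the folded-normal mean ($Z\sim N(0,1)$, and $\Phi,\phi$ the standard normal cdf and pdf). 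Both limits being parallel to $\theta$, we get $\hat\theta_{final}\to a\theta$ with $a=\frac{n_l}{n_l+n_u}+\frac{n_u}{n_l+n_u}\frac{m(\|\theta\|)}{\|\theta\|}$, so $r(\hat\theta_{final})=1/a$. The decisive inequality is $m(\mu)/\mu>1$ for all $\mu>0$: since $m(\mu)/\mu-1=\frac{2\phi(\mu)}{\mu}-2\Phi(-\mu)$, the Mills-ratio bound $\Phi(-\mu)<\phi(\mu)/\mu$ makes it strictly positive. Hence $a>1$, and because $\|\theta\|<C$ keeps $m(\|\theta\|)/\|\theta\|$ a fixed constant exceeding $1$ while the unlabeled fraction $n_u/(n_l+n_u)$ stays bounded away from $0$ (the abundant-unlabeled-data regime), the gap $1-r(\hat\theta_{final})=1-1/a$ is bounded away from $0$. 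Combined with $ECE(\hat{\mathcal{C}}_{init})\to0$ this yields $ECE(\hat{\mathcal{C}}_{init})<ECE(\hat{\mathcal{C}}_{final})$ with high probability.

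I expect the main obstacle to be the pseudo-labeled-mean computation: I must show that the systematic mislabeling near the decision boundary makes $\hat\theta_{final}$ \emph{over-scaled} (the factor $a>1$) rather than merely noisy, which is exactly what pushes $r(\hat\theta_{final})$ below $1$ and reverses the conclusion relative to the high-dimensional Theorem~\ref{thm:semi1}. This requires controlling the replacement of $\hat\theta_{init}$ by $\theta$ inside the indicator uniformly enough to identify the limiting mean as $\frac{m(\|\theta\|)}{\|\theta\|}\theta$, and the Mills-ratio step pinning down $a>1$ is the crux of the argument.
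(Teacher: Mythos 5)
Your proposal is correct, and it shares the paper's overall skeleton — reduce both ECEs to the scalar ratio $r(\hat\theta)=\hat\theta^\top\theta/\|\hat\theta\|^2$, show $r(\hat\theta_{init})\to 1$ so $ECE(\hat{\mathcal{C}}_{init})\to 0$, and show $r(\hat\theta_{final})$ stays bounded below $1$ so $ECE(\hat{\mathcal{C}}_{final})=\Omega(1)$ — but the decisive step is executed by a genuinely different route. The paper never computes a limit for $\hat\theta_{final}$: it decomposes $\hat\theta_{final}=\gamma\theta+\tilde\delta$ with $\gamma=\frac{1}{n_u}\sum_i(1-2b_i)$ (signed fraction of correct pseudo-labels) and $\tilde\delta=\frac{1}{n_u}\sum_i\epsilon_i^u y_i^u$, works in coordinates aligned with $\hat\theta_{init}$, and argues that the noise--pseudo-label correlation gives $\tilde\delta$ an $\Omega_P(1)$ component along $\hat\theta_{init}$, which makes the ratio $<1$ by order-of-magnitude bookkeeping. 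You instead identify the exact almost-sure limit $\hat\theta_{final}\to a\theta$ with $a=\frac{n_l}{n_l+n_u}+\frac{n_u}{n_l+n_u}\frac{m(\|\theta\|)}{\|\theta\|}$, where $m(\mu)=\mu(2\Phi(\mu)-1)+2\phi(\mu)$ is the folded-normal mean, and pin down $a>1$ via the Mills-ratio bound $\Phi(-\mu)<\phi(\mu)/\mu$. The two mechanisms are in fact algebraically the same phenomenon — in the limit, $\gamma\|\theta\|+\|\tilde\delta\|=(1-2\Phi(-\mu))\mu+2\phi(\mu)=m(\mu)$, and the paper's ratio also tends to $\|\theta\|/m(\|\theta\|)=1/a$ — but your version buys explicit constants and a transparent interpretation (the pseudo-labeled estimator is \emph{over-scaled}), at the cost of having to justify replacing $\hat\theta_{init}$ by $\theta$ inside the sign (a continuity-of-$w\mapsto\E[x\,\sgn(w^\top x)]$ argument that is standard, given $\hat\theta_{init}$ is independent of the unlabeled data and $\theta\neq 0$), whereas the paper works with $\hat\theta_{init}$ directly and avoids that step. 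One caveat applies to both proofs: the conclusion needs the unlabeled fraction $n_u/(n_l+n_u)$ to stay bounded away from $0$ (otherwise $a\to1$ and both ECEs vanish, requiring a second-order comparison); you state this explicitly, while the paper assumes it implicitly by dropping the labeled-data term from $\hat\theta_{final}$ altogether, so this is not a gap relative to the paper's own argument.
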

\begin{proof}
Using the same analysis as in the above section, we have $$
ECE(\hat \cC_{final})=\E_{v=(\hat\theta_{final})^\top X}[|\frac{1}{e^{-\frac{2\hat\theta_{final}^\top\theta}{\|\hat\theta_{final}\|^2}\cdot v}+1}-\frac{1}{e^{-2v}+1}|],
$$
and 
$$
ECE(\hat \cC_{init})=\E_{v=(\hat\theta_{init})^\top X}[|\frac{1}{e^{-\frac{2\hat\theta_{init}^\top\theta}{\|\hat\theta_{init}\|^2}\cdot v}+1}-\frac{1}{e^{-2v}+1}|].
$$

 For the initial estimator, we  use $\hat\theta_{init}=\hat\theta(0)=\theta+\epsilon_n$ with $\epsilon_n=\frac{1}{n}\sum_{i=1}^n x_iy_i-\theta$, we then have 
  $$
  \frac{\hat\theta(0)^\top\theta}{\|\hat\theta(0)\|^2}=\frac{\|\theta\|^2}{\|\theta\|^2+p/n_l}+O_P(\frac{1}{\sqrt n_l}).
  $$
  When  $\|\theta\|<C$ for some constant $C>2$, given  $p$ fixed and $n_l
  \to\infty$, we have  $$
   \frac{\hat\theta_{init}^\top\theta}{\|\hat\theta_{init}\|^2}=\frac{\hat\theta(0)^\top\theta}{\|\hat\theta(0)\|^2}\to 1.
  $$
  For the semi-supervised classifier, when $\|\theta\|<C$ and $n_u\to\infty$, we have 
  \begin{align*}
    \frac{\hat\theta_{final}^\top\theta}{\|\hat\theta_{final}\|^2}=\frac{\gamma\|\theta\|^2+\tilde\delta^\top\theta}{\gamma^2\|\theta\|^2+\|\tilde\delta\|^2+2\tilde\delta^\top\theta}\sim \frac{\gamma\|\theta\|^2+\Omega_P(|\theta_1|)}{\gamma^2\|\theta\|^2+\Omega_P(\frac{n_u+p}{n_u})+\Omega_P(|\theta_1|)}<1.
\end{align*}
As a result, $$
   \frac{\hat\theta_{final}^\top\theta}{\|\hat\theta_{final}\|^2}<\frac{\hat\theta_{init}^\top\theta}{\|\hat\theta_{init}\|^2}\le 1,
  $$
  and therefore $$
ECE(\hat \cC_{init})<  ECE(\hat \cC_{final}).
$$

\end{proof}
\subsection{Proof of Theorem~\ref{thm:helpcalibrationsemi}}

\begin{theorem}[Restatement of Theorem \ref{thm:helpcalibrationsemi} ]
Under the setup described above, and denote the ECE of $\hat \cC_{final}$ and $\hat \cC_{mix,final}$ by $ECE(\hat \cC_{final})$ and  $ECE(\hat \cC_{mix,final})$ respectively. If $C_1<\|\theta\|_2<C_2$ for some universal constants $C_1,C_2$ (not depending on $n$ and $p$), then for sufficiently large $p$ and $n_l, n_{u}$, there exists $\alpha,\beta>0$, such that when the Mixup distribution $\lambda\sim Beta(\alpha,\beta)$, with high probability, we have
 $$
ECE(\hat \cC_{mix,final})<ECE(\hat \cC_{final}).
$$
\end{theorem}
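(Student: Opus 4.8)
The plan is to reduce the semi-supervised Mixup comparison to exactly the same sigmoid-matching problem that drives the supervised proof of Theorem~\ref{thm:helpcalibration}, with $\hat\theta(0)$ replaced by $\hat\theta_{final}$. First I would expand the Mixup estimator on the pooled data set $\{x^{l,u}_{i,j}(\lambda),y^{l,u}_{i,j}(\lambda)\}$ of size $n_l+n_u$. As in the supervised expansion, the double sum collapses to
$$\hat\theta_{mix,final}(\lambda) = [1-2\lambda(1-\lambda)]\,\hat\theta_{final} + 2\lambda(1-\lambda)\,\bar x\,\bar y,$$
where $\bar x$ and $\bar y$ are the pooled input and label means. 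Taking $\bE_{\lambda\sim Beta(\alpha,\beta)}$ yields $\hat\theta_{mix,final} = (1-t)\hat\theta_{final} + t\,\epsilon'$ for some $t\in(0,1]$ controlled by $(\alpha,\beta)$ and a cross-term $\epsilon' = \bar x\,\bar y$. I would then argue that $\|\epsilon'\| = o_P(1)$ relative to $\hat\theta_{final}$: the pooled label mean satisfies $|\bar y| = O_P(1/\sqrt{n_l+n_u})$ (the true labels are balanced, and the pseudo-labels are approximately balanced by symmetry of $\hat\cC_{init}$), while $\|\bar x\|$ is of constant-or-smaller order, so the product is negligible, exactly as in Section~\ref{sec:start}.

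With the cross-term controlled, the identical conditional-probability computation gives
$$ECE(\hat\cC_{mix,final}) = \E_{v=\hat\theta_{final}^\top X}\Big[\Big|\tfrac{1}{e^{-\frac{2\hat\theta_{final}^\top\theta}{\|\hat\theta_{final}\|^2} v}+1} - \tfrac{1}{e^{-2(1-t)v}+1}\Big|\Big] + o_P(1),$$
whereas the no-Mixup error is the same expression with $t=0$. Everything therefore hinges on the single scalar $\rho := \hat\theta_{final}^\top\theta / \|\hat\theta_{final}\|^2$: if $\rho$ is bounded away from $1$, then (following the event-$\mathcal E$ argument of Theorem~\ref{thm:helpcalibration}) choosing $\alpha,\beta$ so that the shrinkage factor $1-t$ sits strictly between $\rho$ and $1$ reduces the discrepancy uniformly in $v$, hence drives the Mixup ECE below the standard one.

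The key step, and the main obstacle, is showing $\rho \le 1-c$ for some constant $c>0$ with high probability under the sole hypothesis $C_1<\|\theta\|<C_2$. For this I would import the concentration analysis already developed in the proofs of Theorems~\ref{thm:semi1} and~\ref{thm:semi2}: writing $\hat\theta_{final} = \gamma\theta + \tilde\delta$ with $\gamma = \frac{1}{n_u}\sum_i(1-2b_i)$ and $\tilde\delta = \frac{1}{n_u}\sum_i\epsilon_i^u y_i^u$, one has
$$\rho = \frac{\gamma\|\theta\|^2 + \tilde\delta^\top\theta}{\gamma^2\|\theta\|^2 + \|\tilde\delta\|^2 + 2\tilde\delta^\top\theta}.$$
The crucial feature is the strictly positive noise energy $\|\tilde\delta\|^2 = \Omega_P\big(1 + p/n_u\big)$ carried by the denominator but absent from the numerator; combined with $\gamma \to 1 - \exp(-c\|\theta\|^2) \in (0,1)$ and $|\tilde\delta^\top\theta| = O_P(|\theta_1| + \|\theta\|/\sqrt{n_u})$, this forces a uniform gap $\rho \le 1-c$ once $p$ is large and $\|\theta\|$ is of constant order. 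Because this gap holds under the hypotheses of both Theorem~\ref{thm:semi1} and Theorem~\ref{thm:semi2}, the improvement is simultaneous with (and independent of) whether pseudo-labeling alone helped or hurt calibration, which is precisely the claim that Mixup robustly repairs calibration in the semi-supervised setting.
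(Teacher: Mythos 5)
Your proposal matches the paper's own proof essentially step for step: the same expansion of the pooled Mixup estimator into $(1-t)\hat\theta_{final}$ plus a negligible cross-term $\bar x\,\bar y$, the same reduction of both ECEs to the sigmoid-mismatch expression governed by $\rho = \hat\theta_{final}^\top\theta/\|\hat\theta_{final}\|^2$, and the same decomposition $\hat\theta_{final}=\gamma\theta+\tilde\delta$ imported from the proofs of Theorems~\ref{thm:semi1} and~\ref{thm:semi2}, with the noise energy $\|\tilde\delta\|^2=\Omega_P(1+p/n_u)$ in the denominator forcing $\rho$ below $1$ so that a suitable $Beta(\alpha,\beta)$ shrinkage $1-t$ can be placed between $\rho$ and $1$. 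Your only refinement—insisting on $\rho\le 1-c$ for a constant $c>0$ rather than merely $\rho<1$, so the gap survives the $o_P(1)$ error terms—is a point of care the paper leaves implicit, not a different route.
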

\begin{proof}
Using the same analysis as those in Section~\ref{sec:start}, we have 

Using the similar analysis from above, we have
  \begin{align*}
 \E[Y=1\mid \hat f(X)=\frac{1}{e^{-2v}+1}]=&\E[Y=1\mid \hat\theta^\top X=v]\\
 =&\frac{\Prob(\hat\theta^\top X=v\mid Y=1)}{\Prob(\hat\theta^\top X=v\mid Y=1)+\Prob(\hat\theta^\top X=v\mid Y=-1)}\\
 =&\frac{1}{e^{-\frac{2\hat\theta^\top\theta}{\|\hat\theta\|^2}\cdot v}+1}.
  \end{align*}
  Then  we have the expected calibration error as 
  \begin{align*}
      ECE=&\E_{v=(\hat\theta^{}(t))^\top X} | \E[Y=1\mid \hat f(X)=\frac{1}{e^{-2v}+1}]-\frac{1}{e^{-2v}+1}|\\
=&\E_{v=(\hat\theta^{}(t))^\top X}[|\frac{1}{e^{-\frac{2\hat\theta(t)^\top\theta}{\|\hat\theta(t)\|^2}\cdot v}+1}-\frac{1}{e^{-2v}+1}|]+o(1)
  \end{align*}
  
 Then, since $\|\frac{1}{n_u}\sum_{i=1}^{n_u} x_i^u\|=O_P(\frac{\sqrt p}{n_u})$, and $|\frac{1}{n}\sum_{i=1}^ny^u_i|=O_p(\sqrt\frac{{1}}{n_u})$, we have 
 $$
\| \frac{1}{n_u}\sum_{i=1}^{n_u} x_i^u\cdot\frac{1}{n_u}\sum_{i=1}^{n_u}y_i^u\|= O_p(\frac{\sqrt{p}}{n_u})=o_p(1).
 $$
 As a result, using the same analysis as those in Section~\ref{sec:start}, and denote $\hat\theta(0)=\hat\theta_{final}$ we have 
$$\hat{\theta}_{final, mix}=\sum_{i,j=1}^n\bE_{\lambda\sim\cD_\lambda} (\lambda \hat h(x_i)+(1-\lambda)\hat h(x_j))\cdot \tilde{y}_{i,j}(\lambda)/n^2=(1-t)\hat\theta(0)+o_p(1),
$$  
 and therefore $\frac{\hat\theta(t)^\top\theta}{\|\hat\theta(t)\|^2}=\frac{1}{1-t}\frac{\hat\theta(0)^\top\theta}{\|\hat\theta(0)\|^2}+o_P(1)$ and $\hat\theta(t)^\top X=(1-t)\hat\theta(0)^\top X+o_P(1)$, we then have \begin{align*}
ECE=&\E_{v=(\hat\theta^{}(t))^\top X} | \E[Y=1\mid \hat f(X)=\frac{1}{e^{-2v}+1}]-\frac{1}{e^{-2v}+1}|\\
=&\E_{v=(\hat\theta^{}(t))^\top X}[|\frac{1}{e^{-\frac{2\hat\theta(t)^\top\theta}{\|\hat\theta(t)\|^2}\cdot v}+1}-\frac{1}{e^{-2v}+1}|]\\
=&\E_{v=(1-t)(\hat\theta^{}(0))^\top X}[|\frac{1}{e^{-\frac{2\hat\theta(0)^\top\theta}{(1-t)\|\hat\theta(0)\|^2}\cdot v}+1}-\frac{1}{e^{-2v}+1}|]+o_P(1)\\
=&\E_{v=(\hat\theta^{}(0))^\top X}[|\frac{1}{e^{-\frac{2\hat\theta(0)^\top\theta}{\|\hat\theta(0)\|^2}\cdot v}+1}-\frac{1}{e^{-2(1-t)v}+1}|]+o_P(1).
     \end{align*}
     Again, it boils down to studying the quantity $\frac{\hat\theta(0)^\top\theta}{\|\hat\theta(0)\|^2}$, and it suffices to show this quantity is smaller than 1 with high probability. 
     
     To see this, when $C_1<\|\theta\|_2<C_2$ and $n_u,p$ sufficiently large, we have with high probability, \begin{align*}
    \frac{\hat\theta_{final}^\top\theta}{\|\hat\theta_{final}\|^2}=\frac{\gamma\|\theta\|^2+\tilde\delta^\top\theta}{\gamma^2\|\theta\|^2+\|\tilde\delta\|^2+2\tilde\delta^\top\theta}\sim \frac{\gamma\|\theta\|^2+\Omega_P(|\theta_1|)}{\gamma^2\|\theta\|^2+\Omega_P(\frac{n_u+p}{n_u})+\Omega_P(|\theta_1|)}<1.
\end{align*}
Therefore, there exists $\alpha,\beta>0$, such that when the Mixup distribution $\lambda\sim Beta(\alpha,\beta)$, with high probability, we have
 $$
ECE(\hat \cC_{mix,final})<ECE(\hat \cC_{final}).
$$
\end{proof}

\subsection{Proof of Theorem~\ref{thm:helpcalibrationMCE}}
 \begin{theorem}[Restatement of Theorem \ref{thm:helpcalibrationMCE}]
	Under the settings described in Theorem \ref{thm:helpcalibration}, there exists $c_2>c_1>0$, when $p/n\in(c_1,c_2)$ and $\|\theta\|_2<C$ for some universal constants $ C>0$ (not depending on $n$ and $p$), then for sufficiently large $p$ and $n$, there exist $\alpha,\beta>0$, such that when the distribution $\cD_\lambda$ is chosen as $Beta(\alpha,\beta)$, with high probability,
	$$
	MCE(\hat{\mathcal{C}}^{mix})<MCE(\hat{\mathcal{C}}).
	$$
\end{theorem}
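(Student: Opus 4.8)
The plan is to mirror the proof of Theorem~\ref{thm:helpcalibration} verbatim up to the point where the expectation over the logit appears, and then replace that expectation by a supremum, so that everything reduces to a one-dimensional monotonicity statement about sigmoids. Write $g(u)=1/(e^{-2u}+1)$ and, exactly as before, set $\hat\theta(t)=(1-t)\hat\theta(0)+t\epsilon$ with $\|\epsilon\|=O_P(\sqrt p/n)$, where $t\in(0,1)$ is the scalar determined by $\alpha,\beta$. The same conditional-density computation used for ECE shows that, conditional on the data, the true calibration curve of the classifier $\hat\theta(t)$ at logit value $v$ is $g(b_t v)$ with $b_t:=\hat\theta(t)^\top\theta/\|\hat\theta(t)\|^2$, while the reported confidence at that logit is $g(v)$. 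Since the confidence is a strictly monotone function of the absolute logit and the pointwise gap $v\mapsto|g(b_tv)-g(v)|$ is even in $v$ (using $g(-u)=1-g(u)$), maximizing over the confidence range $[1/2,1]$ equals taking $\sup_{v\ge0}$. Hence, with $a:=\hat\theta(0)^\top\theta/\|\hat\theta(0)\|^2$,
\begin{equation*}
MCE(\hat{\mathcal{C}})=\Phi(a),\qquad MCE(\hat{\mathcal{C}}^{mix})=\Phi(b_t),\qquad \Phi(b):=\sup_{v\ge0}\bigl|g(bv)-g(v)\bigr|,
\end{equation*}
and the expansion already derived for Theorem~\ref{thm:helpcalibration} gives $b_t=a/(1-t)+O_P(\sqrt p/n)$.

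Next I would import the concentration estimate from that proof: under $p/n\in(c_1,c_2)$ and $\|\theta\|_2<C$, there is a constant $c_0\in(0,1)$ with $a\le c_0<1$ on a high-probability event $\mathcal E$, and $a$ concentrates around the deterministic constant $\|\theta\|^2/(\|\theta\|^2+p/n)\in(0,1)$, which is bounded away from both $0$ and $1$. I then choose $\alpha,\beta$ (equivalently $t$) so that $1-t\in(c_0,1)$; on $\mathcal E$ this forces $a<b_t=a/(1-t)<1$, with $b_t$ separated from $a$ and from $1$ by constants.

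The heart of the argument is a monotonicity lemma for $\Phi$: for $0<b_1<b_2\le1$ one has $g(v)-g(b_2v)<g(v)-g(b_1v)$ for every $v>0$ (since $g$ is strictly increasing and $b_2v>b_1v$), so evaluating at the maximizer $v_2>0$ of $v\mapsto g(v)-g(b_2v)$ gives $\Phi(b_2)=g(v_2)-g(b_2v_2)<g(v_2)-g(b_1v_2)\le\Phi(b_1)$; thus $\Phi$ is strictly decreasing on $(0,1]$ with $\Phi(1)=0$, and each maximizer is attained at a finite $v^*=O(1)$ because $g(v)-g(bv)$ vanishes both at $0$ and as $v\to+\infty$. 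Applying this with $b_1=a$, $b_2=b_t$ yields $\Phi(b_t)<\Phi(a)$; and since $a$ and $b_t$ converge to distinct constants in $(0,1)$, the improvement $\Phi(a)-\Phi(b_t)$ is bounded below by a positive constant. This $\Omega(1)$ gap dominates the $o_P(1)$ error arising from the $O_P(\sqrt p/n)$ perturbation of $b_t$, so with high probability $MCE(\hat{\mathcal{C}}^{mix})=\Phi(b_t)<\Phi(a)=MCE(\hat{\mathcal{C}})$.

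I expect the main obstacle to be exactly the step the ECE proof gets for free: there the $O_P(\sqrt p/n)$ perturbation is averaged away inside an expectation, whereas here it must survive a supremum. The fix is to show the extremal logit $v^*$ is $O(1)$ (so the slope perturbation $\delta\,v^*$ stays $o_P(1)$) and that the idealized improvement $\Phi(a)-\Phi(b_t)$ is $\Omega(1)$ rather than merely positive; both follow from the strict monotonicity and continuity of $\Phi$ combined with the concentration of $a$ away from $1$. All remaining ingredients — the conditional-density formula and the expansion $b_t=a/(1-t)+O_P(\sqrt p/n)$ — are identical to those established for Theorem~\ref{thm:helpcalibration}.
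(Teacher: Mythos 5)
Your proposal is correct and follows essentially the same route as the paper's proof: both reduce MCE to the supremum over $v$ of a gap between two sigmoids with slopes $\hat\theta(0)^\top\theta/\|\hat\theta(0)\|^2$ (true calibration curve) and $1-t$ (reported confidence), show the former is at most some $c_0<1$ with high probability when $p/n\in(c_1,c_2)$, choose $t$ so the reported slope lies strictly between the two, and conclude by evaluating the pointwise gap at the maximizer --- your monotonicity lemma for $\Phi$ is exactly the paper's pointwise-comparison step in disguise. Your added care in checking that the extremal logit is $O(1)$ and the idealized improvement is $\Omega(1)$, so that the $O_P(\sqrt{p}/n)$ slope perturbation cannot flip the inequality under the supremum, makes explicit a step the paper's proof leaves implicit.
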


\begin{proof}




Following the calculation of Theorem \ref{thm:helpcalibration}, we consider the maximum calibration error,
$$
MCE=\max_{v} | \E[Y=1\mid \hat f(X)=\frac{1}{e^{-2v}+1}]-\frac{1}{e^{-2v}+1}|.
$$
In addition, we have 
\begin{align*}
	\E[Y=1\mid \hat f(X)=\frac{1}{e^{-2v}+1}]
	=\frac{1}{e^{-\frac{2\hat\theta(t)^\top\theta}{\|\hat\theta(t)\|^2}\cdot v}+1},
\end{align*}
where $$t=\frac{2\alpha\beta(\alpha+\beta)}{(\alpha+\beta)^2(\alpha+\beta+1)}.$$
Let us denote $\rho = \frac{\hat\theta(t)^\top\theta}{\|\hat\theta(t)\|^2}$. Since $\frac{\hat\theta(t)^\top\theta}{\|\hat\theta(t)\|^2}=\frac{((1-t)\hat\theta(0)+t\epsilon)^\top\theta}{\|(1-t)\hat\theta(0)+t\epsilon)\|_2^2}=\frac{1}{1-t}\frac{\hat\theta(0)^\top\theta}{\|\hat\theta(0)\|^2}+O_P(\frac{\sqrt p}{n})$. As a result, 

\begin{align*}
	MCE=&\max_v| \E[Y=1\mid \hat f(X)=\frac{1}{e^{-2v}+1}]-\frac{1}{e^{-2v}+1}|\\
	=&\max_v|\frac{1}{e^{-\frac{2\hat\theta(t)^\top\theta}{\|\hat\theta(t)\|^2}\cdot v}+1}-\frac{1}{e^{-2v}+1}|\\
	=&\max_v|\frac{1}{e^{-\frac{2\hat\theta(0)^\top\theta}{(1-t)\|\hat\theta(0)\|^2}\cdot v}+1}-\frac{1}{e^{-2v}+1}|+O_P(\frac{\sqrt p}{n})\\
	=&\max_v|\frac{1}{e^{-\frac{2\hat\theta(0)^\top\theta}{\|\hat\theta(0)\|^2}\cdot v}+1}-\frac{1}{e^{-2(1-t)v}+1}|+O_P(\frac{\sqrt p}{n}).
\end{align*}

Now let us consider the quantity $\frac{\hat\theta(0)^\top\theta}{\|\hat\theta(0)\|^2}$.

Since we have $\hat\theta(0)=\theta+\epsilon_n$ with $\epsilon_n=\frac{1}{n}\sum_{i=1}^n x_iy_i-\theta$, this implies \begin{align*}
	\frac{\hat\theta(0)^\top\theta}{\|\hat\theta(0)\|^2}=\frac{\|\theta\|^2+\epsilon_n^\top\theta}{\|\theta\|^2+\|\epsilon_n\|^2+2\epsilon_n^\top\theta}\sim \frac{\|\theta\|^2+\frac{1}{\sqrt n}\|\theta\|}{\|\theta\|^2+\frac{p}{n}+O_P(\frac{\sqrt p}{n})+\frac{1}{\sqrt n}\|\theta\|},
\end{align*}
where the last equality uses the fact that $\|\epsilon_n\|^2\stackrel{d}{=}\frac{\chi_p^2}{n}=\frac{p+O_P(\sqrt{p})}{n}=\frac{p}{n}+O_P(\frac{\sqrt p}{n})$.

By our assumption, we have $p/n\in(c_1,c_2)$ and $\|\theta\|<C$, implying that there exists a constant $c_0\in(0,1)$, such that with high probability $$
\frac{\hat\theta(0)^\top\theta}{\|\hat\theta(0)\|^2}\le c_0.
$$
Then on the event $\mathcal E=\{\frac{\hat\theta(0)^\top\theta}{\|\hat\theta(0)\|^2}\le c_0\}$, if we choose $t=1-c_0$, we will then have for any $v\in\R$, $$
|\frac{1}{e^{-\frac{2\hat\theta(0)^\top\theta}{\|\hat\theta(0)\|^2}\cdot v}+1}-\frac{1}{e^{-2(1-t)v}+1}|<|\frac{1}{e^{-\frac{2\hat\theta(0)^\top\theta}{\|\hat\theta(0)\|^2}\cdot v}+1}-\frac{1}{e^{-2v}+1}|,
$$
and moreover, the difference between LHS and RHS is lower bounded by $|\frac{1}{e^{-2v}+1}-\frac{1}{e^{-2c_0v}+1}|$. Thus, we have 
$$ MCE(\hat \cC^{mix})<MCE(\hat \cC).
$$

\end{proof}

\subsection{Proof of Theorem~ \ref{thm:lowdimMCE}}

\begin{theorem}[Restatement of Theorem \ref{thm:lowdimMCE}]
	There exists a threshold $\tau=o(1)$ such that if $p/n\le\tau$ and $\|\theta\|_2<C$ for some universal constant $C>0$, given any constants $\alpha,\beta>0$ (not depending on $n$ and $p$), when $n$ is sufficiently large, we have, with high probability, $$
	MCE(\hat{\mathcal{C}})<MCE(\hat{\mathcal{C}}^{mix}).
	$$
\end{theorem}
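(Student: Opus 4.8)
The plan is to mirror the proof of Theorem~\ref{thm:lowdim} (the ECE low-dimensional result), replacing the expectation over $v$ by a maximum over $v$ and importing the structural identities already derived in the proof of Theorem~\ref{thm:helpcalibrationMCE}. Writing $\rho=\frac{\hat\theta(0)^\top\theta}{\|\hat\theta(0)\|^2}$ and $F_a(v)=\frac{1}{e^{-2av}+1}$, and letting $t\in(0,1)$ be the fixed constant determined by the fixed $\alpha,\beta$, that proof gives
$$MCE(\hat{\mathcal C})=\max_v|F_\rho(v)-F_1(v)|$$
and
$$MCE(\hat{\mathcal C}^{mix})=\max_v|F_\rho(v)-F_{1-t}(v)|+O_P\Big(\tfrac{\sqrt p}{n}\Big).$$
The whole argument then reduces to quantifying how far $\rho$ is from $1$ and how far $1-t$ is from $1$.

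First I would record, exactly as in the proof of Theorem~\ref{thm:lowdim}, that when $p/n\to0$ and $\|\theta\|$ is of constant order,
$$\rho=\frac{\|\theta\|^2+\tfrac{1}{\sqrt n}\|\theta\|}{\|\theta\|^2+\tfrac pn+O_P(\tfrac{\sqrt p}{n})+\tfrac{1}{\sqrt n}\|\theta\|}=1+O_P\Big(\tfrac pn\Big)=1+o_P(1).$$
Next I would upper-bound $MCE(\hat{\mathcal C})=\max_v|F_\rho(v)-F_1(v)|$. The key computation is $\partial_a F_a(v)=\frac{2v\,e^{-2av}}{(e^{-2av}+1)^2}$, which is bounded uniformly in $v\in\R$ and in $a$ in a neighborhood of $1$: after the substitution $u=av$ it becomes $\tfrac1a$ times a bounded continuous function of $u$ that vanishes as $u\to\pm\infty$. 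Hence the mean value theorem yields a universal constant $M$ with $\max_v|F_\rho(v)-F_1(v)|\le M|\rho-1|$, so $MCE(\hat{\mathcal C})=O_P(p/n)=o_P(1)$.

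Then I would lower-bound $MCE(\hat{\mathcal C}^{mix})$. Because $t\in(0,1)$ is a fixed constant, $1-t$ is bounded away from $1$, so $F_1$ and $F_{1-t}$ are genuinely distinct sigmoids and $\max_v|F_1(v)-F_{1-t}(v)|=c_t$ for some constant $c_t>0$ independent of $n,p$; the full support of $v=\hat\theta(0)^\top X$ (a Gaussian) guarantees the maximizing $v$ lies in the admissible range, so this maximum is genuinely attained on the support. Using $\rho=1+o_P(1)$ together with the uniform bound $\max_v|F_\rho(v)-F_1(v)|\le M|\rho-1|=o_P(1)$ and the triangle inequality for the sup-norm, I would replace $F_\rho$ by $F_1$ up to $o_P(1)$ and absorb the $O_P(\sqrt p/n)$ remainder to obtain $MCE(\hat{\mathcal C}^{mix})=c_t+o_P(1)=\Omega(1)$. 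Combining the two bounds gives $MCE(\hat{\mathcal C})<MCE(\hat{\mathcal C}^{mix})$ with high probability.

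The main obstacle, and the only place where the MCE argument genuinely departs from the ECE argument of Theorem~\ref{thm:lowdim}, is the upper bound on $MCE(\hat{\mathcal C})$: taking a supremum rather than an average over $v$ forces us to control $|F_\rho(v)-F_1(v)|$ uniformly over all $v\in\R$, including the tails $v\to\pm\infty$ where both sigmoids saturate at the same limit. The uniform boundedness of $\partial_a F_a(v)$ in $v$ is precisely what makes this supremum collapse to $O_P(|\rho-1|)$; once that is in hand, the remaining steps are a direct transcription of the expectation-based proof.
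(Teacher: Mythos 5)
Your proposal is correct and takes essentially the same route as the paper's proof: the same MCE identities inherited from Theorem~\ref{thm:helpcalibrationMCE}, the same estimate $\frac{\hat\theta(0)^\top\theta}{\|\hat\theta(0)\|^2}=1+O_P(p/n)$, and the same comparison of an $o_P(1)$ quantity for $\hat{\mathcal{C}}$ against an $\Omega(1)$ quantity for $\hat{\mathcal{C}}^{mix}$. The only difference is that you explicitly justify the uniform-in-$v$ replacement of the $\rho$-sigmoid by the unit-slope sigmoid via the mean value theorem bound on $\partial_a F_a(v)$, a step the paper asserts (writing the $O_P(p/n)$ correction) without proof; this is a minor tightening of the same argument rather than a different approach.
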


\begin{proof}
	According to the proof in Theorem \ref{thm:helpcalibrationMCE}, we have $$
	MCE(\hat \cC)=\max_v|\frac{1}{e^{-\frac{2\hat\theta(0)^\top\theta}{\|\hat\theta(0)\|^2}\cdot v}+1}-\frac{1}{e^{-2v}+1}|,
	$$
	and 
	$$
	MCE(\hat \cC^{mix})=\max_v|\frac{1}{e^{-\frac{2\hat\theta(0)^\top\theta}{\|\hat\theta(0)\|^2}\cdot v}+1}-\frac{1}{e^{-2(1-t)v}+1}|+O_P(\frac{\sqrt p}{n}),
	$$
	where $t\in(0,1)$ is a fixed constant when $\alpha,\beta>0$ are some fixed constants.
	
	When $p/n\to0$, then \begin{align*}
		\frac{\hat\theta(0)^\top\theta}{\|\hat\theta(0)\|^2}\sim \frac{\|\theta\|^2+\frac{1}{\sqrt n}\|\theta\|}{\|\theta\|^2+\frac{p}{n}+O_P(\frac{\sqrt p}{n})+\frac{1}{\sqrt n}\|\theta\|}= 1+O_P(\frac{p}{n})=1+o_P(1).
	\end{align*}
	Therefore, we have \begin{align*}
		MCE(\hat \cC)=&\max_v[|\frac{1}{e^{-\frac{2\hat\theta(0)^\top\theta}{\|\hat\theta(0)\|^2}\cdot v}+1}-\frac{1}{e^{-2v}+1}|]\\
		=&\max_v[|\frac{1}{e^{-{2v}+1}}-\frac{1}{e^{-2v}+1}|]+O_P(\frac{p}{n})\\=&O_P(\frac{p}{n})=o_P(1)
	\end{align*}
	and
	\begin{align*}
	MCE(\hat \cC^{mix})=&\max_v|\frac{1}{e^{-\frac{2\hat\theta(0)^\top\theta}{\|\hat\theta(0)\|^2}\cdot v}+1}-\frac{1}{e^{-2(1-t)v}+1}|+O_P(\frac{\sqrt p}{n})\\
		=&|\frac{1}{e^{-{2v}+1}}-\frac{1}{e^{-2(1-t)v}+1}|+O_P(\frac{p}{n})
	\end{align*}
	
	Since $\max_v|\frac{1}{e^{-{2v}+1}}-\frac{1}{e^{-2(1-t)v}+1}|=\Omega(1)$ when $t\in(0,1)$ is a fixed constant, we then have the desired result that $$
	MCE(\hat \cC)<MCE(\hat \cC^{mix}).
	$$

\end{proof}

\subsection{Proof of Theorem~ \ref{thm:capacityMCE}}

\begin{theorem}[Restatement of Theorem \ref{thm:capacityMCE}]
	For any constant $c_{\max}>0$, $p/n\to c_{ratio}\in(0,c^{\max})$, when  $\theta$ is sufficiently large (still of a constant level), we have for any $\beta>0$, with high probability, the change of ECE by using Mixup, characterized by
	$$
	\frac{d}{d\alpha}MCE(\hat{\mathcal{C}}^{mix}_{\alpha,\beta})\mid_{\alpha\to0+}
	$$ is negative, and monotonically decreasing with respect to $c_{ratio}$. 
\end{theorem}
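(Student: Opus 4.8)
The plan is to transcribe the argument of Theorem~\ref{thm:capacity}, replacing the expectation $\E_{v=(\hat\theta(0))^\top X}[\cdot]$ by a maximum over $v$, so that the outer operation is differentiated through an envelope (Danskin) argument in place of linearity of expectation. First I would invoke the expression established in the proof of Theorem~\ref{thm:helpcalibrationMCE}, namely
$$MCE(\hat{\mathcal{C}}^{mix}_{\alpha,\beta})=\max_{v}\Big|\frac{1}{e^{-2\rho v}+1}-\frac{1}{e^{-2(1-t)v}+1}\Big|+O_P\Big(\tfrac{\sqrt p}{n}\Big),$$
where $\rho=\frac{\hat\theta(0)^\top\theta}{\|\hat\theta(0)\|^2}$ depends only on the training data and $t=t(\alpha,\beta)=\frac{2\alpha\beta(\alpha+\beta)}{(\alpha+\beta)^2(\alpha+\beta+1)}$ encodes the Mixup distribution. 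Since $t(0,\beta)=0$ and $\frac{dt}{d\alpha}\big|_{\alpha\to0+}=\frac{2}{\beta+1}>0$ is a positive constant independent of $c_{ratio}$, the chain rule reduces both the sign and the $c_{ratio}$-monotonicity of $\frac{d}{d\alpha}MCE(\hat{\mathcal{C}}^{mix}_{\alpha,\beta})\big|_{\alpha\to0+}$ to those of $M'(0)$, where $M(t)$ denotes the displayed maximum.

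Next I would simplify the inner objective exactly as in Theorem~\ref{thm:capacity}: the bracketed function is odd in $v$, so its absolute value is even and the maximum may be taken over $v\ge0$; moreover, on the high-probability event $\{\rho\le c_0<1\}$ and for $t$ small enough that $\rho<1-t$, the absolute value opens with a fixed sign, giving $M(t)=\max_{v\ge0}H(v,t)$ with $H(v,t)=\frac{1}{e^{-2(1-t)v}+1}-\frac{1}{e^{-2\rho v}+1}$. Since $H(0,t)=0$ and $H(v,t)\to0$ as $v\to\infty$, the maximum is attained at an interior point $v^*=v^*(\rho)>0$ solving $\partial_v H(v^*,0)=0$, i.e. $\frac{2e^{-2v^*}}{(e^{-2v^*}+1)^2}=\frac{2\rho e^{-2\rho v^*}}{(e^{-2\rho v^*}+1)^2}$. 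By Danskin's theorem, $M'(0)=\partial_t H(v^*,0)=-\frac{2v^*e^{-2v^*}}{(e^{-2v^*}+1)^2}=-\phi(v^*)$, where $\phi(v)=\frac{2ve^{-2v}}{(e^{-2v}+1)^2}$; this is strictly negative since $v^*>0$, which yields the negativity assertion.

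The remaining and most delicate step is the monotonicity in $c_{ratio}$. Writing $\rho=\rho(c_{ratio})=\frac{\|\theta\|^2}{\|\theta\|^2+c_{ratio}}\,(1+o_P(1))$, which is strictly decreasing in $c_{ratio}$, I would differentiate the composite $c_{ratio}\mapsto-\phi(v^*(\rho(c_{ratio})))$, obtaining $-\phi'(v^*)\,\frac{dv^*}{d\rho}\,\frac{d\rho}{dc_{ratio}}$, with $\frac{dv^*}{d\rho}$ supplied by implicitly differentiating the first-order condition above. The hard part is pinning down the sign of this product: unlike the ECE case, where the outer operation was an expectation over a fixed law, here the maximizer $v^*$ itself drifts with $c_{ratio}$, so the envelope derivative must be combined with an implicit-function computation, and near the peak of $\phi$ the factor $\phi'(v^*)$ is small and its sign subtle. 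This is precisely where the hypothesis that $\|\theta\|$ is large (hence $\rho\to1$) is used: it localizes $v^*(\rho)$ near the unique maximizer of $\phi$, reducing the sign determination to a local second-order expansion of $\phi$ at its peak, in the same spirit as the dominating-term computation that closes the proof of Theorem~\ref{thm:capacity}. Establishing that this local expansion carries the claimed sign uniformly over $c_{ratio}\in(0,c^{\max})$ is the crux of the argument.
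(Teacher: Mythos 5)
Your reduction to $M(t)=\max_{v\ge 0}\bigl[\frac{1}{e^{-2(1-t)v}+1}-\frac{1}{e^{-2\rho v}+1}\bigr]$, the chain rule through $t(\alpha,\beta)$ with $\frac{dt}{d\alpha}\mid_{\alpha\to0+}=\frac{2}{\beta+1}$, and the Danskin computation $M'(0)=-\phi(v^*)$ with $\phi(v)=\frac{2ve^{-2v}}{(e^{-2v}+1)^2}=\frac{2v}{e^{2v}+e^{-2v}+2}$ coincide with the paper's proof (the paper opens the absolute value with the opposite sign and then makes a compensating slip when differentiating, landing on the same $-\phi(v^*)<0$), so the negativity claim is correctly established. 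The genuine gap is exactly the step you defer as ``the crux'': the sign of $-\phi'(v^*)\,\frac{dv^*}{d\rho}\,\frac{d\rho}{dc_{ratio}}$. That sign is not subtle, and the localization-near-the-peak strategy cannot rescue it, because it is already forced at first order --- in the wrong direction. Since $\phi$ is strictly unimodal with unique peak at $x_{peak}$ solving $2x\tanh x=1$ ($x_{peak}\approx 0.77$), the first-order condition $\phi(\rho v^*)=\phi(v^*)$ forces $\rho v^*<x_{peak}<v^*$ for every $\rho<1$, hence $\phi'(v^*)<0$ strictly; taking $\|\theta\|$ large pushes $v^*$ toward $x_{peak}$ but never makes $\phi'(v^*)\ge 0$. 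Implicit differentiation of the first-order condition gives $\frac{dv^*}{d\rho}=-\frac{v^*\phi'(\rho v^*)}{\rho\phi'(\rho v^*)-\phi'(v^*)}<0$ (numerator and denominator both positive), and $\frac{d\rho}{dc_{ratio}}<0$. The product of three negative factors is negative, so $\frac{d}{dc_{ratio}}\bigl[-\phi(v^*(\rho(c_{ratio})))\bigr]>0$: under this reduction the quantity $\frac{d}{d\alpha}MCE(\hat{\mathcal{C}}^{mix}_{\alpha,\beta})\mid_{\alpha\to0+}$ is monotonically \emph{increasing} in $c_{ratio}$ (numerically, $-\phi(v^*(0.9))\approx -0.223$ versus $-\phi(v^*(0.5))\approx -0.203$), not decreasing as the theorem asserts; only its magnitude $\phi(v^*)$ decreases with $c_{ratio}$.

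For comparison, the paper's own proof of the monotonicity is not sound either: it chains ``$v^*(\rho)$ is decreasing in $\rho$'' (true, as above) with the assertion that $-\frac{2ve^{-2v}}{(e^{-2v}+1)^2}$ is a decreasing function of $v$, which is false on $(x_{peak},\infty)$ --- precisely where every $v^*$ lies by the first-order condition. So your instinct that this step is delicate is well founded, but the step is not merely incomplete in your write-up: along the route you outline (and the paper's route) it cannot be completed in the claimed direction. It is also worth seeing why the analogy with Theorem \ref{thm:capacity} breaks down: for ECE the derivative is an expectation $\E_v[-\phi(v)]$ over a Gaussian law whose spread grows with $c_{ratio}$, and the claimed monotonicity there is driven by how that spread redistributes mass; once the expectation is replaced by a maximum, the envelope theorem collapses everything onto the single point $v^*(\rho)$, and that mechanism disappears entirely.
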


\begin{proof}
Recall that 
$$
MCE(\hat \cC^{mix})=\max_v|\frac{1}{e^{-\frac{2\hat\theta(0)^\top\theta}{\|\hat\theta(0)\|^2}\cdot v}+1}-\frac{1}{e^{-2(1-t)v}+1}|+O_P(\frac{\sqrt p}{n}).
$$
The case $\alpha=0$ corresponds to the case where $t=0$. Since $|\frac{1}{e^{-\frac{2\hat\theta(0)^\top\theta}{\|\hat\theta(0)\|^2}\cdot v}+1}-\frac{1}{e^{-2(1-t)v}+1}|$ as a function of $v$ is symmetric around $0$, we have that when $t$ is sufficiently small (such that $1-t>\frac{\hat\theta(0)^\top\theta}{\|\hat\theta(0)\|^2}$ with high probability)
\begin{align*}
\max_v|\frac{1}{e^{-\frac{2\hat\theta(0)^\top\theta}{\|\hat\theta(0)\|^2}\cdot v}+1}-\frac{1}{e^{-2(1-t)v}+1}|=&\max_{v>0}|\frac{1}{e^{-\frac{2\hat\theta(0)^\top\theta}{\|\hat\theta(0)\|^2}\cdot v}+1}-\frac{1}{e^{-2(1-t)v}+1}|\\
	=&\max_{v>0}[\frac{1}{e^{-\frac{2\hat\theta(0)^\top\theta}{\|\hat\theta(0)\|^2}\cdot v}+1}-\frac{1}{e^{-2(1-t)v}+1}].
\end{align*}

Let us denote 
$$v^*=\argmax_{v>0}[\frac{1}{e^{-\frac{2\hat\theta(0)^\top\theta}{\|\hat\theta(0)\|^2}\cdot v}+1}-\frac{1}{e^{-2(1-t)v}+1}].$$

For the term 
$$\xi(t)=\frac{1}{e^{-\frac{2\hat\theta(0)^\top\theta}{\|\hat\theta(0)\|^2}\cdot v^*}+1}-\frac{1}{e^{-2(1-t)v^*}+1},$$
let us take the derivative with respect to $t$, we get 

$$\frac{d}{dt}\xi(t)=-\frac{e^{-2(1-t)v^*}\cdot 2v^*}{(e^{-2(1-t)v^*}+1)^2}.$$

Therefore, the derivative evaluated at $t=0$ equals to 

$$	-\frac{e^{-2v^*}\cdot 2v^*}{(e^{-2v*}+1)^2},$$

for $v^*>0$, which is negative.

Recall that we have $\hat\theta(0)=\theta+\epsilon_n$ with $\epsilon_n=\frac{1}{n}\sum_{i=1}^n x_iy_i-\theta$, this implies \begin{align*}
	\frac{\hat\theta(0)^\top\theta}{\|\hat\theta(0)\|^2}=\frac{\|\theta\|^2+\epsilon_n^\top\theta}{\|\theta\|^2+\|\epsilon_n\|^2+2\epsilon_n^\top\theta}\sim \frac{\|\theta\|^2+\frac{1}{\sqrt n}\|\theta\|}{\|\theta\|^2+\frac{p}{n}+O_P(\frac{\sqrt p}{n})+\frac{1}{\sqrt n}\|\theta\|},
\end{align*}
where the last equality uses the fact that $\|\epsilon_n\|^2\stackrel{d}{=}\frac{\chi_p^2}{n}=\frac{p+O_P(\sqrt{p})}{n}=\frac{p}{n}+O_P(\frac{\sqrt p}{n})$.

Consider the case when $c_{ratio}=c_1$ and $c_{ratio}=c_2$, where $0<c_1<c_2$, we want to prove that,  
$$	-\frac{e^{-2v^*(c_1)}\cdot 2v^*(c_1)}{(e^{-2v^*(c_1)}+1)^2}>	-\frac{e^{-2v^*(c_2)}\cdot 2v^*(c_2)}{(e^{-2v^*(c_2)}+1)^2},$$
where $v^*(c_1)>0$ and $v^*(c_2)>0$ are the maximizers of $\frac{1}{e^{-\frac{2\hat\theta(0)^\top\theta}{\|\hat\theta(0)\|^2}\cdot v}+1}-\frac{1}{e^{-2(1-t)v}+1}$ when $c_{ratio}=c_1$ and $c_{ratio}=c_2$ respectively.

Since 
$$-\frac{e^{-2v^*}\cdot 2v^*}{(e^{-2v*}+1)^2}$$
is a decreasing function of $v^*$, and with high probability 
$$ \frac{\hat\theta(0)^\top\theta}{\|\hat\theta(0)\|^2}\Big|_{c_{ratio}=c_1}>\frac{\hat\theta(0)^\top\theta}{\|\hat\theta(0)\|^2}\Big|_{c_{ratio}=c_2}.$$

Thus, we only need to show that the maximizer $v^*(\rho)$ defined by
$$v^*(\rho)=\argmax_{v>0}[\frac{1}{e^{-2\rho\cdot v}+1}-\frac{1}{e^{-2(1-t)v}+1}] \Big |_{t\rightarrow 0^+}$$

is an decreasing function of $\rho$ for $\rho\in[0,1)$ (since $\frac{\hat\theta(0)^\top\theta}{\|\hat\theta(0)\|^2}\in[0,1)$).
	
As we know that $v^*(\rho)$ is the solution of the following equation:
$$\frac{2\rho v^*(\rho)}{e^{2\rho v^*(\rho)}+e^{-2\rho v^*(\rho)}+2}=\frac{2 v^*(\rho)}{e^{2 v^*(\rho)}+e^{-2 v^*(\rho)}+2}.$$

From Figure \ref{fig:illu}, we can directly see that $v^*(\rho)$ increases as $\rho$ decreases. We complete the proof.

\begin{figure}[t]
	\begin{center}
	\includegraphics[width=8cm]{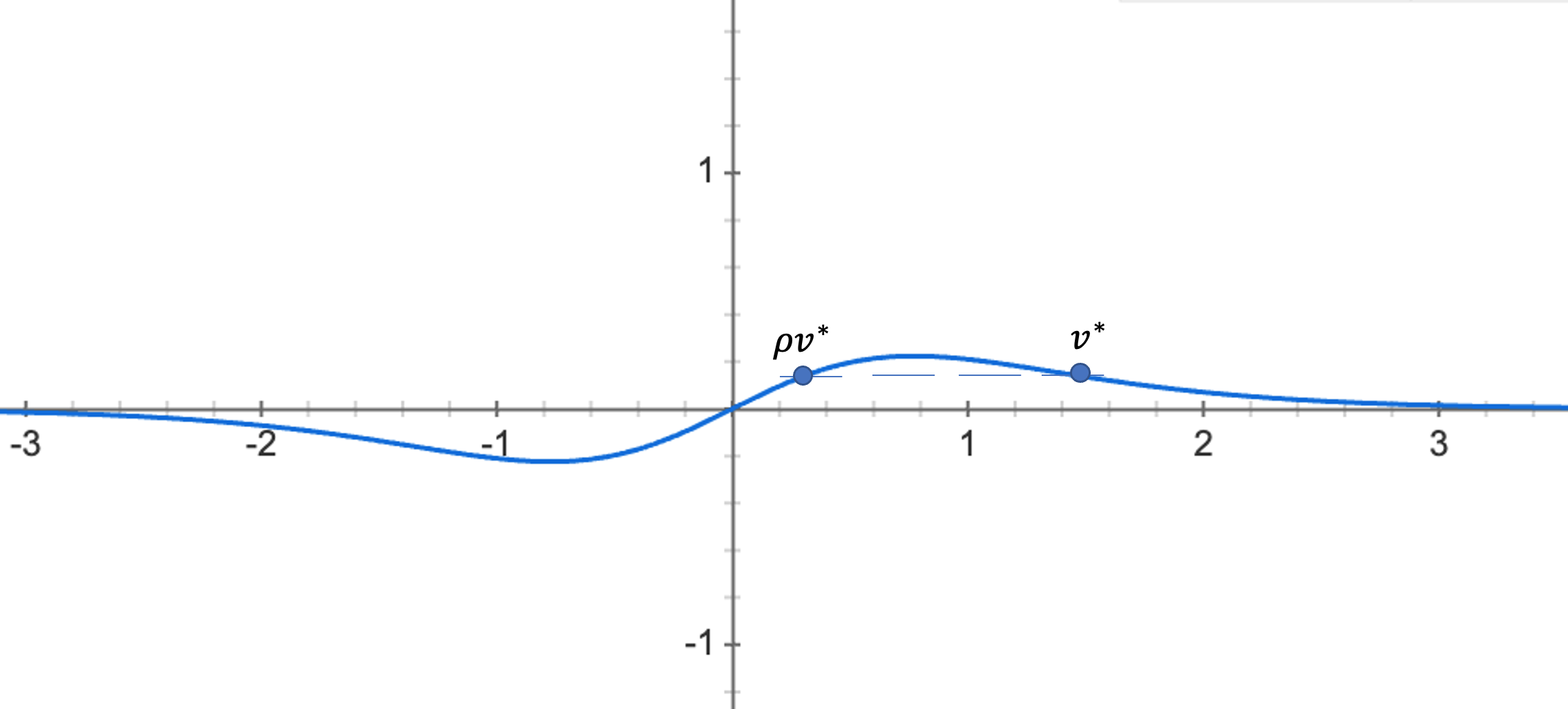}
	\caption{Illustration plot for the function $2x/(e^{2}+e^{-2x}+2)$.}
	\label{fig:illu}
	\end{center}
\end{figure}

\end{proof}

\section{Experimental setup and additional numerical results} \label{app:exp:addition}
To complement our theory, we further provide more experimental evidence on popular image classification data sets with neural networks. In Figures \ref{fig:1} and \ref{fig:4}, we used fully-connected neural networks and ResNets with various values of the width (\emph{i.e.,} the number of neurons per hidden layer) and the depth (\emph{i.e.}, the number of hidden layers). For the experiments on the effect of the width, we fixed the depth  to be 8 and varied the width from 10 to 3000. For the experiments on the effect of the depth, the depth was varied from 1 to 24 (\emph{i.e.}, from 3 to 26 layers including input/output layers) by fixing the width to be 400 with data-augmentation and 80 without data-augmentation. We used the following standard data-augmentation operations using \texttt{torchvision.transforms} for both data sets: random crop (via \texttt{RandomCrop(32, padding=4})) and random horizontal flip (via \texttt{RandomHorizontalFlip}) for each image. We used the standard data sets --- CIFAR-10 and CIFAR-100  \citep{krizhevsky2009learning}. We employed SGD with mini-batch size of 64.  We set the learning rate to be 0.01 and momentum coefficient to be 0.9.  We used the Beta distribution $Beta(\alpha,\alpha)$ with $\alpha=1.0$ for Mixup.  

In Figure \ref{fig:7}, we adopted the standard data sets, Kuzushiji-MNIST  \citep{clanuwat2019deep},  Fashion-MNIST \citep{xiao2017fashion}, and CIFAR-10 and CIFAR-100  \citep{krizhevsky2009learning}. We used SGD with mini-batch size of 64 and the learning rate of 0.01. The Beta distribution $Beta(\alpha,\alpha)$ with $\alpha=1.0$ was used for Mixup. We used the standard pre-activation ResNet with $18$ layers and ReLU activations \citep{he2016identity}. For each data set, we randomly divided  each training data (100\%)\ into a labeled training data (50\%)\ and a unlabeled training data (50\%) with the 50-50 split. Following the theoretical analysis, we first trained the ResNet with labeled data until the half of the last epoch in each figure. Then, the pseudo-labels were generated by the ResNet and used for the final half of the training. 

We run experiments with a machine with 10-Core 3.30 GHz Intel Core i9-9820X and four NVIDIA RTX 2080 Ti GPUs with 11 GB GPU memory.

Figure \ref{fig:new:new:1} shows that Mixup also tend to reduce test loss for larger capacity models. The experimental setting of Figure \ref{fig:new:new:1} is the exactly same as that of Figures \ref{fig:1} and \ref{fig:4}. Here, relative test loss of a particular case is defined by $\frac{\text{test loss of a particular case}}{\text{test loss of no mixup base case}}$.

Figures \ref{fig:new:1}--\ref{fig:new:2} show that Mixup can  reduce ECE$_2$ particularly for larger capacity models. 

Figure \ref{fig:mce:1} uses the same setting as that of Figures \ref{fig:1} and \ref{fig:4}. Similarly to Figure \ref{fig:mce:1}, Figure \ref{fig:mce:2} below shows that Mixup can reduce MCE particularly for larger capacity models with varying degrees of depth and width. The setting of Figure \ref{fig:mce:2} is the same as that of Figures \ref{fig:1} and \ref{fig:4} with the data-augmentation.

\begin{figure}[h!]
\centering
\begin{subfigure}{0.2\columnwidth}
  \includegraphics[width=\textwidth, height=0.7\textwidth]{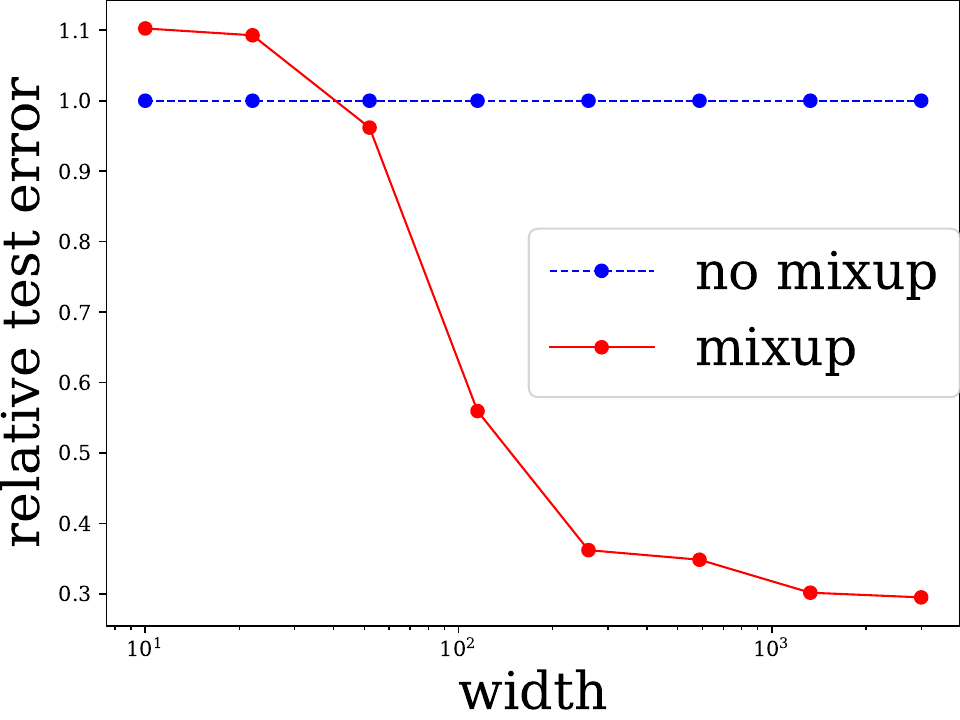}
 \caption{Width}
\end{subfigure}
\hspace{0.1in}
\begin{subfigure}{0.2\columnwidth}
  \includegraphics[width=\textwidth, height=0.7\textwidth]{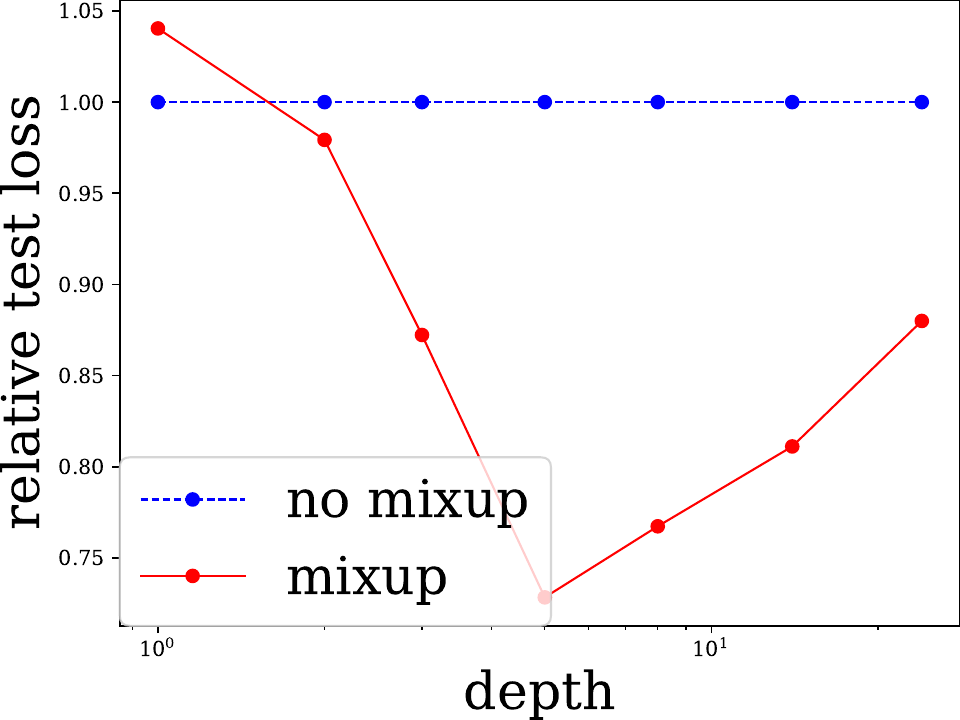}
 \caption{Depth}
\end{subfigure} 
\hspace{0.1in}
\begin{subfigure}{0.2\columnwidth}
  \includegraphics[width=\textwidth, height=0.7\textwidth]{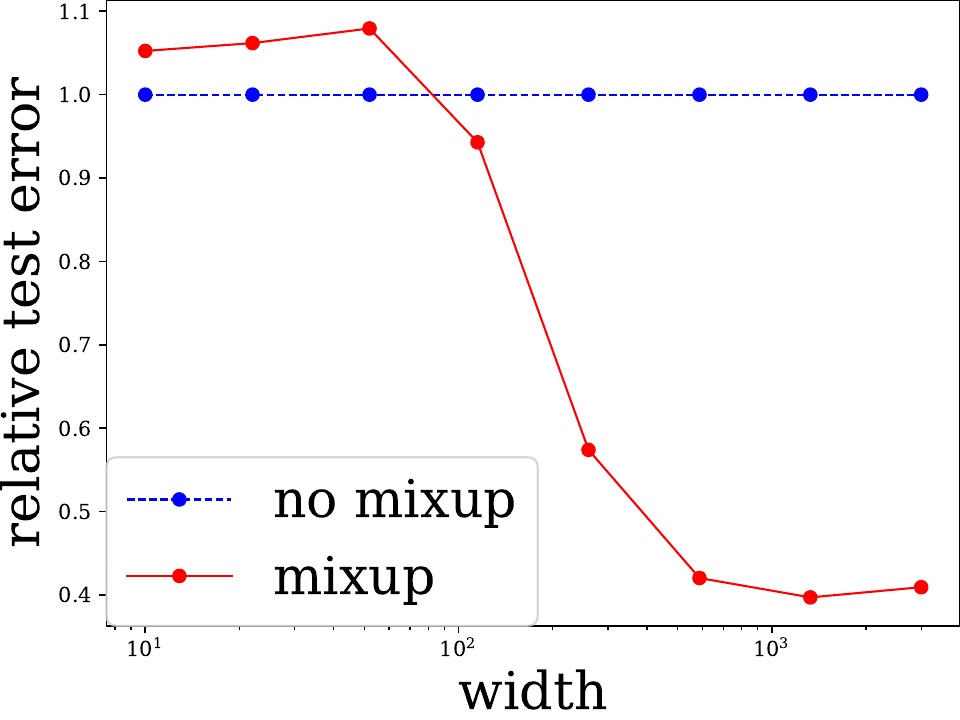}
 \caption{Width}
\end{subfigure}
\hspace{0.1in}
\begin{subfigure}{0.2\columnwidth}
  \includegraphics[width=\textwidth, height=0.7\textwidth]{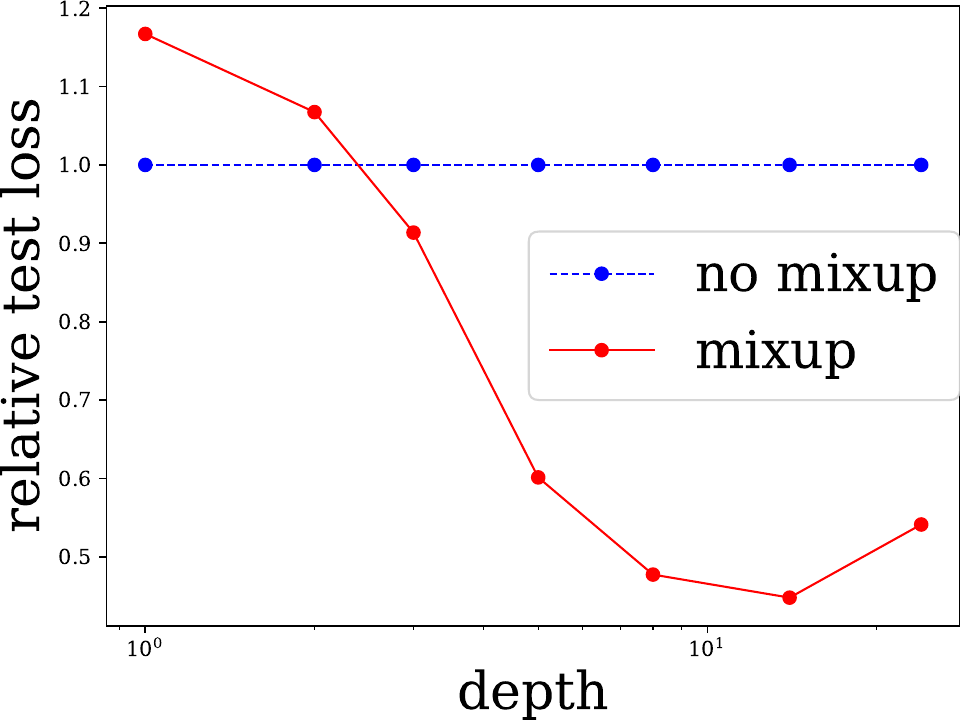}
 \caption{Depth}
\end{subfigure}
\hspace{0.1in}
\begin{subfigure}{0.2\columnwidth}
  \includegraphics[width=\textwidth, height=0.7\textwidth]{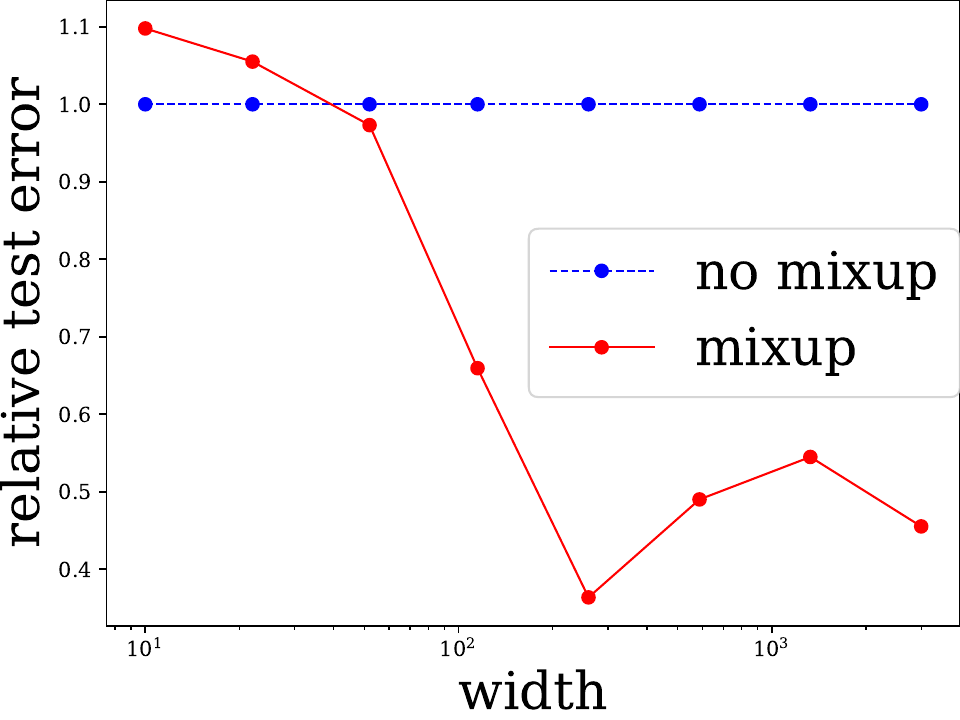}
 \caption{Width}
\end{subfigure}
\hspace{0.1in}
\begin{subfigure}{0.2\columnwidth}
  \includegraphics[width=\textwidth, height=0.7\textwidth]{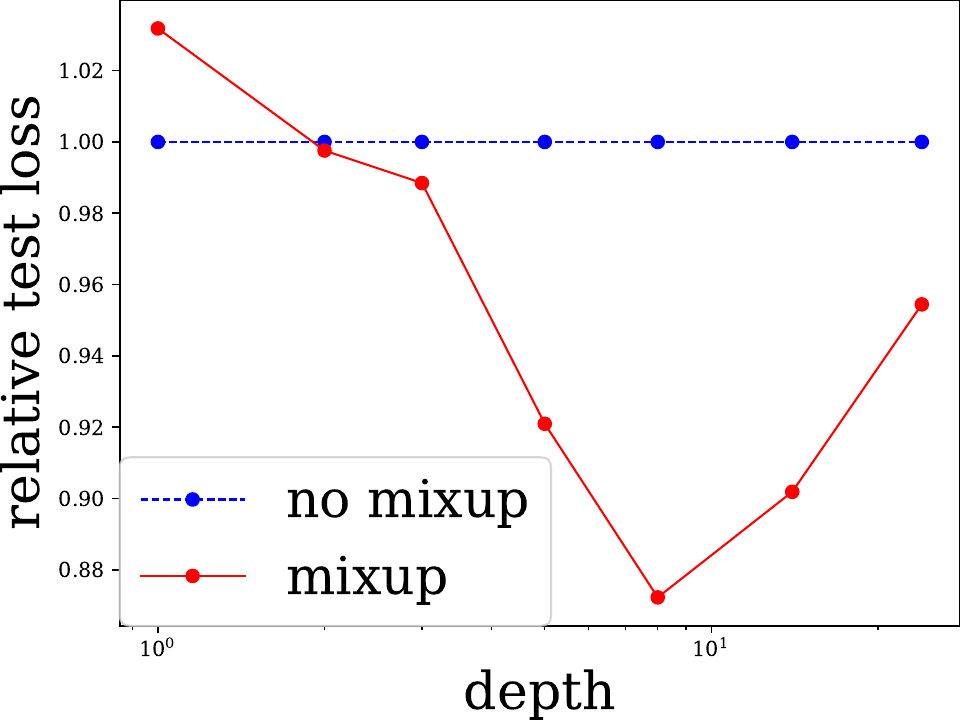}
 \caption{Depth}
\end{subfigure} 
\hspace{0.1in}
\begin{subfigure}{0.2\columnwidth}
  \includegraphics[width=\textwidth, height=0.7\textwidth]{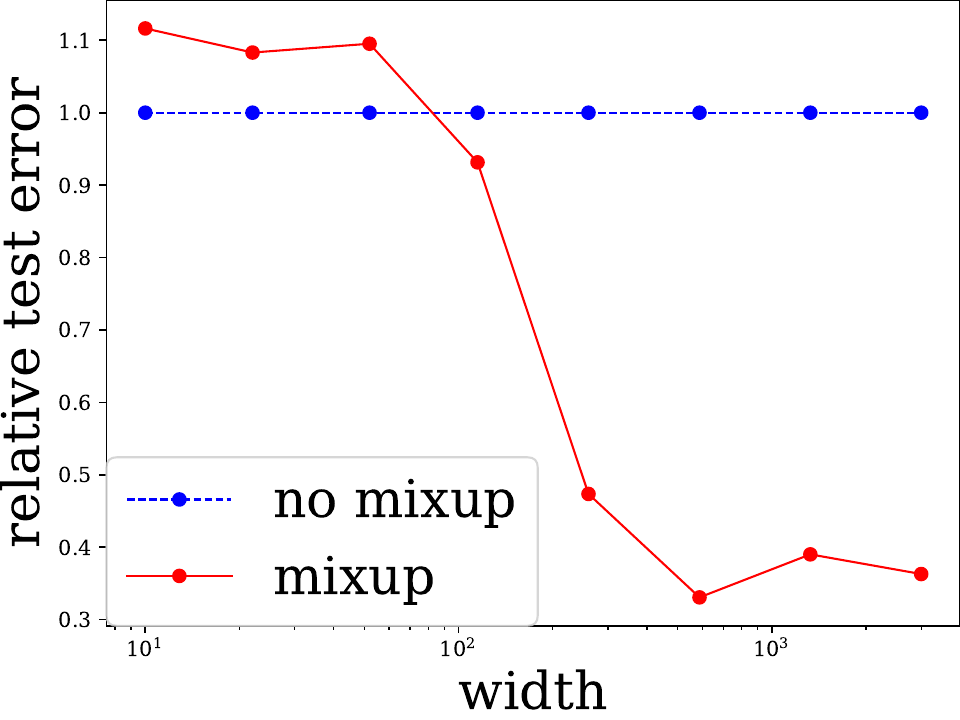}
 \caption{Width}
\end{subfigure}
\hspace{0.1in}
\begin{subfigure}{0.2\columnwidth}
  \includegraphics[width=\textwidth, height=0.7\textwidth]{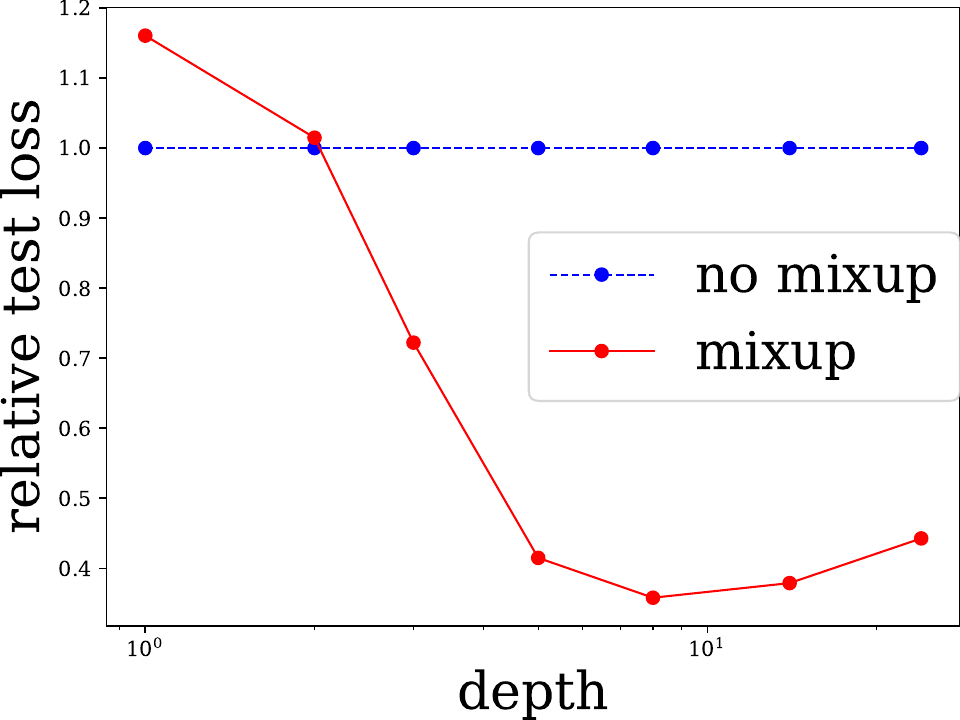}
 \caption{Depth}
\end{subfigure}
\caption{Relative test loss: (a), (b): CIFAR-10 without data augmentation; (c), (d): CIFAR-10 with data augmentation; (e), (f): CIFAR-100 without data augmentation; (g), (h): CIFAR-100 with data augmentation.} 
\label{fig:new:new:1} 
\end{figure}

\begin{figure}[h!]
\centering
\begin{subfigure}[b]{0.24\textwidth}
  \includegraphics[width=\textwidth, height=0.7\textwidth]{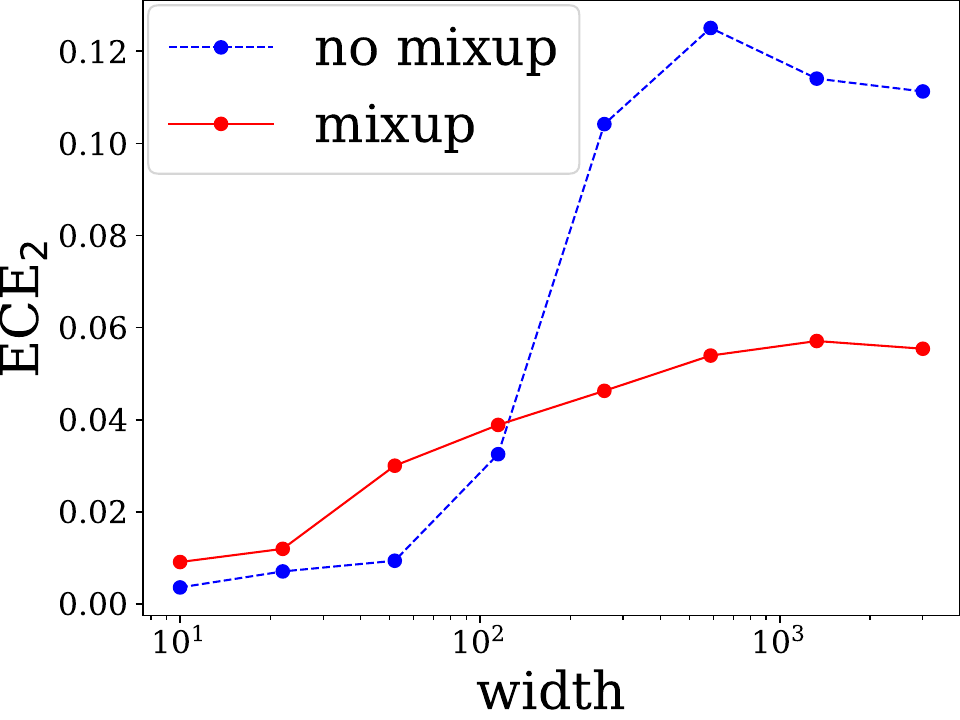}
 \caption{CIFAR-10: width}
\end{subfigure}
\begin{subfigure}[b]{0.24\textwidth}
  \includegraphics[width=\textwidth, height=0.7\textwidth]{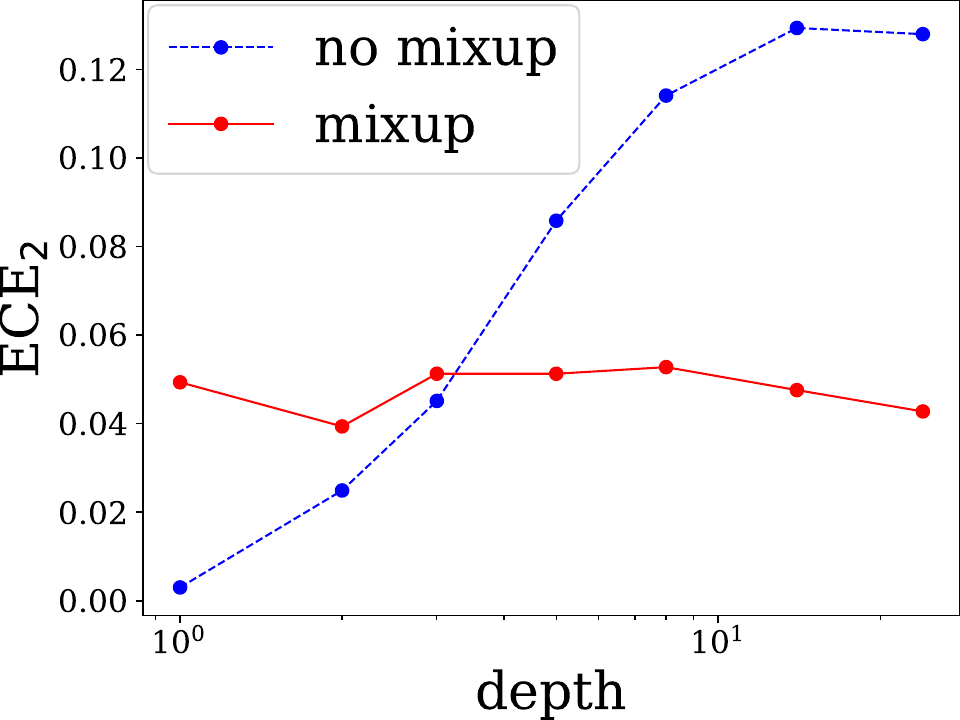}
 \caption{CIFAR-10: depth}
\end{subfigure}
\begin{subfigure}[b]{0.24\textwidth}
  \includegraphics[width=\textwidth, height=0.7\textwidth]{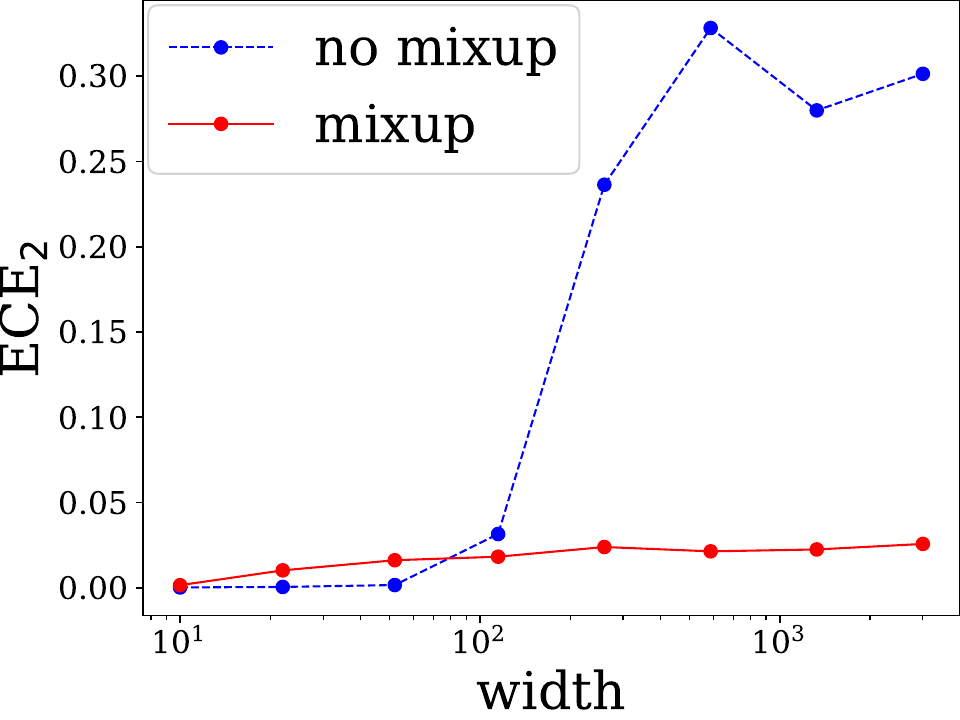}
 \caption{CIFAR-100: width}
\end{subfigure}
\begin{subfigure}[b]{0.24\textwidth}
  \includegraphics[width=\textwidth, height=0.7\textwidth]{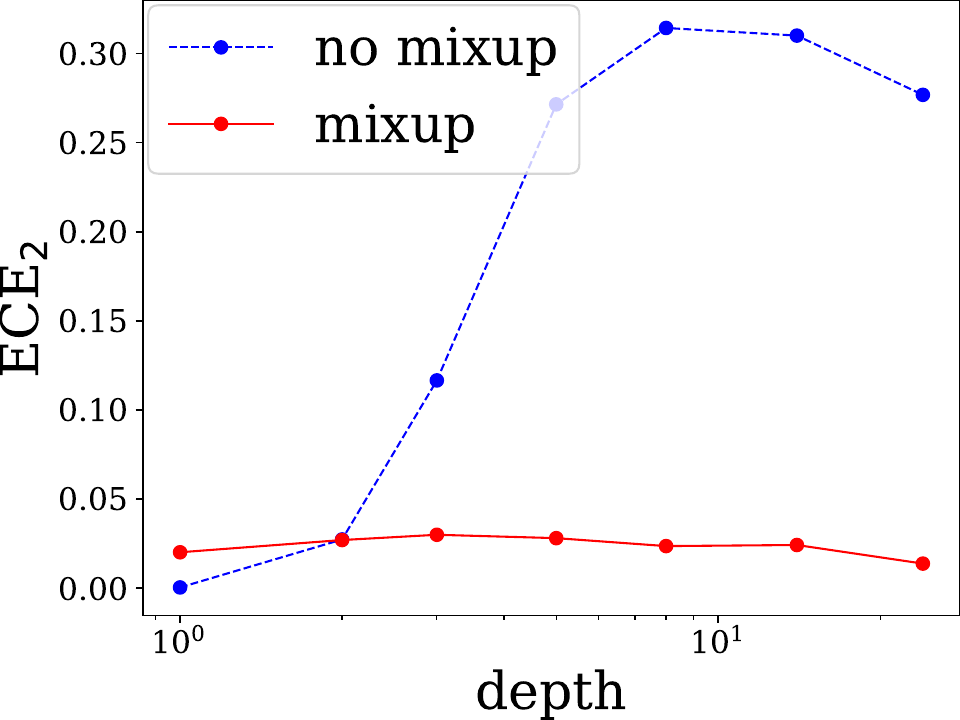}
 \caption{CIFAR-100: depth}
\end{subfigure}
\caption{ECE$_2$  with  data-augmentation} 
\label{fig:new:1} 
\end{figure}

\begin{figure}[H]
\centering
\begin{subfigure}[b]{0.24\textwidth}
  \includegraphics[width=\textwidth, height=0.7\textwidth]{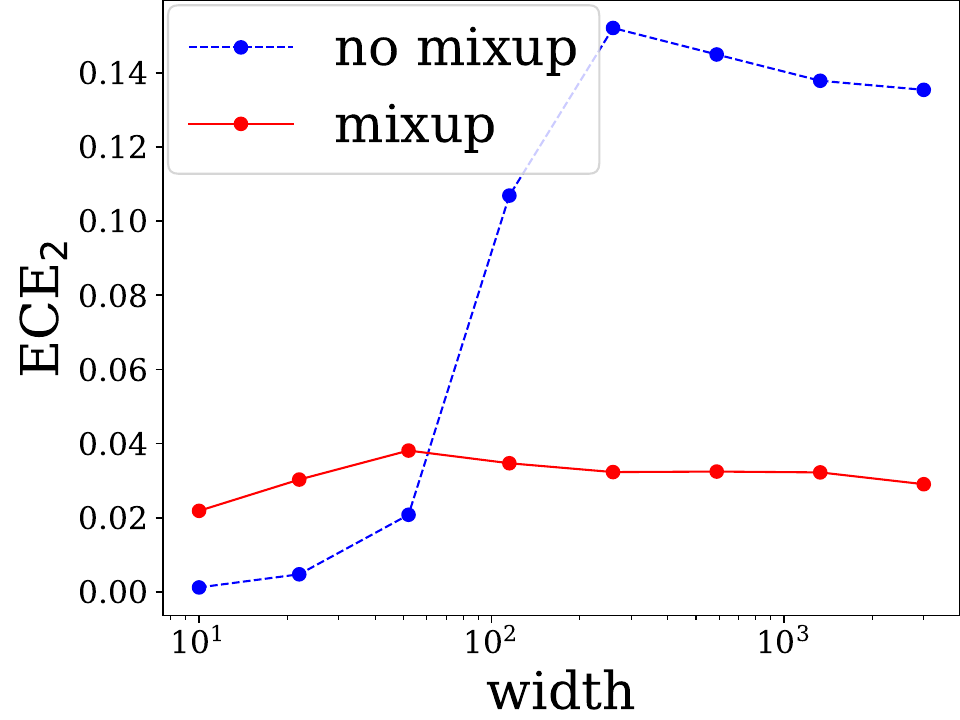}
 \caption{CIFAR-10: width}
\end{subfigure}
\begin{subfigure}[b]{0.24\textwidth}
  \includegraphics[width=\textwidth, height=0.7\textwidth]{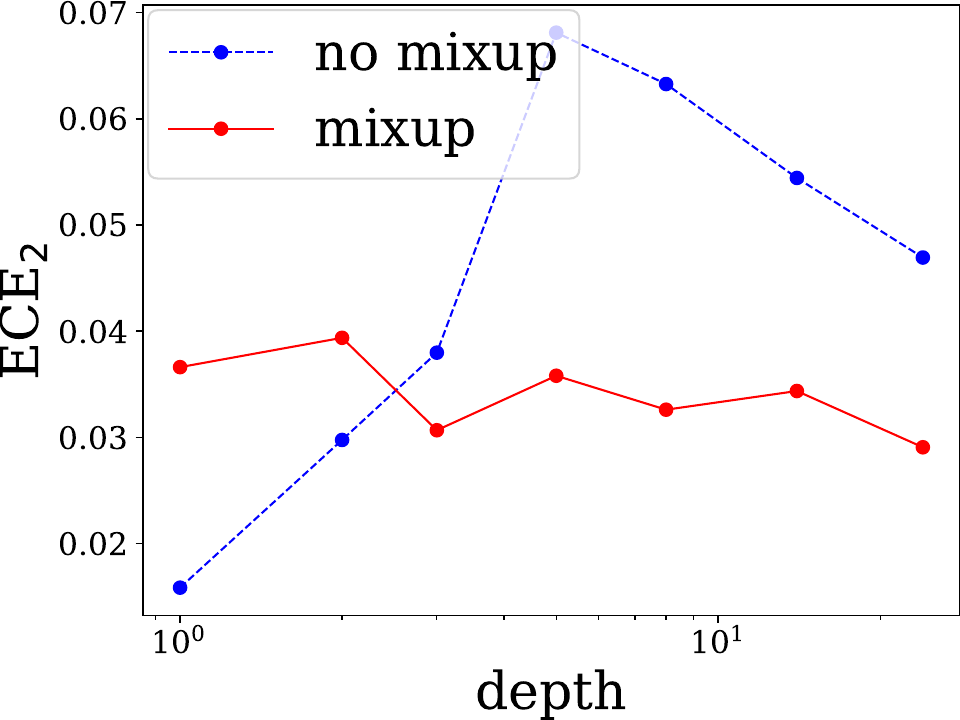}
 \caption{CIFAR-10: depth}
\end{subfigure}
\begin{subfigure}[b]{0.24\textwidth}
  \includegraphics[width=\textwidth, height=0.7\textwidth]{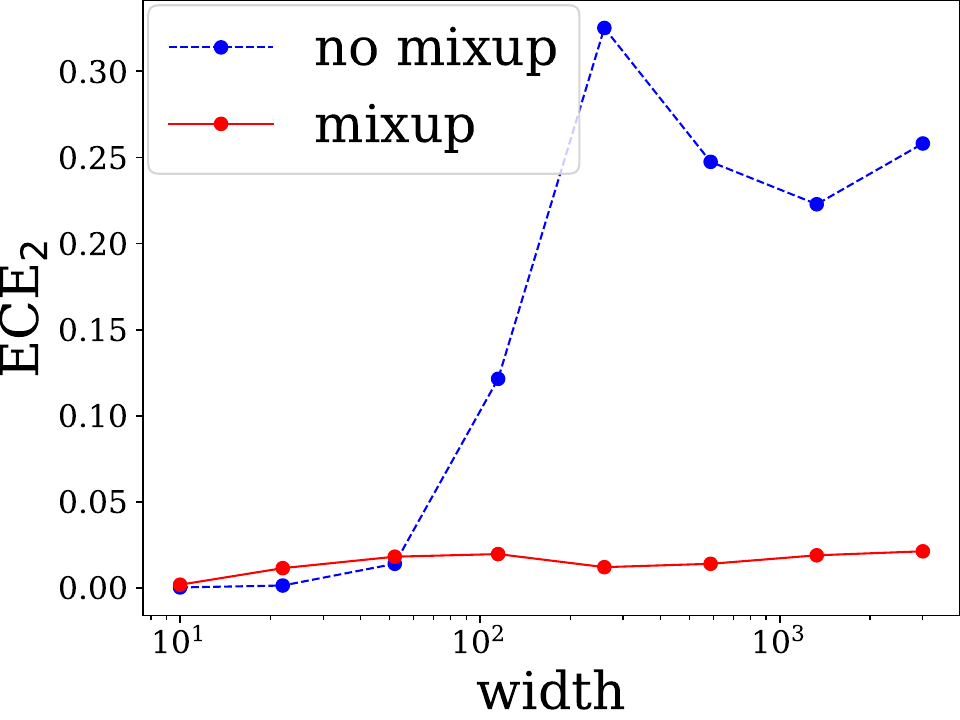}
 \caption{CIFAR-100: width}
\end{subfigure}
\begin{subfigure}[b]{0.24\textwidth}
  \includegraphics[width=\textwidth, height=0.7\textwidth]{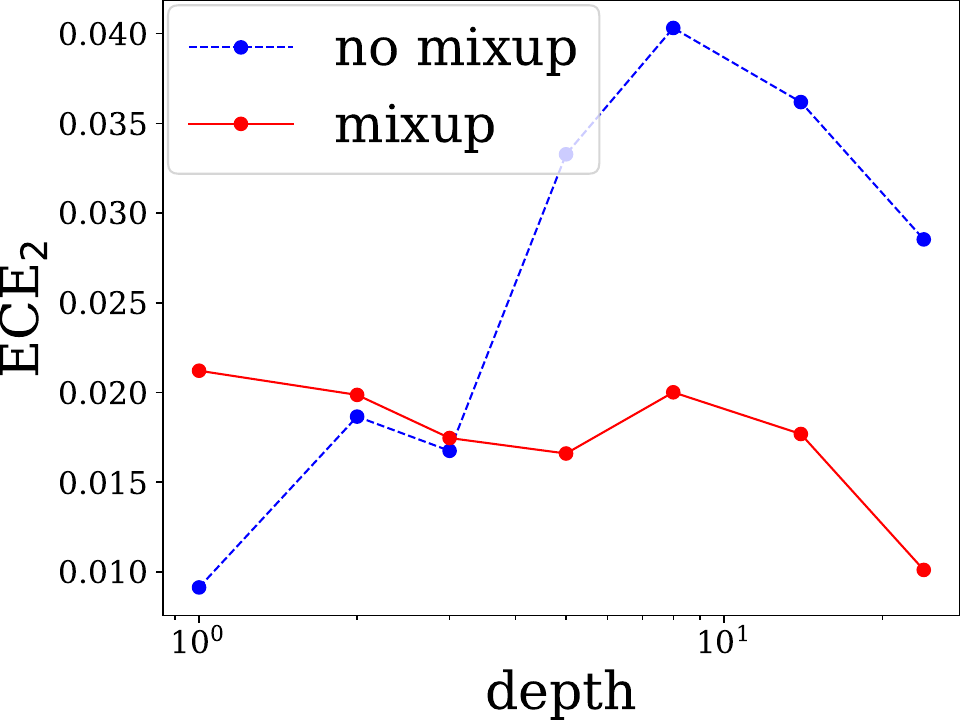}
 \caption{CIFAR-100: depth}
\end{subfigure}
\caption{ECE$_2$  without data-augmentation} 
\label{fig:new:2} 
\end{figure}

\begin{figure}[H]
\centering
\begin{subfigure}[b]{0.24\textwidth}
  \includegraphics[width=\textwidth, height=0.7\textwidth]{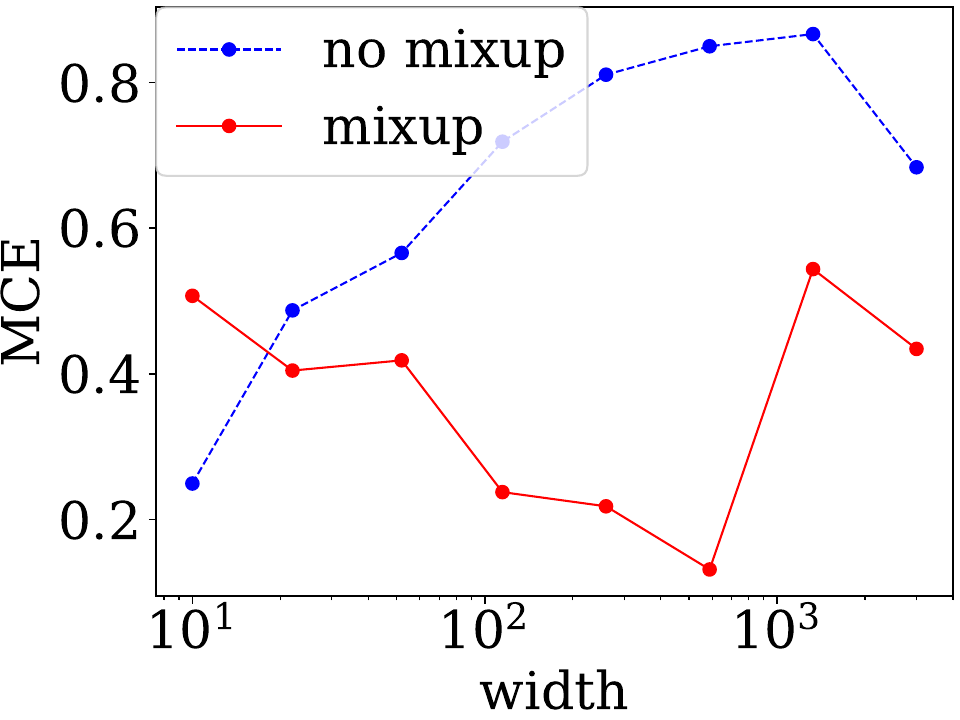}
 \caption{CIFAR-10: width}
\end{subfigure}
\begin{subfigure}[b]{0.24\textwidth}
  \includegraphics[width=\textwidth, height=0.7\textwidth]{fig/mce/cifar10aa0depth.pdf}
 \caption{CIFAR-10: depth}
\end{subfigure}
\begin{subfigure}[b]{0.24\textwidth}
  \includegraphics[width=\textwidth, height=0.7\textwidth]{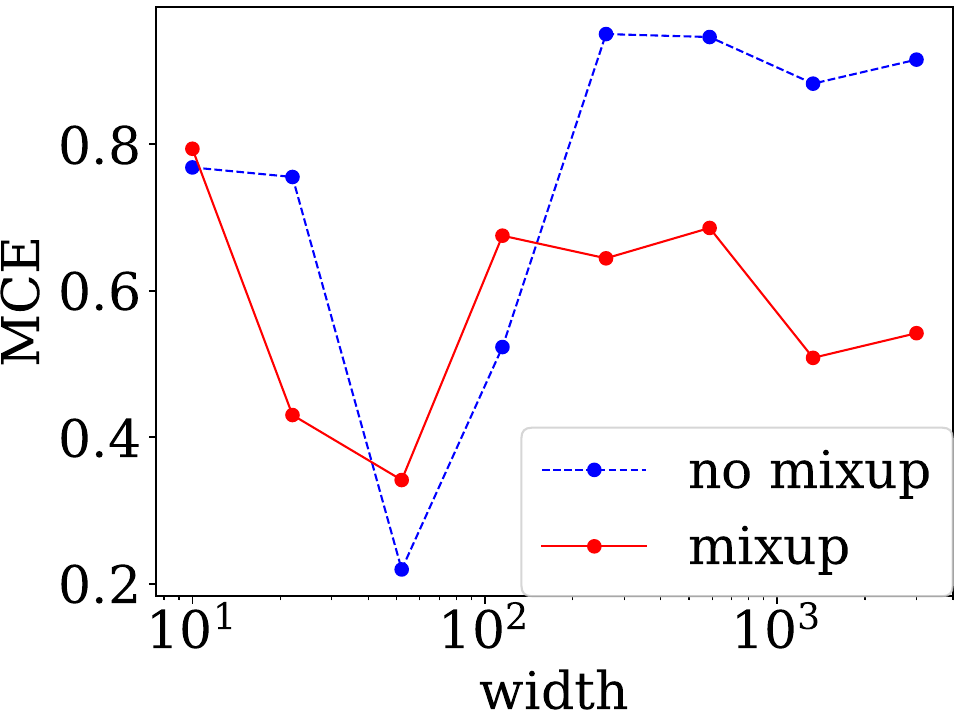}
 \caption{CIFAR-100: width}
\end{subfigure}
\begin{subfigure}[b]{0.24\textwidth}
  \includegraphics[width=\textwidth, height=0.7\textwidth]{fig/mce/cifar100aa1depth.pdf}
 \caption{CIFAR-100: depth}
\end{subfigure}
\caption{Maximum Calibration Error (MCE)} 
\label{fig:mce:2} 
\end{figure}

\end{document}